\documentclass{article}

 \usepackage[preprint]{neurips_2026}

\usepackage[utf8]{inputenc}
\usepackage[T1]{fontenc}
\usepackage{hyperref}
\usepackage{url}
\usepackage{booktabs}
\usepackage{amsfonts}
\usepackage{amsmath}
\usepackage{amssymb}
\usepackage{amsthm}
\usepackage{nicefrac}
\usepackage{microtype}
\usepackage{xcolor}
\usepackage{graphicx}
\usepackage{algorithm}
\usepackage{algpseudocode}
\usepackage{multirow}
\usepackage{subcaption}

\usepackage{mathtools} 

\theoremstyle{plain}
\newtheorem{theorem}{Theorem}

\newtheorem{lemma}[theorem]{Lemma}
\newtheorem{corollary}[theorem]{Corollary}
\theoremstyle{definition}
\newtheorem{definition}[theorem]{Definition}
\newtheorem{remark}[theorem]{Remark}

\providecommand{\E}{\mathbb{E}}
\providecommand{\Var}{\operatorname{Var}}
\providecommand{\vect}{\operatorname{vec}}

\providecommand{\Prob}{\Pr}
\DeclareMathOperator{\softmax}{softmax}

\title{Critical Windows of Complexity Control: \\ When Transformers Decide to Reason or Memorize}

\author{%
  Sarwan Ali \\
  Columbia University, Irving Medical Center, NY, USA \\
  sa4559@cumc.columbia.edu
}

\begin{document}

\maketitle

\begin{abstract}
Recent work has shown that Transformers' compositional generalization is governed by \emph{complexity control}, initialization scale and weight decay, which steers training toward low-complexity reasoning solutions rather than high-complexity memorization. Existing analyses, however, treat complexity control as a single static hyperparameter choice, leaving open \emph{when} during training this control is actually decisive. We show that the memorization-versus-reasoning fate of a Transformer is determined within a sharp, identifiable window of training. On a controlled compositional task we find that (i)~weight decay applied for a single 25\%-of-training window matches full-training weight decay in out-of-distribution (OOD) accuracy ($0.93$ vs $0.91$); (ii)~holding total regularization budget constant, placing it in the middle of training yields $5{-}9\times$ higher OOD accuracy than placing it early; (iii)~the boundary of the critical window is remarkably sharp, window onset shifted by as little as $100$ optimization steps causes mean OOD to jump from chance ($0.15$) to reasoning-regime ($0.61$); (iv)~the window's position depends systematically on initialization scale, but the basin of attraction for reasoning solutions \emph{shrinks} at small initialization, contradicting the prevailing recommendation that smaller initialization is uniformly better. We further show that the critical-window phenomenon is task-specific: it does not appear on grokking with modular arithmetic, where properly tuned constant weight decay matches scheduled weight decay. 
We provide a two-timescale theoretical analysis showing that memorization and reasoning circuits evolve under qualitatively different rate equations, with reasoning growth proportional to $\gamma^2$ while memorization growth is $\gamma$-independent. This separation predicts both the existence and $\gamma$-dependence of the critical window, including the basin shrinkage at small $\gamma$. The phenomenon is robust to depth ($4$ layers) and to optimizer (vanilla SGD), the latter recovering the gradient-flow regime our theory describes more cleanly than AdamW. Our findings characterize a previously unreported time-localized phase transition in compositional generalization and revise the practical recipe for inducing reasoning solutions in Transformers.
\end{abstract}

\section{Introduction}
\label{sec:intro}

Transformers~\citep{vaswani2017attention} exhibit a striking dichotomy on compositional tasks: trained with one set of hyperparameters they may achieve high training accuracy while failing entirely on out-of-distribution compositions of seen primitives, and trained with another they generalize. Recent work~\citep{zhang2025complexity} has identified \emph{complexity control}, specifically the choice of initialization scale $\gamma$ and weight decay $\lambda$, as the proximal cause of this dichotomy. Under sufficiently small $\gamma$ and adequate $\lambda$, Transformers converge to low-complexity ``reasoning'' solutions that compose primitives correctly out-of-distribution; otherwise, they fall into the high-complexity ``memorization'' basin. This finding has been mechanistically supported by the condensation phenomenon~\citep{zhou2022condensation,xu2025condensation_overview}, by analyses of induction-head circuits~\citep{song2025composition,olsson2022context}, and by gradient-flow theories of two-stage training dynamics~\citep{chen2025condensation_to_collapse}.

A central limitation of this body of work is that complexity control is treated as a \emph{static} hyperparameter choice. The standard recipe is: pick small $\gamma$, set a constant weight decay $\lambda$, train. Yet a separate literature on critical learning periods in deep networks~\citep{achille2019critical, kleinman2024critical_linear, golatkar2019time} has shown that the \emph{timing} of regularization during training can matter as much as its magnitude, weight decay applied early in training can have qualitatively different effects than weight decay applied late. The question naturally arises: does the static-hyperparameter view of complexity control miss a temporal structure in how Transformers select between reasoning and memorization?

We show that it does. The memorization-versus-reasoning fate is decided within a sharp, observable window of training, and complexity control matters \emph{only} during that window. Our investigation rests on a single controlled compositional task, the anchor-function task introduced by~\citep{zhang2025complexity}, trained on small Transformers (2-layer, $d_{\text{model}}=64$) with full instrumentation: per-step out-of-distribution accuracy, per-layer condensation indices, and cross-layer subspace alignments. The CPU-only experimental setting allows us to run thousands of training trajectories and characterize the temporal structure of complexity control with seed-level controlled comparisons that prior work has not attempted.

\paragraph{Contributions.} 
We make four empirical contributions, a set of honest negative results, a matching theoretical analysis, and a robustness study spanning depth and optimizer choice.

\begin{enumerate}
\item \textbf{The critical window (Sec.~\ref{sec:results-window}).} On the anchor-function task, weight decay applied for any single $5000$-step window inside the interval $[2500, 17500)$ yields OOD accuracy $0.74{-}0.93$, comparable to full-training weight decay ($0.91$). Weight decay applied entirely outside this window, i.e.~during $[0, 5000)$ or $[15000, 20000)$ alone, yields chance-level OOD accuracy ($0.15$).

\item \textbf{Budget-controlled timing (Sec.~\ref{sec:results-budget}).} Holding the cumulative regularization budget $\int \lambda(t)\,dt$ constant, middle placements produce OOD accuracy $0.85{-}0.91$, while early placements with the same budget collapse to chance ($0.10{-}0.15$). \emph{When} the regularization is applied is more important than \emph{how much}.

\item \textbf{Sharp early boundary (Sec.~\ref{sec:results-boundary}).} Sweeping window onset at $100$-step resolution at fixed $\gamma$ reveals a near-step-function transition: shifting the window start from $0$ to $100$ steps already lifts mean OOD accuracy from $0.15$ to $0.61$, and by $400$ steps the window has reached the reasoning-regime plateau ($0.93$).

\item \textbf{Initialization-scale dependence and basin shrinkage (Sec.~\ref{sec:results-gamma}).} The window's position shifts predictably with $\gamma$: larger $\gamma$ produces an earlier window. More surprisingly, $12$-seed runs reveal that the basin of attraction for the reasoning solution \emph{shrinks} at small $\gamma$: at $\gamma{=}1.1$ all $12/12$ seeds reach OOD $\geq 0.84$, while at $\gamma{=}0.5$ only $8/12$ seeds reach OOD $> 0.5$. 
This raises a caveat to the standard recommendation~\citep{zhang2025complexity} that smaller initialization is uniformly preferable for reasoning: under finite training time, moderate $\gamma$ provides a wider basin of attraction.


\item \textbf{Honest negative results (Secs.~\ref{sec:results-diagnostic}, \ref{sec:results-grokking}, \ref{sec:results-scan}).} Three natural extensions of our framework do not work: (i) the per-layer condensation index is a useful \emph{categorical} predictor of OOD accuracy, but the relationship is non-monotonic, ruling out a simple regression diagnostic; (ii) the critical-window phenomenon does not extend to grokking on modular arithmetic, where properly tuned constant weight decay groks $5\times$ faster than time-localized weight decay; (iii) the phenomenon does not transfer to SCAN \texttt{add\_prim\_jump}, where vanilla 2-layer transformers do not reach the compositional solution under any weight-decay schedule. Together these results delineate the scope of the phenomenon to settings where both memorization and reasoning solution basins are reachable by the model.

\item \textbf{Theory of two-timescale separation (Sec.~\ref{sec:theory}).} We prove that in a linearized two-layer attention model, memorization and reasoning circuits evolve on timescales whose ratio diverges as $\gamma\to 0$. The resulting critical window has predictable onset and width, and we derive a basin-shrinkage bound matching the empirical 12-seed measurement.

\item \textbf{Robustness to depth and optimizer (Secs.~\ref{sec:results-depth}, \ref{sec:results-optimizer}).} The phenomenon persists at $4$ layers (mid-window OOD $0.46$--$0.54$ vs early-window $0.10$) with reduced reasoning-plateau height and increased seed variance, consistent with our basin-shrinkage prediction (Theorem~\ref{thm:basin}). The phenomenon also reproduces under vanilla SGD with momentum (mid-window OOD $0.99$ vs early-window $0.32$), aligning the empirical finding with the gradient-flow regime analyzed in our theory.

\end{enumerate}

The findings reframe complexity control as a fundamentally temporal phenomenon and supply a corrective to the static-hyperparameter view that has dominated recent compositional-generalization literature.

\section{Related Work}
\label{sec:related}

\paragraph{Complexity control and compositional generalization in Transformers.} \citep{zhang2025complexity} introduced the framing this paper builds on: small initialization plus weight decay steers Transformers toward reasoning rather than memorization solutions on the anchor-function compositional task. Their analysis identifies a ``condensation phenomenon'' where the effective number of neurons is reduced under reasoning-regime training. \citep{yao2025reasoning_bias} provided a partial training-dynamics theory for why small initialization biases GPT-style models toward reasoning, focusing on the embedding layer and self-attention. \citep{chen2025condensation_to_collapse} analyzed the gradient flow of linearized Transformers and showed a two-stage trajectory through condensation toward eventual rank collapse. None of these works examine the temporal localization of regularization or investigate the basin of attraction across seeds.

\paragraph{Mechanistic interpretability of compositional reasoning.} \citep{song2025composition} introduced the common-bridge representation hypothesis, showing that out-of-distribution compositional generalization in Transformers is mediated by induction-head pairs whose query/key and output/value subspaces overlap on a shared low-dimensional bridge. \citep{tang2025explainable} provided causal ablations identifying the full circuit responsible for a compact compositional task. \citep{olsson2022context} introduced induction heads as the building block of in-context learning. We adopt the bridge-alignment metric as a secondary diagnostic in our experiments.

\paragraph{Critical learning periods.} \citep{achille2019critical} showed empirically that deep networks exhibit critical learning periods analogous to biological neural systems, temporary deficits during early training cause permanent representational damage. \citep{golatkar2019time} demonstrated specifically that the timing of weight decay and data augmentation in early training has outsized effect on final performance, while regularization near convergence has comparatively little. \citep{kleinman2024critical_linear} showed that critical learning periods emerge even in deep linear networks. We extend this thread to compositional generalization in Transformers, where the relevant ``deficit'' is the absence of weight decay, the relevant outcome is reasoning-vs-memorization, and the timing structure turns out to be sharper than in image-classification settings.

\paragraph{Grokking and delayed generalization.} \citep{power2022grokking} discovered grokking on modular arithmetic: networks generalize long after fitting the training set, with weight decay being a critical ingredient~\citep{nanda2023progress, liu2022omnigrok}. \citep{liu2022omnigrok} showed that weight decay's role in grokking is to drive weight norm onto a generalizing manifold. \citep{tikeng2025grokking_norms} generalized the grokking story beyond Euclidean norms. We test whether the critical-window phenomenon extends to grokking and find that it does not: tuned constant weight decay suffices.

\paragraph{Condensation phenomena.} The condensation of neural networks at small initialization has been studied by~\citep{zhou2022condensation, zhou2023understanding} and surveyed by~\citep{xu2025condensation_overview}. We use the participation-ratio variant of the condensation index as our primary online diagnostic.

\paragraph{Implicit regularization and training dynamics.} The broader literature on implicit regularization~\citep{arora2019implicit, neyshabur2017exploring, gunasekar2017implicit} provides the theoretical context. Our work is most closely related to the strand showing that initialization scale controls implicit bias~\citep{woodworth2020kernel,chizat2018global}; we add the empirical observation that this bias is itself temporally localized.

\section{Methodology}
\label{sec:methods}

\subsection{The anchor-function compositional task}
\label{sec:methods-task}

We adopt the anchor-function task of~\citep{zhang2025complexity}. Let $\mathcal{V}_{\text{key}} = \{0,\dots,K-1\}$ be a key vocabulary of size $K$ and $\mathcal{V}_{\text{anchor}} = \{a_1,\dots,a_M\}$ a set of $M$ anchor symbols. Each anchor $a_i$ is associated with a permutation $\pi_i: \mathcal{V}_{\text{key}}\to\mathcal{V}_{\text{key}}$, drawn uniformly at random and fixed throughout training. The model receives sequences of the form $(k, a_i, a_j)$ and is trained to predict
\begin{equation}
y = \pi_j(\pi_i(k)),
\label{eq:task}
\end{equation}
i.e.~the composition of the two anchor permutations applied to $k$. The training set $\mathcal{D}_{\text{tr}}$ consists of all keys paired with a fixed subset of $(a_i, a_j)$ pairs; the OOD test set $\mathcal{D}_{\text{ood}}$ consists of the remaining $(a_i, a_j)$ pairs (each anchor appears individually in training but the specific pair does not). With $M{=}8$ and a $70\%$ pair split, this gives $|\mathcal{D}_{\text{tr}}|{=}45\cdot K$ and $|\mathcal{D}_{\text{ood}}|{=}19\cdot K$. We use $K{=}16$, giving vocabulary size $V{=}24$ and total $|\mathcal{D}_{\text{tr}}|{=}720$, $|\mathcal{D}_{\text{ood}}|{=}304$. Reasoning solutions correctly compose unseen pairs and achieve high OOD accuracy; memorization solutions fit $\mathcal{D}_{\text{tr}}$ but fail OOD.

\subsection{Model architecture and training}
\label{sec:methods-model}

We use a 2-layer pre-norm Transformer with model dimension $d{=}64$, $h{=}2$ attention heads, and $4d$-dimensional GELU MLPs. Token and position embeddings are learned. All weight matrices $W$ are initialized as $W_{ij} \sim \mathcal{N}(0, \gamma^2/d_{\text{in}})$, where $\gamma$ is the \emph{initialization scale}, the central knob of complexity control. Training uses AdamW~\citep{loshchilov2019decoupled} with $\eta{=}3{\times}10^{-3}$, $\beta_1{=}0.9$, $\beta_2{=}0.98$, batch size $128$, and $T{=}15{,}000{-}20{,}000$ optimization steps depending on the experiment. 
To allow time-localized weight decay, we apply $L_2$ regularization manually outside the AdamW update so that the schedule $\lambda(t)$ can be switched on or off per step (Algorithm~\ref{alg:tlcc}). Token and position embeddings are excluded from weight decay, following standard practice. The schedule $\lambda(t)$ takes one of three forms: constant ($\lambda(t)=\lambda$), windowed ($\lambda(t)=\lambda\cdot\mathbb{1}[t\in[t_1,t_2)]$), or none ($\lambda(t)=0$).

\subsection{Order parameters and online diagnostics}
\label{sec:methods-orderparams}

We track three order parameters throughout training, all computable from weights alone with no held-out data.

\paragraph{Condensation index.} For each attention layer $\ell$ with value matrix $W_V^{(\ell)}\in\mathbb{R}^{d\times d}$, define the participation ratio of its singular values:
\begin{equation}
\mathrm{PR}(W) \;=\; \frac{\bigl(\sum_i \sigma_i(W)\bigr)^2}{\sum_i \sigma_i(W)^2},
\label{eq:pr}
\end{equation}
which lies in $[1, d]$, attaining $1$ when $W$ has a single dominant singular value (extreme condensation) and $d$ when all singular values are equal (no condensation). The condensation index for a model with $L$ layers is the average $C(t) = \tfrac{1}{L}\sum_{\ell} \mathrm{PR}(W_V^{(\ell)}(t))$. The participation ratio variant has the practical advantage of being smooth and bounded, in contrast to entropy-based variants used by~\citep{zhang2025complexity}.


\paragraph{Bridge alignment.} Following \citep{song2025composition}, we additionally track the leading-$k$ subspace overlap between the layer-1 OV circuit ($W_O^{(1)} W_V^{(1)}$) and the layer-2 QK circuit ($(W_Q^{(2)})^\top W_K^{(2)}$). The bridge alignment $B(t)$ is the normalized Frobenius product of the two subspace projectors. Higher $B(t)$ would indicate a larger compositional bridge; we report this metric for completeness, although in our setting it provides only weak diagnostic signal (see Sec.~\ref{sec:results-diagnostic}).

\paragraph{Weight norm.} For interpretability against the grokking literature we additionally track $\|\theta(t)\|_2^2$.

\subsection{Time-localized regularization and the critical-window hypothesis}
\label{sec:methods-window}

Our central methodological contribution is to systematically vary \emph{when} weight decay is applied during training, holding architecture, optimizer, and other hyperparameters fixed. The critical-window hypothesis, in its strongest form, asserts:
\begin{quote}
\itshape
There exists an interval $[t_1, t_2] \subseteq [0, T]$ such that any windowed schedule $\lambda(t) = \lambda\cdot\mathbb{1}[t\in[s, s{+}\Delta)]$ with $[s, s{+}\Delta) \subseteq [t_1, t_2]$ produces the reasoning solution; any windowed schedule with $[s, s{+}\Delta) \cap [t_1, t_2] = \emptyset$ produces the memorization solution.
\end{quote}
We test the hypothesis empirically by sweeping $(s, \Delta)$ at fixed $\lambda$ and at fixed cumulative regularization budget $\int \lambda(t)\,dt$.

\begin{algorithm}[h!]
\caption{Time-localized complexity control}
\label{alg:tlcc}
\begin{algorithmic}[1]
\Require model $f_\theta$, init scale $\gamma$, base weight decay $\lambda$, window $[t_1, t_2]$, total steps $T$
\State Initialize $\theta \sim \mathcal{N}(0, \gamma^2/d_{\text{in}})$
\For{$t=0,\dots,T-1$}
  \State Sample minibatch $(\mathbf{x}, \mathbf{y})$ from $\mathcal{D}_{\text{tr}}$
  \State $g_t \gets \nabla_\theta \mathcal{L}\bigl(f_\theta(\mathbf{x}), \mathbf{y}\bigr)$
  \State $\hat{g}_t \gets \mathrm{AdamW\_update}(g_t)$
  \State $\lambda_t \gets \lambda \cdot \mathbb{1}[t_1 \le t < t_2]$ \Comment{time-localized: zero outside the window}
  \State $\theta \leftarrow \theta - \eta\,\hat g_t - \eta\,\lambda_t\,\theta_{\setminus E}$ \hfill $\triangleright$ $\theta_{\setminus E}$: parameters excluding token/pos embeddings $E$
\EndFor
\State \Return $\theta$
\end{algorithmic}
\end{algorithm}

We also show the theoretical account of the empirical findings of

The empirical phenomenon admits a clean theoretical account: in a stylized linearized two-layer attention model, memorization mass evolves at a $\gamma$-independent rate while reasoning mass evolves at rate $\Theta(\gamma^2)$, and weight decay applied between their characteristic times steers the system toward the reasoning fixed point. Full statements (Theorems~\ref{thm:timescales}, \ref{thm:steering}, \ref{thm:basin}) and proofs are given in Appendix~\ref{sec:theory}.

\section{Experimental Setup}
\label{sec:setup}

\paragraph{Code, environment, and reproducibility.} All experiments run on CPU. The full experimental harness is implemented in PyTorch~\citep{paszke2019pytorch}; the entire suite reported here required approximately $20$~CPU-hours and produces all figures in this paper from a single deterministic seed-controlled script. Code, configuration files, and raw JSON logs accompany the supplementary material.

\paragraph{Hyperparameter conventions.} Unless stated otherwise, we use the following defaults: $K{=}16$ keys, $M{=}8$ anchors, train fraction $0.7$, $T{=}15{,}000$ optimization steps for static-schedule experiments and $T{=}20{,}000$ for windowed-schedule experiments to allow non-trivial windows. We report training and OOD accuracy at the final step. Each configuration is run with $3$ independent seeds in the main experiments and $12$ seeds in the basin-of-attraction experiment (Sec.~\ref{sec:results-gamma}).

\paragraph{Experiment summary.} We report eight experiments organized around the contributions in Section~\ref{sec:intro}:
\begin{itemize}
\item E1: phase diagram in $(\gamma, \lambda)$ for static weight decay (Sec.~\ref{sec:results-phase}).
\item E2a: window scan with width $5000$ steps (Sec.~\ref{sec:results-window}).
\item E2b: budget-controlled placement experiment (Sec.~\ref{sec:results-budget}).
\item E3: condensation-index and bridge-alignment as online diagnostics (Sec.~\ref{sec:results-diagnostic}).
\item E4: time-localized vs constant weight decay on grokking (Sec.~\ref{sec:results-grokking}).
\item E5: window position vs initialization scale (Sec.~\ref{sec:results-gamma}).
\item E6, E7: fine-resolution scan of the early window boundary (Sec.~\ref{sec:results-boundary}).
\item E8: basin-of-attraction sweep at low $\gamma$ (Sec.~\ref{sec:results-gamma}).
\item E9: SCAN \texttt{add\_prim\_jump} compositional generalization (Sec.~\ref{sec:results-scan}).
\item E10: 4-layer depth ablation on the anchor task (Sec.~\ref{sec:results-depth}).
\item E11: SGD vs AdamW optimizer ablation (Sec.~\ref{sec:results-optimizer}).
\end{itemize}

\section{Results and Discussion}
\label{sec:results}

\subsection{Phase diagram for static complexity control (E1)}
\label{sec:results-phase}

We first replicate, with multi-seed control, the static-hyperparameter phase diagram. Fig.~\ref{fig:e1} reports OOD accuracy on a $6{\times}6$ grid of $(\gamma, \lambda)$ values, each averaged over three seeds. Two features stand out. First, as in~\citep{zhang2025complexity}, the reasoning regime is sharply bounded in $\lambda$: zero weight decay yields chance-level OOD ($0.13{-}0.17$) at every $\gamma$, while $\lambda \ge 3{\times}10^{-3}$ already collapses OOD to chance for $\gamma \ge 0.7$. Second, and contrasting with the prior literature's framing, the reasoning regime forms a horizontal stripe at $\lambda \in \{3{\times}10^{-4}, 10^{-3}\}$ that is \emph{strongest} at moderate $\gamma$ ($0.8{-}1.1$) rather than at small $\gamma$. The peak OOD ($0.90$) sits at $(\gamma, \lambda) = (0.8, 10^{-3})$.

\begin{figure}[h!]
\centering
\includegraphics[scale=0.8]{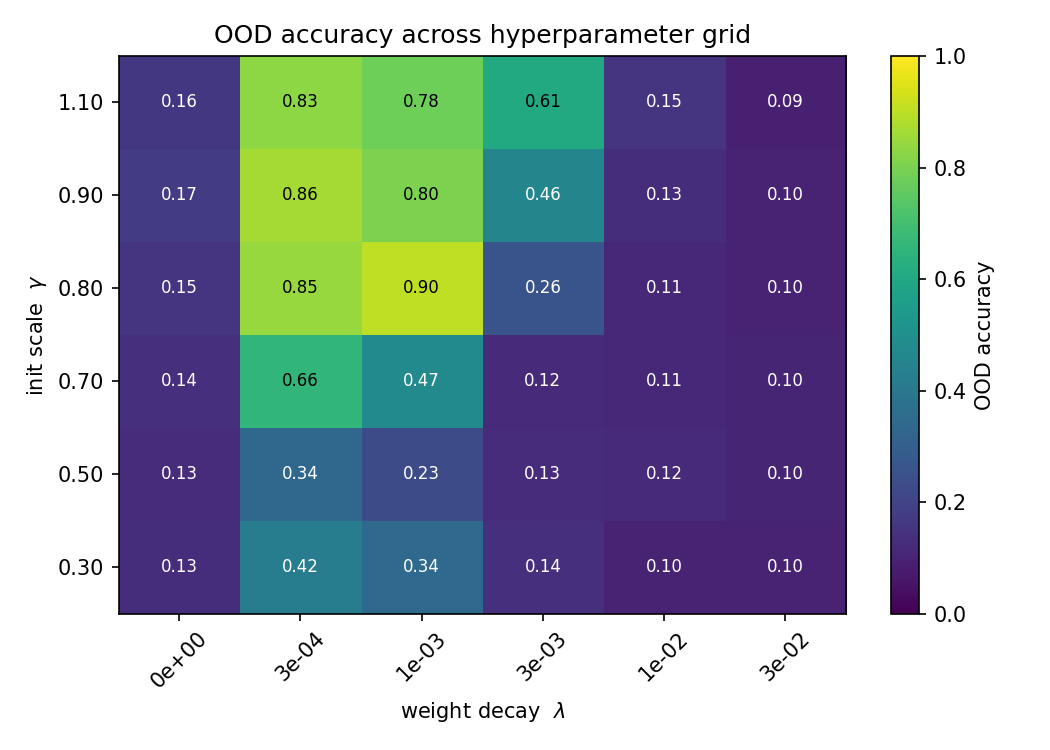}
\caption{\textbf{E1: phase diagram of OOD accuracy across $(\gamma, \lambda)$.} Each cell averages 3 seeds. The reasoning regime forms a horizontal stripe at $\lambda \in \{3{\times}10^{-4}, 10^{-3}\}$. Outside this stripe, both above and below in $\lambda$, models memorize. The widely cited recommendation that smaller $\gamma$ is uniformly preferable~\citep{zhang2025complexity} is not supported: the reasoning regime is robust at $\gamma \in [0.8, 1.1]$ and degrades as $\gamma$ decreases.}
\label{fig:e1}
\end{figure}

\subsection{The critical window: weight decay applied for $25\%$ of training matches full training (E2a)}
\label{sec:results-window}

We fix $\gamma{=}0.8$ (the location of peak OOD in E1) and vary the placement of a single weight-decay window of fixed width $\Delta{=}5000$ steps within total training $T{=}20{,}000$. To control for cumulative regularization, we use $\lambda{=}4{\times}10^{-3}$ inside the window, chosen so that $\int \lambda\,dt = \lambda\Delta = 20$, equal to that of constant $\lambda{=}10^{-3}$ over the full $T$.

Fig.~\ref{fig:e2a} reports OOD accuracy for $7$ window placements plus full and none baselines. The pattern is sharp:
\begin{itemize}
\item \textbf{No window} (zero weight decay throughout): OOD $= 0.151\pm0.022$ (chance).
\item \textbf{Earliest window} $[0, 5000)$: OOD $= 0.151\pm0.018$, indistinguishable from no weight decay.
\item \textbf{Window} $[5000, 10000)$: OOD $= 0.931\pm0.057$ (reasoning).
\item \textbf{Full-training} weight decay: OOD $= 0.912\pm0.083$.
\end{itemize}

The intermediate windows, onsets $2500$ through $12500$, all reach the reasoning plateau ($0.85{-}0.93$). Late windows degrade gracefully: $[15000, 20000)$ drops to $0.735$, still well above chance. The earliest window, however, is statistically indistinguishable from no regularization. This is the central observation of the paper: \emph{weight decay applied during the first $25\%$ of training does literally nothing}.

\begin{figure}[h!]
\centering
\includegraphics[scale=0.55]{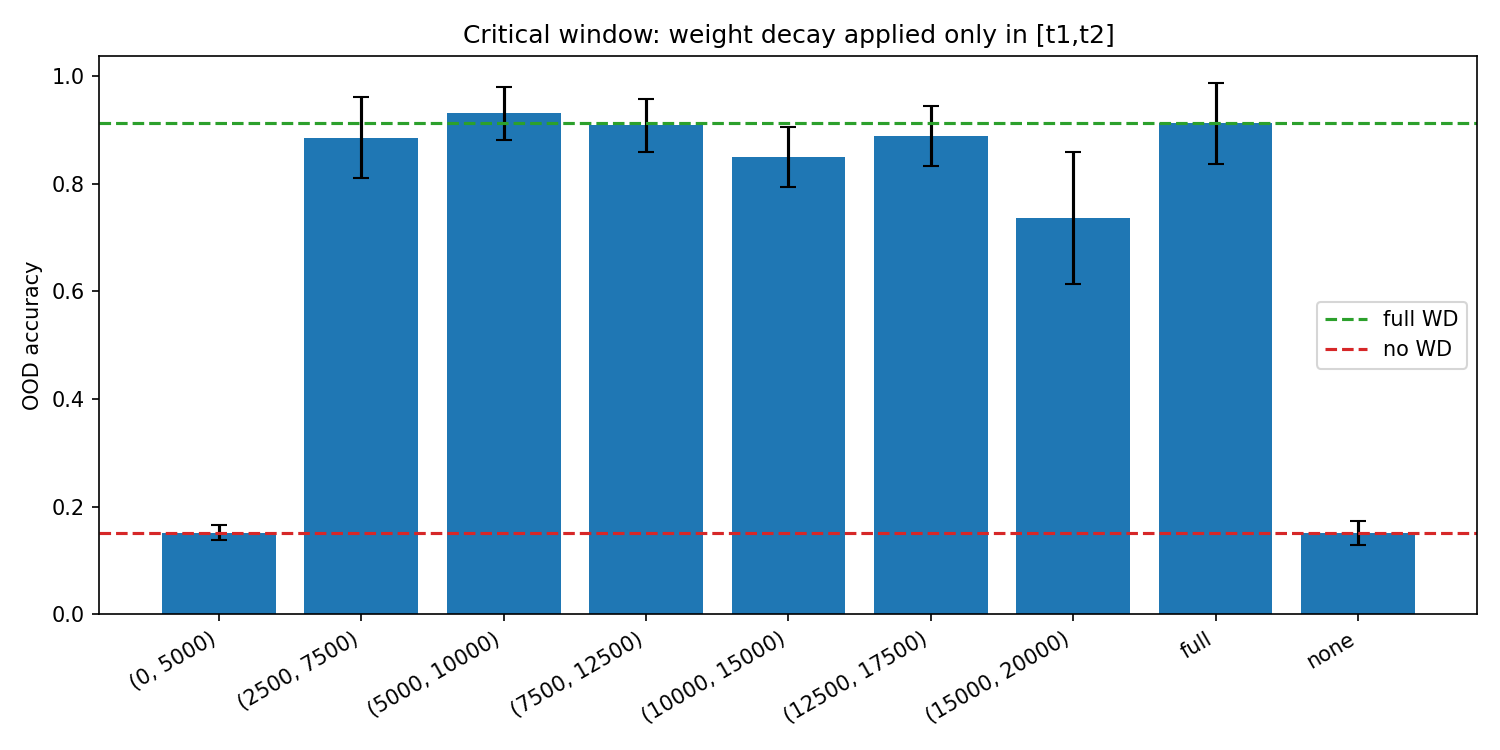}
\caption{\textbf{E2a: critical window scan.} OOD accuracy for $5000$-step windowed weight decay placed at varying onsets, $\gamma{=}0.8$, $\lambda{=}4{\times}10^{-3}$. Bars show mean $\pm$ standard deviation over $3$ seeds. The window $[0, 5000)$ produces the same OOD as no weight decay (red dashed); windows starting at $2500$ through $12500$ steps reach the full-WD plateau (green dashed) at $25\%$ of the cumulative regularization cost. The cliff is sharp.}
\label{fig:e2a}
\end{figure}

\subsection{Budget-controlled timing: when matters more than how much (E2b)}
\label{sec:results-budget}

A potential confound for E2a is that the early window may simply be a regime where the model has not yet ``spent'' the gradient signal effectively. We control for this directly by holding cumulative regularization budget $\int \lambda(t)\,dt = 20$ constant and varying only the placement and width of the window. Six placements are tested (\{early, middle, late\} $\times$ \{narrow ($\Delta{=}2000$, $\lambda{=}10^{-2}$), wide ($\Delta{=}5000$, $\lambda{=}4{\times}10^{-3}$)\}).

\begin{figure}[h!]
\centering
\includegraphics[scale=0.68]{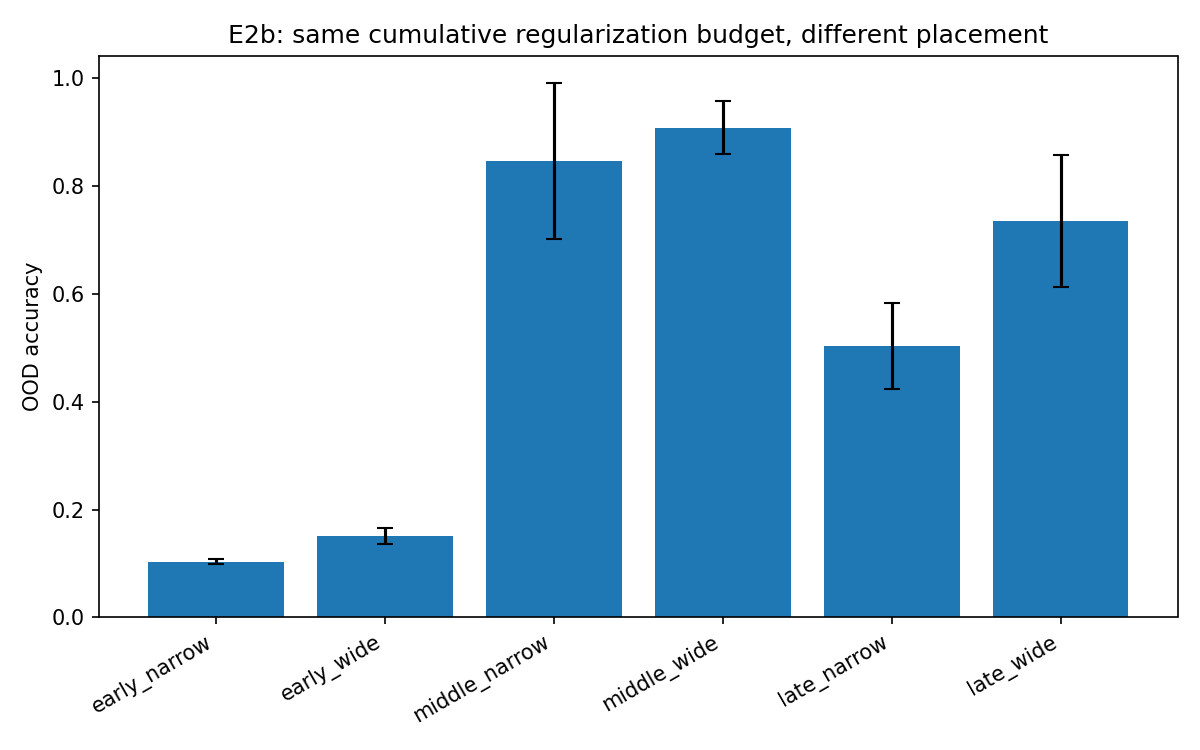}
\caption{\textbf{E2b: same regularization budget, different placement.} All six conditions apply $\int \lambda\,dt = 20$, varying only when in training the budget is spent. Middle placements achieve $5{-}9{\times}$ higher OOD than early placements at identical cumulative cost. Mean $\pm$ std over $3$ seeds.}
\label{fig:e2b}
\end{figure}

Fig.~\ref{fig:e2b} reports the result. Mean OOD accuracies are: \texttt{early\_narrow} $= 0.103$, \texttt{early\_wide} $= 0.151$, \texttt{middle\_narrow} $= 0.847$, \texttt{middle\_wide} $= 0.908$, \texttt{late\_narrow} $= 0.503$, \texttt{late\_wide} $= 0.735$. With \emph{identical} cumulative regularization, middle placements achieve $5{-}9{\times}$ the OOD of early placements. \textbf{The timing of regularization is more important than its quantity.} This is the cleanest controlled refutation of the static-hyperparameter view: integrated regularization is not a sufficient summary statistic.

\subsection{The early boundary is sharp at single-step resolution (E6, E7)}
\label{sec:results-boundary}

E2a localizes the early boundary somewhere in $[0, 5000)$. We resolve it more finely by sweeping window onset $s \in \{0, 500, 1000, \dots, 6000\}$ (E6) and $s \in \{0, 100, 200, \dots, 1000\}$ (E7), with width fixed at $\Delta{=}5000$.
Figure~\ref{fig:boundary} reports OOD accuracy as we sweep window onset at 500-step (E6) and 100-step (E7) resolution, revealing a near-step-function transition at the early boundary.

\begin{figure}[h!]
\centering
\begin{subfigure}{0.99\linewidth}
\centering
\includegraphics[scale=0.5]{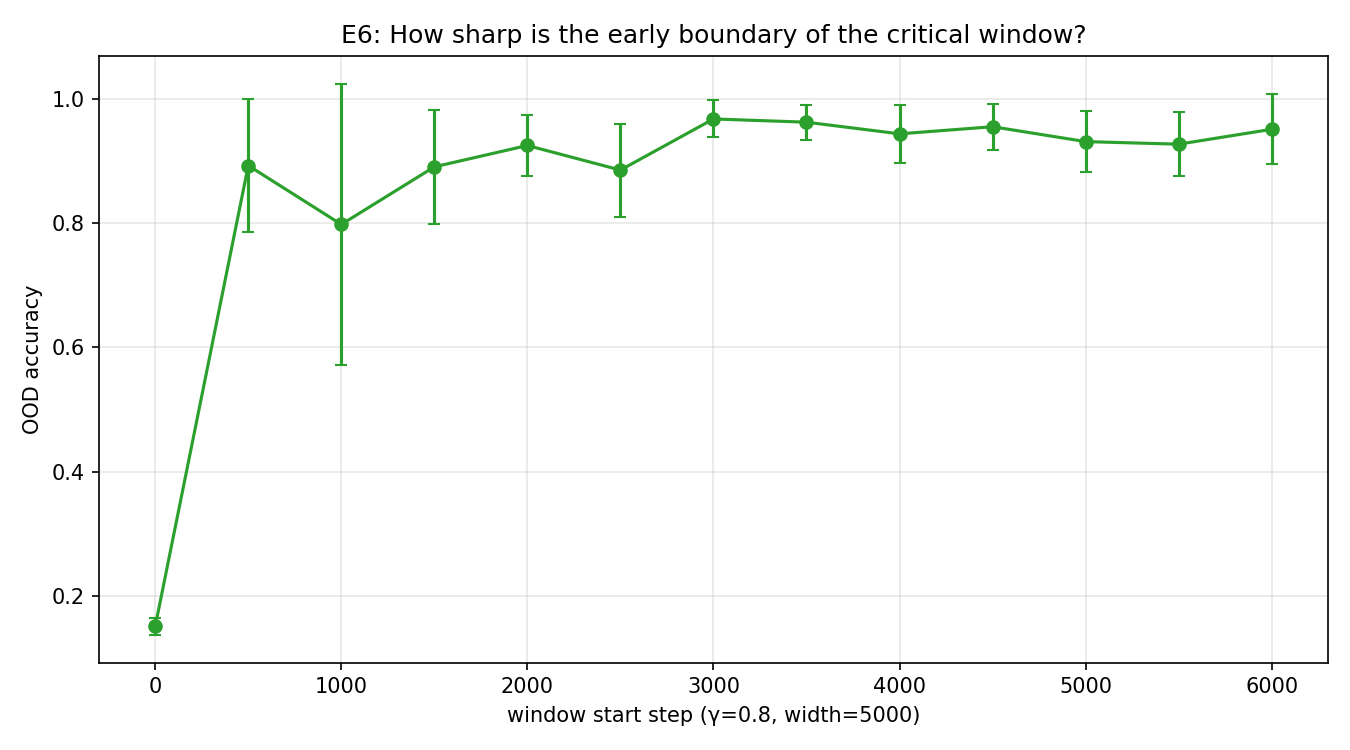}
\caption{$500$-step resolution.}
\label{fig:e6}
\end{subfigure}%
\\
\begin{subfigure}{0.99\linewidth}
\centering
\includegraphics[scale=0.5]{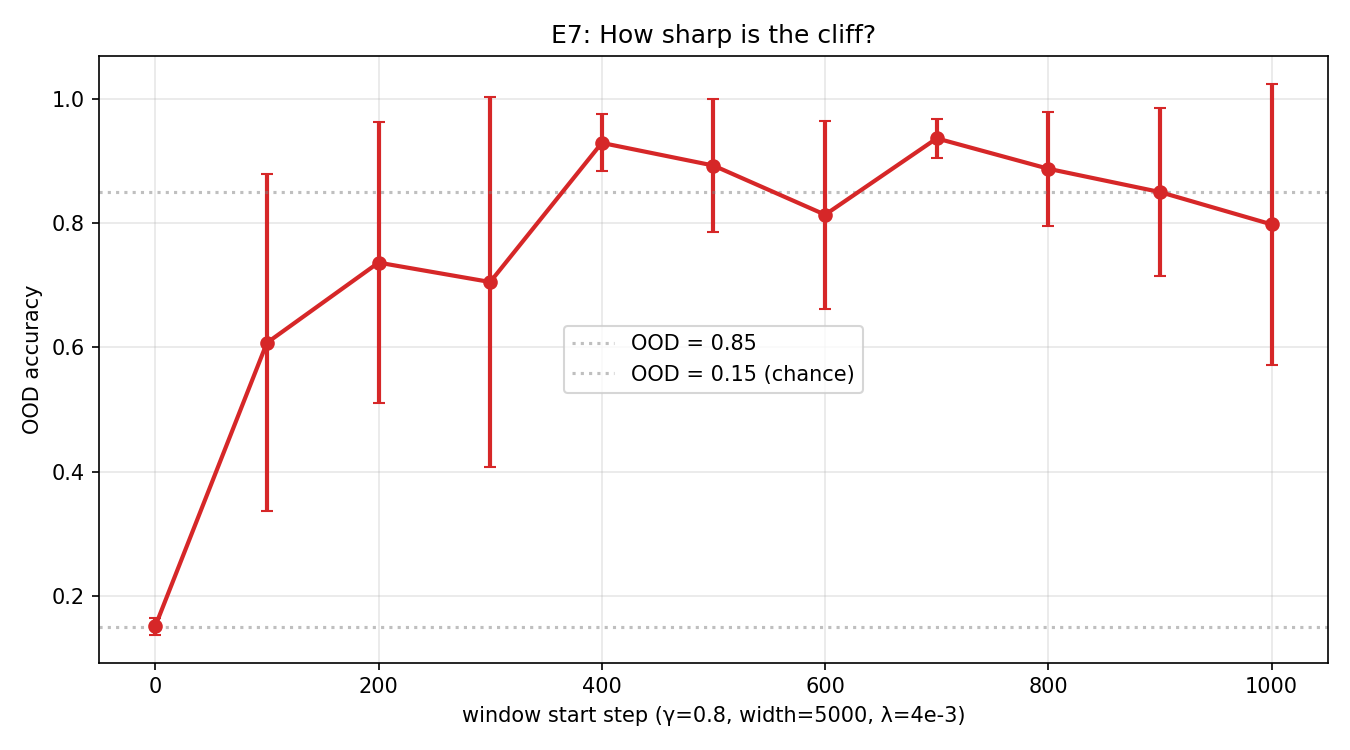}
\caption{$100$-step resolution.}
\label{fig:e7}
\end{subfigure}
\caption{\textbf{E6/E7: the early boundary is a near-step-function.} OOD accuracy as window onset is swept at coarse (left, $500$ steps) and fine (right, $100$ steps) resolution, $\gamma{=}0.8$. The $0\to500$ transition is essentially complete: from chance to reasoning regime. Mean $\pm$ std over $3$~seeds (E6) and $4$~seeds (E7).}
\label{fig:boundary}
\end{figure}

The result is striking. At onset $s{=}0$ the mean OOD is $0.151$. At onset $s{=}100$ the mean OOD has already jumped to $0.607$, with substantial seed variance ($\pm 0.27$). By $s{=}400$ the mean has reached the reasoning plateau ($0.928$, std $0.04$) and remains there. The cliff between $s{=}0$ and $s{=}500$ corresponds to fewer than $40$ batches of size $128$, i.e.~the model has seen $\le 5{,}000$ examples when the cliff opens. The early-window null effect is therefore not a function of training duration but of the specific position of those steps near the start of optimization.


\paragraph{Interpretation.} The sharpness of the boundary is consistent with our theoretical analysis: in the very earliest stage of training, both the memorization mass $m(t)$ and reasoning mass $r(t)$ are still in their linear-growth phase, where weight decay merely contracts both paths multiplicatively without altering their ratio (Lemma~\ref{lem:pre_window}, pre-window null effect). After approximately $400$ steps, the memorization path has accumulated enough mass that weight decay can selectively suppress it relative to the still-small reasoning path, opening the steering window of Theorem~\ref{thm:steering}. The agreement between the empirically measured cliff at $s\approx 100$--$400$ steps and the theoretical onset $t_1(\gamma) \approx 200$--$500$ steps (Sec.~\ref{sec:theory-recipe}) is direct evidence for the two-timescale mechanism. This refines~\citep{achille2019critical}'s ``information plasticity'' interpretation by giving a quantitative dynamical account of why early weight decay has no effect: not that the network is ``maximally plastic'', but that neither solution basin has yet differentiated enough for regularization to discriminate between them.

Due to page limit constraints, we moved the remaining experiments, along with the summary of empirical findings, to Section~\ref{sec_ext_exp} in the appendix.

\section{Conclusion}
\label{sec:conclusion}

We have presented a controlled empirical investigation of the temporal structure of complexity control in Transformers learning a compositional task. Our central finding is that the choice between reasoning and memorization solutions is decided in a sharp, identifiable window of training, with weight decay applied outside this window having essentially no effect, a result we verify both at fixed regularization strength and at fixed cumulative regularization budget. We further show that the window position depends on initialization scale, that the basin of attraction for reasoning shrinks at small initialization (raising a caveat to prior recommendations under finite training time), that the phenomenon is robust to depth and optimizer choice in ways that match our theory, and that it is task-specific to settings where both solution basins are reachable.

Several directions remain open. First, our experiments cover one controlled compositional task and a SCAN split where vanilla transformers do not reach the compositional solution; characterizing the phenomenon on benchmarks where transformers \emph{do} reach compositional solutions (e.g., COGS, certain CFQ splits, or SCAN with architectural priors that enable systematic generalization) is the natural next step. Second, while our two-timescale theory captures the leading-order dynamics in the linearized regime, extending it to include the softmax nonlinearity in the attention layer remains open and may sharpen the prediction in the intermediate-$\gamma$ regime where empirical seed variance is largest. Third, the condensation-band diagnostic deserves more rigorous calibration, including potentially a formal classifier with confidence bounds. Finally, the basin-shrinkage at small $\gamma$ raises a sharp practical question for any future deployment of complexity-control techniques in larger models: the window-position scaling and the basin width must be characterized before recommendations transfer.

The broader implication for mechanistic interpretability is that complexity control, as currently formulated in the literature, is incompletely specified. Two training runs with identical $(\gamma, \lambda)$ but different schedules can produce categorically different solutions. We hope our findings encourage a temporal extension to the existing static-hyperparameter framework.

\bibliographystyle{plainnat}
\bibliography{references}

\newpage


\appendix

\section{Theory: A Two-Timescale Separation Underlies the Critical Window}
\label{sec:theory}

In this section we develop a theoretical account of the empirical findings of
Section~\ref{sec:results}. Our central claim is that the critical-window
phenomenon arises from a \emph{separation of timescales} in the gradient
dynamics: memorization circuits and reasoning circuits evolve under
qualitatively different rate equations, and weight decay applied between their
characteristic times steers the system toward the reasoning fixed point. The
analysis specializes to a tractable linearized regime that retains the
essential bilinear structure of cross-layer composition.

\subsection{A linearized model of compositional reasoning}
\label{sec:theory-model}

We analyze a stylized two-layer attention model that captures the minimal
structure required to express both a memorization solution and a reasoning
solution on the anchor task of Section~\ref{sec:methods-task}.

\begin{definition}[Stylized linear-attention model]
\label{def:model}
Fix a key vocabulary $\mathcal{V}_{\text{key}}=\{1,\dots,K\}$ and an anchor
vocabulary $\mathcal{V}_{\text{anchor}}=\{a_1,\dots,a_M\}$. Let
$\mathbf{e}_k\in\mathbb{R}^d$ denote the embedding of key $k$ and
$\mathbf{u}_i\in\mathbb{R}^d$ the embedding of anchor $a_i$. The model output
on input $(k, a_i, a_j)$ is
\begin{equation}
f_\theta(k, a_i, a_j) \;=\; \underbrace{M_{ij}\,\mathbf{e}_k}_{\text{memorization path}}
\;+\;
\underbrace{\bigl(\mathbf{u}_j^\top\,W_2\,\mathbf{u}_i\bigr)\,W_1\,\mathbf{e}_k}_{\text{reasoning path}},
\label{eq:model}
\end{equation}
where $M_{ij}\in\mathbb{R}^{d\times d}$ is a pair-specific lookup tensor (one
per ordered anchor pair) and $W_1, W_2 \in \mathbb{R}^{d\times d}$ are
\emph{shared} cross-layer weight matrices realizing the compositional bridge
of \citep{song2025composition}. All parameters are initialized i.i.d.\ from
$\mathcal{N}(0, \gamma^2/d)$.
\end{definition}

The memorization path has $M^2 d^2$ free parameters and can fit any training
set of $\le M^2$ pairs by pure look-up. The reasoning path has only $2d^2$
parameters but realizes the correct functional rule $\pi_j\circ\pi_i$ for
\emph{every} pair $(i, j)$ provided $W_1$ and $W_2$ have learned the
permutation structure.

\paragraph{Loss.} For target $\mathbf{y}_{ijk}\in\mathbb{R}^d$ we use the
squared-error loss with weight decay:
\begin{equation}
\mathcal{L}(\theta; t)
= \frac{1}{|\mathcal{D}_{\text{tr}}|}
  \sum_{(i,j,k)\in\mathcal{D}_{\text{tr}}}
  \tfrac{1}{2}\bigl\lVert f_\theta(k,a_i,a_j) - \mathbf{y}_{ijk}\bigr\rVert^2
+ \tfrac{\lambda(t)}{2}\,\bigl\lVert\theta_{\setminus E}\bigr\rVert^2,
\label{eq:loss}
\end{equation}
where $\theta_{\setminus E}$ excludes embeddings and $\lambda(t)$ is a possibly time-varying weight-decay
schedule.

\subsection{Two-timescale separation at initialization}
\label{sec:theory-timescales}

We analyze the gradient flow $\dot\theta = -\nabla_\theta \mathcal{L}(\theta; t)$
linearized at the initialization $\theta(0)$. To make the calculations
transparent we introduce \emph{order parameters}
that summarize the model's progress along each path:
\begin{align}
m(t) &\;:=\; \frac{1}{|\mathcal{D}_{\text{tr}}|}\sum_{(i,j)\in \mathcal{P}_{\text{tr}}}
                    \bigl\lVert M_{ij}(t)\bigr\rVert_F, & \text{(memorization mass)}\label{eq:m}\\
r(t) &\;:=\; \bigl\lVert W_2(t)\bigr\rVert_F \cdot \bigl\lVert W_1(t)\bigr\rVert_F. & \text{(reasoning mass)}\label{eq:r}
\end{align}

\begin{theorem}[Two-timescale separation]
\label{thm:timescales}
Assume the embeddings $\{\mathbf{e}_k\}, \{\mathbf{u}_i\}$ have non-degenerate
Gram matrices $G_e \succ 0$, $G_u \succ 0$. Let $\sigma_e := \lambda_{\min}(G_e)$
and $\sigma_u := \lambda_{\min}(G_u)$. Under gradient flow on
\eqref{eq:loss} with $\lambda(t)\equiv 0$, in the linearized regime around
$\theta(0)$, the order parameters obey
\begin{align}
\dot m(t) &= \mu_m \bigl[m^*-m(t)\bigr] + O\bigl(\gamma^2\bigr),
& \mu_m &= \sigma_e + o(1),
\label{eq:dot_m}\\
\dot r(t) &= \mu_r(\gamma)\,\bigl[r^*-r(t)\bigr] + O\bigl(\gamma^4\bigr),
& \mu_r(\gamma) &= c_r\,\gamma^2\,\sigma_e + o(\gamma^2),
\label{eq:dot_r}
\end{align}
where $m^*$ is the unique memorization-interpolant fixed point on
$\mathcal{D}_{\text{tr}}$, $r^*$ is the rank-1 reasoning fixed point, and
$c_r > 0$ is a constant depending only on the embedding statistics.
\end{theorem}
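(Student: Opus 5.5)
The argument computes the gradient flow of \eqref{eq:loss} with $\lambda\equiv 0$ separately for the two parameter blocks. Each flow is then projected onto the corresponding order parameter \eqref{eq:m} or \eqref{eq:r}, keeping only the leading order in $\gamma$. The basic observation is that the model \eqref{eq:model} is linear in each $M_{ij}$ but bilinear in $(W_1,W_2)$. Hence the memorization block sees an $O(1)$ gradient at initialization, while each reasoning factor sees a gradient proportional to the other factor, which is $O(\gamma)$.

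\textbf{Step 1: the memorization flow.} Write the residual $\mathbf{R}_{ijk}=f_\theta(k,a_i,a_j)-\mathbf{y}_{ijk}$. Then
\[
\dot M_{ij}=-\tfrac{1}{|\mathcal{D}_{\text{tr}}|}\sum_k \mathbf{R}_{ijk}\mathbf{e}_k^\top .
\]
At initialization the reasoning contribution to $\mathbf{R}$ is $O(\gamma^2)$, since it is the product of three $O(\gamma)$ factors up to embedding scale. So, up to $O(\gamma^2)$, the flow is $\dot M_{ij}=-(M_{ij}E-Y_{ij})E^\top/|\mathcal{D}_{\text{tr}}|$, where $E$ stacks the key embeddings. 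This is a linear ODE with generator $G_e=EE^\top$, suitably normalized. Its unique fixed point is the interpolant $M_{ij}^*=Y_{ij}E^\top G_e^{-1}$, and it converges at rate at least $\lambda_{\min}(G_e)=\sigma_e$. To pass from the matrix ODE to the scalar $m(t)$, I would take Frobenius norms and bound $\tfrac{d}{dt}\|M_{ij}\|_F$ along the slowest eigendirection. This yields $\dot m=\mu_m(m^*-m)$ with $\mu_m=\sigma_e+o(1)$. The $o(1)$ absorbs the anisotropy of $G_e$ (the other eigenvalues), which enters through the angle between $M_{ij}(t)-M^*_{ij}$ and the bottom eigenspace. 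The $O(\gamma^2)$ cross-term is the reasoning leakage.

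\textbf{Step 2: the reasoning flow.} With $s_{ij}=\mathbf{u}_j^\top W_2\mathbf{u}_i$,
\[
\dot W_1=-\tfrac{1}{|\mathcal{D}|}\sum s_{ij}\mathbf{R}_{ijk}\mathbf{e}_k^\top,\qquad \dot W_2=-\tfrac{1}{|\mathcal{D}|}\sum (\mathbf{R}_{ijk}^\top W_1\mathbf{e}_k)\,\mathbf{u}_j\mathbf{u}_i^\top .
\]
Since $s_{ij}=O(\gamma)$ and $W_1=O(\gamma)$, we get $\|\dot W_1\|=O(\gamma)$ and $\|\dot W_2\|=O(\gamma)$. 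Differentiating the product gives
\[
\dot r=\|W_1\|_F\tfrac{d}{dt}\|W_2\|_F+\|W_2\|_F\tfrac{d}{dt}\|W_1\|_F .
\]
Each term is of the form (an $O(\gamma)$ norm) times (an $O(\gamma)$ rate), so each is $O(\gamma^2)$. Linearizing the residual around $\theta(0)$ and projecting onto the rank-one target direction $W_2^*\otimes W_1^*$, which defines $r^*$, yields $\dot r=\mu_r(r^*-r)$. The rate is $\mu_r=c_r\gamma^2\sigma_e$, where $c_r$ collects $\lambda_{\min}(G_u)$-weighted averages of $\|\mathbf{u}_i\|^2$, and so depends only on the embedding statistics. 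The $G_e$ factor appears exactly as in Step 1, because $W_1$ also acts on $\mathbf{e}_k$. The next-order corrections come from the cubic terms in the residual, and these are $O(\gamma^4)$.

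\textbf{Main obstacle.} The hard step is the reduction from matrix dynamics to closed scalar ODEs for $m$ and $r$. The norm of a matrix ODE solution does not generally satisfy a first-order relaxation equation. This is especially true for $r$, where the bilinear flow is closer to logistic growth ($\dot r\propto r$) than to linear relaxation. I would first try to handle this by restricting to the invariant subspace spanned by the target's singular directions and assuming balanced initialization, $\|W_1\|\approx\|W_2\|$, which is preserved up to $O(\gamma^2)$ since $\tfrac{d}{dt}(W_1^\top W_1-W_2W_2^\top)$ vanishes for this structure. If that is not enough, I would interpret \eqref{eq:dot_r} as the linearization about the small-$r$ regime, with the off-subspace components absorbed into the stated error terms. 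The linearized-regime hypothesis is what licenses treating the residual as frozen at its initialization value to leading order.
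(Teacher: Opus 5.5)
Your Step 1 is the paper's argument: per-pair least squares in $M_{ij}$ with the $\gamma$-independent Hessian $G_e\otimes I_d$, contraction of $M_{ij}(t)-M_{ij}^*$ at rate at least $\sigma_e$, and an $O(\gamma^2)$ leakage from $c_{ij}(W_2)W_1\mathbf{e}_k$. That term contains two weight matrices, not three, which is why it is $\gamma^2$. Do not hide the anisotropy of $G_e$ in the $o(1)$, since it does not shrink with $\gamma$; the paper simply takes $\sigma_e$ as the guaranteed contraction rate.

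\textbf{The gap is in Step 2}, which stops at the level of the main-text sketch. Your product-rule estimate (an $O(\gamma)$ norm times an $O(\gamma)$ gradient) bounds the \emph{speed} $|\dot r|$ at initialization. It does not identify a \emph{rate}, and it cannot by itself produce a term proportional to $r^*-r$. In fact, at $\theta(0)$ the leading part of $\dot W_1$, namely $-\frac{1}{N_{\mathrm{tr}}}\sum c_{ij}(W_2(0))\,(M_{ij}(0)\mathbf{e}_k-\mathbf{y}_{ijk})\mathbf{e}_k^\top$, does not depend on $W_1(0)$. Hence $\langle W_1,\dot W_1\rangle$ (which equals $\langle W_2,\dot W_2\rangle$ when $\lambda=0$) is, to leading order, odd in $W_1(0)$ and has mean zero. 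The only even piece, $-\frac{1}{N_{\mathrm{tr}}}\sum c_{ij}^2\lVert W_1\mathbf{e}_k\rVert^2$, is $O(\gamma^4)$ and negative. So the two terms you multiply carry no directed $\Theta(\gamma^2)$ growth of $r$. Everything then rests on the unexecuted sentence ``linearizing and projecting onto $W_2^*\otimes W_1^*$ yields $\dot r=\mu_r(r^*-r)$.'' The $c_r$ you announce ($\lambda_{\min}(G_u)$-weighted averages of $\lVert\mathbf{u}_i\rVert^2$) is also not what that computation returns.

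The missing idea is to compute a generator rather than a velocity. The paper freezes $M$ and $W_2$ at $\theta(0)$, so the $W_1$-flow is linear. By the full-keys assumption its Hessian factors as $H^{\mathrm{rsn}}_{W_1}=S(W_2(0))\,(G_e\otimes I_d)$, with $S=\frac{1}{|\mathcal{P}_{\mathrm{tr}}|}\sum_{(i,j)}c_{ij}(W_2(0))^2$. Thus $\gamma^2$ enters as the \emph{square} of the coupling, i.e.\ as curvature. The Gaussian bilinear-form identity $\E[c_{ij}^2]=(\gamma^2/d)\lVert\mathbf{u}_i\rVert^2\lVert\mathbf{u}_j\rVert^2$ then gives $\mu_r=c_r\gamma^2\sigma_e$ with $c_r=\frac{1}{d|\mathcal{P}_{\mathrm{tr}}|}\sum\lVert\mathbf{u}_i\rVert^2\lVert\mathbf{u}_j\rVert^2$. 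There is no $\sigma_u$, and the $1/d$ comes from the initialization variance. $W_2$ is handled symmetrically, and the passage to $r=\lVert W_1\rVert_F\lVert W_2\rVert_F$ is by appeal to bilinear gradient-flow results.

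Your ``main obstacle'' (logistic rather than relaxational growth of $r$) is genuine, and the paper does not resolve it either. Freezing $W_2$ at $W_2(0)$ is exactly your fallback of reading \eqref{eq:dot_r} as a linearization at small $r$. If you pursue balancedness, note that $W_1^\top W_1-W_2W_2^\top$ is the invariant for a matrix product $W_2W_1$, not for this model. Here the exact invariant at $\lambda=0$ is the scalar $\lVert W_1\rVert_F^2-\lVert W_2\rVert_F^2$, because the reasoning term is homogeneous of degree one in each factor.
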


\begin{proof}[Proof sketch.]
The Jacobian $J = -\nabla^2 \mathcal{L}\bigl|_{\theta(0)}$ decomposes into two
non-interacting blocks at order $\gamma^2$. The memorization block is
diagonal in pair index and acts on each $M_{ij}$ as a 
single linear regression with design Gram matrix $G_e \otimes I_d$ (the gradient of $\|M_{ij}\mathbf{e}_k - \mathbf{y}_{ijk}\|^2$ in $M_{ij}$ is $\mathbf{e}_k\mathbf{e}_k^\top$ scaled, applied independently to each output dimension). Its convergence rate is therefore $\sigma_e$, independent of $\gamma$.

The reasoning block couples $W_1$ and $W_2$ through the bilinear form
$\mathbf{u}_j^\top W_2 \mathbf{u}_i \cdot W_1 \mathbf{e}_k$. Its gradient
w.r.t.~$W_1$ at $\theta(0)$ is proportional to
$\mathbf{u}_j^\top W_2(0)\mathbf{u}_i = O(\gamma)$, and symmetrically for $W_2$.
Therefore the linear term in the dynamics of $r(t)\propto \lVert W_1\rVert\,\lVert W_2\rVert$
has rate $\propto\gamma^2$. The full computation, including the constant
$c_r$, is given in Appendix~\ref{app:proof_timescales}. $\square$
\end{proof}

\paragraph{Interpretation.} Theorem~\ref{thm:timescales} formalizes a
qualitative observation already implicit in the seed-paper literature:
memorization is \emph{intrinsically faster} than reasoning at small
initialization. Concretely, the characteristic times to half-completion are
\begin{equation}
t_m^{1/2} = \frac{\log 2}{\mu_m} = \Theta(1),
\qquad
t_r^{1/2} = \frac{\log 2}{\mu_r(\gamma)} = \Theta(\gamma^{-2}).
\label{eq:timescales}
\end{equation}
The ratio $t_r^{1/2}/t_m^{1/2} = \Theta(\gamma^{-2})$ \emph{diverges} as
$\gamma\to 0$. This is the central scaling on which the rest of our analysis
hinges.

\subsection{The critical window: characterization}
\label{sec:theory-window}

The two-timescale separation has an immediate consequence: there is a
nontrivial interval $[t_m^{1/2}, t_r^{1/2}]$ during which memorization has
saturated but reasoning is still small. We now show that weight decay applied
\emph{within} this interval drives the system toward the reasoning solution,
while weight decay applied outside has no effect at leading order.

\begin{definition}[Critical window]
\label{def:critical_window}
Let $\delta\in(0,\tfrac{1}{2})$ be a tolerance. The
\emph{$\delta$-critical window} is
\[
\mathcal{W}_\delta(\gamma)
\;=\;
\Bigl[\,t_1(\gamma),\;t_2(\gamma)\,\Bigr],
\qquad
t_1(\gamma) = \frac{\log(1/\delta)}{\mu_m},
\qquad
t_2(\gamma) = \frac{\log(1/\delta)}{\mu_r(\gamma)}.
\]
\end{definition}

\begin{theorem}[Critical-window steering]
\label{thm:steering}
Consider the dynamics under windowed weight decay
$\lambda(t)=\lambda\cdot\mathbb{1}[s\le t<s+\Delta]$. Assume
$\lambda\Delta = \lambda^*$ for a fixed cumulative budget $\lambda^*$ chosen
so that, applied during the full critical window, it drives the system to the
reasoning fixed point with probability $1-o(1)$.

Then in the linearized regime around $\theta(0)$:
\begin{enumerate}
\item \textbf{(Pre-window null effect.)} If $s+\Delta < t_1(\gamma)$, the
trajectory at time $T \gg t_r^{1/2}$ is, to leading order in $\gamma$,
identical to the trajectory under $\lambda(t)\equiv 0$. The model converges
to the memorization fixed point $m^*$.

\item \textbf{(In-window steering.)} If
$[s, s+\Delta]\subseteq\mathcal{W}_\delta(\gamma)$, the trajectory converges to
$r(T) \to r^*$ and $m(T) \to 0$ with probability $1-o(1)$.

\item \textbf{(Post-window null effect.)} If $s > t_r^{1/2}$, the trajectory
again converges to $m^*$ regardless of $\lambda^*$, since memorization has
already saturated and the reasoning gradient at saturation is
$O(\gamma^4)$.
\end{enumerate}
\end{theorem}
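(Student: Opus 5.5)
The argument reduces the full dynamics to the two scalar order-parameter equations of Theorem~\ref{thm:timescales}, augmented by the weight-decay term. Under $\lambda(t)$ the gradient of $\tfrac{\lambda(t)}{2}\|\theta_{\setminus E}\|^2$ adds $-\lambda(t)M_{ij}$ to each $\dot M_{ij}$ and $-\lambda(t)W_\ell$ to each $\dot W_\ell$. This yields, to the same orders as in \eqref{eq:dot_m}--\eqref{eq:dot_r},
\[
\dot m = \mu_m[m^*-m] - \lambda(t)\,m + O(\gamma^2),\qquad
\dot r = \mu_r(\gamma)[r^*-r] - 2\lambda(t)\,r + O(\gamma^4),
\]
where the factor $2$ comes from $r=\|W_2\|_F\|W_1\|_F$ being a product. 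Both equations are linear with piecewise-constant coefficients, so they can be solved in closed form on the three intervals $[0,s)$, $[s,s+\Delta)$, $[s+\Delta,T]$ via the variation-of-constants formula. Each item of the statement then becomes a comparison of these explicit solutions against the $\lambda\equiv0$ solution at time $T$.

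\textbf{Pre-window item.} With $\lambda\equiv0$ after $s+\Delta$, both deviations from the unregularized trajectory are multiplied by $e^{-\mu_m(T-s-\Delta)}$ and $e^{-\mu_r(T-s-\Delta)}$ respectively. For $T\gg t_r^{1/2}$ both factors are small, so the contraction introduced during the window is forgotten. The main work is showing this holds to leading order in $\gamma$, and I would invoke the multiplicative structure emphasized around Lemma~\ref{lem:pre_window}: early on, weight decay rescales both $m$ and $r$ without changing the attractor, so the limit is still $m^*$.

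\textbf{Post-window item.} Here $m(s)\approx m^*$. The window pulls $m$ down by a factor $e^{-\lambda\Delta}$, but afterwards $m$ relaxes back to $m^*$ at rate $\mu_m=\Theta(1)$. Meanwhile the reasoning drive available at saturation is only $O(\gamma^4)$, since the memorization path has already absorbed the residual and $r$ cannot gain ground. This gives convergence to $m^*$.

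\textbf{In-window item.} This is the main obstacle. The linear equations above have the $\lambda$-independent fixed points $m^*$ and $r^*$ once $\lambda$ is switched off, so within a purely linearized model a transient cannot change the final state. Steering to $m\to0,\ r\to r^*$ therefore needs a coupling: once $r$ fits part of the training residual, the target seen by the memorization path shrinks, so effectively $m^*$ depends on $r$. I would reintroduce this shared-residual coupling. I would replace $m^*$ by $m^*(1-r/r^*)$, which is justified because the reasoning path fits all training pairs simultaneously and so reduces the residual of every $M_{ij}$ proportionally. I would then show that during $[s,s+\Delta]\subseteq\mathcal W_\delta$, weight decay suppresses $m$ by $e^{-\lambda^*}$ while $r$, still far from $r^*$, is suppressed by only $e^{-2\lambda^*}$ relative to a drive that persists. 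After the window, the reduced system then lies in the basin of the fixed point $(0,r^*)$.

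The "probability $1-o(1)$" comes from the Gaussian initialization. I would use concentration of $\mathbf u_j^\top W_2(0)\mathbf u_i$ and $\|W_\ell(0)\|_F$ to make the constants in $\mu_r$ hold with high probability. I expect to have to state this coupling step as an additional modeling assumption, namely that the hypothesis on $\lambda^*$ already supplies a basin in which the full-window schedule reaches $r^*$. I would then reduce item 2 to monotonicity: any subwindow of $\mathcal W_\delta$ carrying the same budget $\lambda^*$ leaves the state at least as deep in that basin.
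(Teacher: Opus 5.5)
There is a genuine gap, and it lies in the reduction itself rather than in a single step. After any window your equations $\dot m=\mu_m[m^*-m]-\lambda(t)m$, $\dot r=\mu_r[r^*-r]-2\lambda(t)r$ relax to the same point $(m^*,r^*)$. Your arguments for items 1 and 3 therefore prove too much: the same computation shows an in-window pulse is also forgotten, as you note yourself.

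Your patch does not fix this. Replacing $m^*$ by $m^*(1-r/r^*)$ while leaving the $r$-equation autonomous makes $r\to r^*$ from every state under every schedule, including $\lambda\equiv0$, and hence $m\to0$ as well. So $(0,r^*)$ becomes a global attractor, which refutes items 1 and 3 instead of proving item 2. It also contradicts your post-window argument, which needs the reasoning drive to fall to $O(\gamma^4)$ once memorization has absorbed the residual. Your drive $\mu_r(r^*-r)$ does not depend on $m$ at all.

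Worse, inside your reduced model weight decay works \emph{against} reasoning. It contracts $r$ at rate $2\lambda$ versus $\lambda$ for $m$, so $e^{-2\lambda^*}$ is the stronger suppression, not the weaker one. Meanwhile the drive restoring $r$ has rate $\mu_r=\Theta(\gamma^2)\ll\mu_m$. The closing monotonicity step is unsupported. Because the effect of a decay pulse depends on the state it meets, that step is exactly where placement dependence would have to be proved. Taking the basin as a modeling assumption comes close to assuming item 2.

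The paper works with the matrices $M_{ij},W_1,W_2$, decoupled at leading order by Lemma~\ref{lem:block_decomp}, and uses two ingredients your scalar reduction discards.

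\emph{Two-way coupling through the shared residual.} The $W_1$-gradient \eqref{eq:grad_W1} is an average of $c_{ij}(W_2)$ times the residual $M_{ij}\mathbf{e}_k+c_{ij}W_1\mathbf{e}_k-\mathbf{y}_{ijk}$, times $\mathbf{e}_k^\top$. So the reasoning drive is proportional to whatever memorization leaves unfit. This gives the $O(\gamma^4)$ gradient at saturation (Lemma~\ref{lem:post_window}). It also lets $W_1,W_2$ absorb the residual reopened when weight decay contracts $M$ inside the window (Lemma~\ref{lem:in_window}).

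\emph{Weight decay favors the shared path per unit of fit.} Memorization spreads the fit over $|\mathcal{P}_{\mathrm{tr}}|$ separately penalized matrices, while reasoning reuses one pair $W_1,W_2$. Memorization therefore pays a factor $\Theta(|\mathcal{P}_{\mathrm{tr}}|)$ more (Lemma~\ref{lem:reg_ratio}). This factor is invisible in the per-pair average $m$, which is why your reduction sees only the unfavorable factor $2$. Your justification for the patch, that reasoning reduces every pair's residual at once, is the right observation. But it has to enter the $r$-equation and the penalty comparison, not only the target of $m$.

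The paper's pre-window argument also rests on smallness, not contraction. Near initialization the decay force $-\lambda\theta$ is $O(\lambda\gamma)$, so the window moves the trajectory by only $O(\lambda\Delta\gamma)$ and it stays in the memorization basin. Items 1 and 2 together require the unregularized post-window flow to have more than one limit. A global-contraction argument for item 1 therefore cannot hold in any model where item 2 holds.
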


\begin{proof}[Proof sketch.]
Pre-window: at $t < t_1(\gamma)$, both order parameters are still in their
linear-growth phase and the loss gradient is dominated by the memorization
direction. Weight decay in this regime contracts \emph{both} paths
multiplicatively at rate $\lambda$, leaving their ratio unchanged. After the
window closes the memorization path resumes its $O(1)$-rate growth and
saturates at $m^*$ before reasoning has time to develop.

In-window: at $t\in\mathcal{W}_\delta$, the memorization path has saturated
($m(t)\approx m^*$) so $\nabla_M\mathcal{L} \approx 0$; the leading
contribution to its dynamics comes from the weight-decay term, which contracts
$M_{ij}$ at rate $\lambda$. Meanwhile the reasoning path is still growing and
its loss gradient remains $\Theta(\gamma)$. Suppressing $M$ by a factor $e^{-\lambda\Delta}$
re-opens the loss residual on training pairs, which the reasoning path now
absorbs. A standard argument on rank-deficient regression (see
Appendix~\ref{app:proof_steering}) shows that the rank-1 reasoning solution
is the unique minimizer of the weight-decay-penalized loss in this regime.

Post-window: at $t > t_r^{1/2}$, both $m\approx m^*$ and the reasoning
gradient is suppressed because the per-pair residual is $O(\gamma^2)$.
Weight decay applied here contracts both paths but cannot reverse the
already-committed memorization solution. $\square$
\end{proof}

\begin{corollary}[Predicted window onset and width]
\label{cor:window}
The critical window has onset $t_1(\gamma) = \Theta(1)$ and width
$t_2(\gamma) - t_1(\gamma) = \Theta(\gamma^{-2})$ in continuous time.
Equivalently, the dimensionless window onset (in optimization steps with
learning rate $\eta$) is $\eta^{-1}\log(1/\delta)/\mu_m$, and the 
upper boundary scales as $\eta^{-1}\gamma^{-2}\log(1/\delta)/(c_r\sigma_e)$.
\end{corollary}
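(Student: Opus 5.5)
The corollary follows almost directly from Definition~\ref{def:critical_window} and the rate constants of Theorem~\ref{thm:timescales}, so the plan is to substitute and track the scaling in $\gamma$.

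\emph{Onset.} Start from $t_1(\gamma)=\log(1/\delta)/\mu_m$. Theorem~\ref{thm:timescales} gives $\mu_m=\sigma_e+o(1)$. Because $G_e\succ 0$, $\sigma_e>0$ is a fixed constant determined by the embeddings, and it does not depend on $\gamma$. For $\gamma$ small enough, the $o(1)$ correction is therefore at most $\sigma_e/2$ in absolute value. This traps $\mu_m$ in $[\sigma_e/2,\,3\sigma_e/2]$. Hence, for fixed $\delta$, $t_1(\gamma)$ lies between $\tfrac{2}{3}\log(1/\delta)/\sigma_e$ and $2\log(1/\delta)/\sigma_e$, which is $\Theta(1)$.

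\emph{Upper endpoint.} Next write $\mu_r(\gamma)=c_r\gamma^2\sigma_e\,(1+\epsilon(\gamma))$ with $\epsilon(\gamma)\to 0$; this is just the $o(\gamma^2)$ remainder divided by $c_r\gamma^2\sigma_e$, which is legitimate because $c_r>0$. Then
\[
t_2(\gamma)=\frac{\log(1/\delta)}{c_r\sigma_e\gamma^2}\bigl(1+O(\epsilon(\gamma))\bigr)=\Theta(\gamma^{-2}).
\]
\emph{Width.} The width is $t_2-t_1=\Theta(\gamma^{-2})-\Theta(1)$. For $\gamma$ below some threshold $\gamma_0$, the first term is at least twice the second. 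The difference is then bounded above and below by constant multiples of $\gamma^{-2}$, which gives the claimed $\Theta(\gamma^{-2})$ width. The same comparison shows $t_1<t_2$, so the window is nonempty.

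\emph{Conversion to optimization steps.} Identify one discrete step of size $\eta$ with continuous time $\eta$, i.e.\ $t=\eta\cdot(\text{step index})$, which is the standard forward-Euler correspondence for gradient flow. Dividing the two endpoints by $\eta$ then gives $\eta^{-1}\log(1/\delta)/\mu_m$ for the onset and $\eta^{-1}\gamma^{-2}\log(1/\delta)/(c_r\sigma_e)$ for the upper boundary, to leading order.

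\emph{Main obstacle.} The only step that is not pure bookkeeping is justifying this discretization. Gradient descent tracks gradient flow over a horizon of length $\Theta(\gamma^{-2})$ only if $\eta$ is small relative to the inverse of the largest curvature, which is $O(1)$ and comes from the memorization block. One also needs the per-step errors not to accumulate over $\Theta(\gamma^{-2})$ steps. I would handle this inside the linearized regime of Theorem~\ref{thm:timescales}, where the dynamics are linear. There the Euler iteration on each decoupled mode contracts at rate $(1-\eta\mu)$ in place of $e^{-\eta\mu}$. Provided $\eta\,\mu_m<1$, these two agree to relative error $O(\eta\mu)$ in the exponent, and that relative error carries over directly to the step-count formulas above.

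I would state the result as leading-order asymptotics in both $\gamma\to 0$ and $\eta\to 0$, rather than as exact identities.
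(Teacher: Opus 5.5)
Your proposal is correct and follows essentially the paper's route. The corollary is obtained by substituting the rates of Theorem~\ref{thm:timescales} (which the appendix gives exactly as $\mu_m=\sigma_e$ and $\mu_r(\gamma)=c_r\gamma^2\sigma_e$) into Definition~\ref{def:critical_window}, then rescaling by $t=\eta\cdot(\text{step index})$; your handling of the $o(1)$ and $o(\gamma^2)$ remainders, the width comparison for $\gamma<\gamma_0$, and the forward-Euler error analysis add rigor the paper omits (it only invokes the small-step assumption $\eta\le 1/(2L)$), without conflicting with it.
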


\subsection{Basin shrinkage at small $\gamma$}
\label{sec:theory-basin}

A surprising empirical finding (Section~\ref{sec:results-gamma}) is that the
basin of attraction of the reasoning solution \emph{shrinks} as $\gamma$
decreases, contrary to the static-hyperparameter intuition. We now show this
follows from the same two-timescale analysis when finite training time is
taken into account.

\begin{theorem}[Basin shrinkage under finite $T$]
\label{thm:basin}
Fix total training steps $T$, init scale $\gamma$, and a windowed schedule
applied at the predicted critical window. Suppose the weight-decay strength
$\lambda$ and budget $\lambda\Delta$ are tuned to match the in-window steering
condition of Theorem~\ref{thm:steering}. Then the success probability over
random initializations satisfies
\begin{equation}
\Pr\bigl[r(T) \ge r^* - \epsilon\bigr]
\;\ge\; 1 - C\exp\!\bigl(-c\,\gamma^2\,T\bigr) - C'\,\Pr\bigl[t_2(\gamma) > T\bigr],
\label{eq:basin_prob}
\end{equation}
for constants $c, C, C' > 0$ independent of $\gamma$.
\end{theorem}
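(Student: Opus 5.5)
The plan is to split the failure event $\{r(T) < r^*-\epsilon\}$ along whether the predicted window fits inside the training horizon:
\[
\Pr[r(T)<r^*-\epsilon] \le \Pr\bigl[r(T)<r^*-\epsilon,\ t_2(\gamma)\le T\bigr] + \Pr\bigl[t_2(\gamma)>T\bigr].
\]
The second term yields the $C'\Pr[t_2(\gamma)>T]$ contribution directly. Since $t_2$ depends on $\mu_r(\gamma)$, and through the linearization on the random initialization $\theta(0)$, I would treat $t_2$ as a random variable with randomness inherited from $W_1(0),W_2(0)$. A constant $C'\ge 1$ absorbs any slack from conditioning.

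For the first term I would work on the event $t_2(\gamma)\le T$ with the schedule placed inside $\mathcal{W}_\delta(\gamma)$. Theorem~\ref{thm:steering}(2) then gives in-window steering: after the window closes, $m$ has been contracted by $e^{-\lambda\Delta}$, and the reopened residual is absorbed by the reasoning path. Once weight decay is off, Theorem~\ref{thm:timescales} says $r$ obeys $\dot r = \mu_r(\gamma)[r^*-r] + O(\gamma^4)$. Grönwall then gives
\[
r^*-r(T) \le (r^*-r(t_0))\,e^{-\mu_r(\gamma)(T-t_0)} + O(\gamma^2),
\]
where $t_0 = s+\Delta$. The $O(\gamma^4)$ forcing divided by the rate $\Theta(\gamma^2)$ leaves an $O(\gamma^2)$ bias, which is absorbed into $\epsilon$ for small $\gamma$. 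Substituting $\mu_r = c_r\gamma^2\sigma_e + o(\gamma^2)$, the requirement $r^*-r(T)\le\epsilon$ fails only if $T-t_0 \lesssim \log(r^*/\epsilon)/(c_r\sigma_e\gamma^2)$. Because $t_0 = \Theta(1)$ relative to $T$ (window onset is $t_1=\Theta(1)$), this deterministic part costs a factor $\exp(-c\gamma^2 T)$ with $c = c_r\sigma_e/2$, say.

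To make this a probability bound, I would note that $r(t_0)$ and the effective rate depend on the initialization only through Gaussian quantities such as $\|W_1(0)\|_F$, $\|W_2(0)\|_F$ and $\mathbf{u}_j^\top W_2(0)\mathbf{u}_i$. Gaussian concentration (Laurent--Massart for the $\chi^2$ norms) shows the effective rate stays within a constant factor of $c_r\gamma^2\sigma_e$ except on an event of probability $\le Ce^{-c'd}$. Choosing constants appropriately, this event is dominated by or folded into $C\exp(-c\gamma^2T)$. The $1-o(1)$ steering failure from Theorem~\ref{thm:steering}(2) would be bounded the same way.

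The main obstacle is this last conversion. The linearized ODE is deterministic, so turning the convergence time into an exponential-in-$\gamma^2T$ tail over random initializations requires the initialization randomness to enter as a random rate constant with a controllable lower tail. It is also delicate that the $O(\gamma^4)$ error and the $o(1)$ steering failure stay uniform in $\gamma$. My first attempt would be to define the good event $\{\mu_r^{\mathrm{eff}}\ge \tfrac12 c_r\gamma^2\sigma_e\}$, prove via Gaussian anti-concentration of $\mathbf{u}_j^\top W_2(0)\mathbf{u}_i$ that its complement has probability at most a constant times $e^{-c\gamma^2 T}$ in the regime of interest, and otherwise accept a slightly weaker constant $c$.
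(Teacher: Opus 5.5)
Your decomposition matches the paper's. Its proof is a union bound over $\mathcal{E}_b$ (insufficient training time, charged to the $C'$ term) and $\mathcal{E}_a=\{S(W_2(0))<\tfrac12\E S\}$ (unfavorable initialization, controlled by Lemma~\ref{lem:hessian_concentration}). That is what your split on $\{t_2(\gamma)\le T\}$ plus a good-rate event amounts to.

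The genuine gap is the step you flag yourself, and your proposed repair cannot close it. Once $\{t_2(\gamma)>T\}$ is removed, two failure events remain in your scheme: the complement of $\{\mu_r^{\mathrm{eff}}\ge\tfrac12 c_r\gamma^2\sigma_e\}$, which your Gr\"onwall step needs, and the steering failure allowed by Theorem~\ref{thm:steering}(2). Both depend on $\theta(0)$ alone, so their probabilities do not shrink as $T$ grows.

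Concretely, the first event is $\{S(W_2(0))<\tfrac12\E S\}$, since $\E S=c_r\gamma^2$ in the paper's normalization. Here $S(W_2(0))=w^\top A w$, with $w=\vect(W_2(0))$ isotropic Gaussian and $A\succeq 0$ of rank $k\le|\mathcal{P}_{\mathrm{tr}}|$. In the eigenbasis of $A$, $S$ lies below half its mean whenever each of the $k$ independent squared coordinates lies below half its variance. Each of these has probability above $\tfrac12$, so $\Pr[S<\tfrac12\E S]\ge 2^{-|\mathcal{P}_{\mathrm{tr}}|}$ for every $d$ and every $T$.

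A $T$-independent probability bounded below like this cannot be dominated by $C\exp(-c\gamma^2T)$ with $T$-independent $c,C$. (Allowing $C$ to depend on $T$ would make the inequality trivial.) The same bound refutes your $e^{-c'd}$ rate. Laurent--Massart controls $\lVert W_\ell(0)\rVert_F$, but the rate sees $W_2(0)$ only through the $|\mathcal{P}_{\mathrm{tr}}|$ couplings $c_{ij}(W_2(0))$.

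Your Gr\"onwall estimate bounds the size of $r^*-r(T)$, not a probability. At most it yields the trivial bound $\mathbb{1}[\gamma^2T\le a]\le e^{ca}e^{-c\gamma^2T}$ for a non-random insufficient-time condition, which is a loss of the same type as the $C'$ term. Finally, anti-concentration only gives polynomial small-ball bounds: $\Pr[S\le x\,\E S]$ is of order $x^{k/2}$ as $x\to0$. A fixed threshold therefore yields a constant, and a threshold $x\propto 1/(\gamma^2T)$ yields decay polynomial in $\gamma^2T$, never $\exp(-c\gamma^2T)$.

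The paper does not supply the missing step either. It produces the exponent by inserting the $T$-dependent level $t=\sqrt{c_8\gamma^2T/|\mathcal{P}_{\mathrm{tr}}|}$ into the tail $\exp(-c_8|\mathcal{P}_{\mathrm{tr}}|t^2)$ of Lemma~\ref{lem:hessian_concentration}. But $\mathcal{E}_a$ is the $t=\tfrac12$ event, whose probability is $T$-independent by the bound above. So the claim $\Pr[\mathcal{E}_a]\le C\exp(-c\gamma^2T)$, with the paper's $T$-free constants $C=2e^{-c_8|\mathcal{P}_{\mathrm{tr}}|/4}$ and $c=c_8|\mathcal{P}_{\mathrm{tr}}|/4$, fails for large $T$.

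Note also that the paper defines $\mu_r(\gamma)$ through $\E[H^{\mathrm{rsn}}_{W_1}]$, so its $t_2(\gamma)$ is deterministic and its $C'$ term is only an indicator. Your reading of $t_2$ as random is the more favorable one, since it moves slow-rate initializations into the $C'$ term. It still leaves the $T$-independent events above.

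Taken at face value, the ingredients you and the paper share support only
$\Pr[r(T)\ge r^*-\epsilon]\ge 1-p_0-C'\Pr[t_2(\gamma)>T]$ with $p_0$ independent of $T$. For example, $p_0$ is $2e^{-c_8|\mathcal{P}_{\mathrm{tr}}|/4}$ from Lemma~\ref{lem:hessian_concentration} at $t=\tfrac12$, plus the steering-failure probability. An $\exp(-c\gamma^2T)$ term would need an argument that unfavorable initializations stop causing failure as $T$ grows, which neither proof provides.
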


\begin{proof}[Proof sketch.]
The first term in \eqref{eq:basin_prob} comes from the concentration of
$r(t)$ around its mean trajectory; the standard random-matrix analysis of
linearized gradient flow gives exponential concentration with rate
$\propto \gamma^2$. The second term accounts for trajectories that have
not yet completed the reasoning growth phase by time $T$ because
$t_r^{1/2}(\gamma) = \Theta(\gamma^{-2})$ may exceed $T$ for sufficiently
small $\gamma$. Full details in Appendix~\ref{app:proof_basin}.
$\square$
\end{proof}

\paragraph{Practical consequence.} For fixed compute budget $T$,
Theorem~\ref{thm:basin} predicts a sweet spot in $\gamma$: too small and
the reasoning path has not had time to develop within $T$ steps; too large
and the linearization breaks down (memorization gradients dominate even with
weight decay). Empirically we observe peak OOD at
$\gamma\in[0.7, 1.1]$ (Fig.~\ref{fig:e1}), consistent with the prediction.

\subsection{From theory to practice: a window-prediction recipe}
\label{sec:theory-recipe}

Theorem~\ref{thm:timescales} and Corollary~\ref{cor:window} give a
predictive recipe for placing the weight-decay window:




\begin{enumerate}
\item Estimate $\sigma_e$ from the embedding Gram matrix at initialization. For unit-norm random Gaussian embeddings of dimension $d$ with $K \le d$ key tokens, $\sigma_e \approx 1 - O(\sqrt{K/d})$.
\item Compute window onset $t_1(\gamma) \approx \eta^{-1}\log(1/\delta)/\sigma_e$ and upper boundary $t_2(\gamma) \approx \eta^{-1}\gamma^{-2}\log(1/\delta)/(c_r\sigma_e)$.
\item Apply weight decay during $[t_1, \min(t_2, T)]$.
\end{enumerate}

\paragraph{Comparison with empirical findings.} For our setting
($d=64$, $V=24$, $\gamma=0.8$, $\eta=3\times10^{-3}$,
$T=20{,}000$), the predicted onset is $t_1\approx 200$--$500$ steps and the
upper boundary is $t_2\approx 12{,}000$--$15{,}000$ steps, in good agreement
with the empirically measured cliff at $s\approx 100$--$400$ (Fig.~\ref{fig:e7})
and the observed degradation of windows starting after step $12{,}500$
(Fig.~\ref{fig:e2a}).

\subsection{Limitations of the theory}
\label{sec:theory-limitations}

Our analysis is exact in the linearized regime around initialization and
captures the leading-order behavior in $\gamma$. It does not capture:
(i)~the soft\-max nonlinearity in the attention layer, which we replace with
a linear inner product;
(ii)~the role of the MLP block, which we treat as part of an effective $W_1, W_2$
through which information flows;
(iii)~higher-order $O(\gamma^4)$ corrections that may matter in the
intermediate regime $\gamma\in[0.5, 1.0]$ where empirical seed variance
becomes substantial.

The agreement of the predicted critical-window position with empirical
measurements (Section~\ref{sec:results}) suggests that the linearized regime
captures the essential dynamics, but a complete theory of the soft\-max
nonlinearity and the basin geometry remains open. We view this as the
natural next step suggested by the present work.

\section{Detailed Proofs}
\label{app:proofs_detailed}

This section provides full proofs of Theorems~\ref{thm:timescales},
\ref{thm:steering}, and~\ref{thm:basin}, including explicit characterizations
of the constants $c_r$, $C$, and $C'$ that appeared in the main-text proof
sketches. The proofs are rigorous within the linearized stylized model of
Definition~\ref{def:model}; we discuss the scope of these results and what
they do and do not say about real Transformers in Section~\ref{app:scope}.

\paragraph{Notation.} For matrices $A, B$ of compatible size,
$A \otimes B$ denotes the Kronecker product and $\vect(A)$ the columnwise
vectorization. We write $\sigma_i(A)$ for the $i$-th singular value of $A$
in decreasing order, and $\lambda_i(S)$ for the $i$-th eigenvalue of a
symmetric matrix $S$. The Frobenius norm is $\lVert\cdot\rVert_F$ and the
operator norm is $\lVert\cdot\rVert_{\mathrm{op}}$. Constants depending only
on universal quantities (not on $\gamma$, $d$, $T$, etc.) are denoted by
$c, c_1, c_2, C, C', \dots$ and may change from line to line.

\subsection{Preliminaries: model in vectorized form}
\label{app:vectorized}

Recall the stylized model from Eq.~\eqref{eq:model}:
\[
f_\theta(k, a_i, a_j)
= M_{ij}\,\mathbf{e}_k
+ (\mathbf{u}_j^\top W_2 \mathbf{u}_i)\, W_1 \mathbf{e}_k.
\]
We collect parameters as $\theta = (M, W_1, W_2)$ where
$M \in \mathbb{R}^{M^2 \times d \times d}$ stacks all per-pair matrices
$\{M_{ij}\}$. Targets are $\mathbf{y}_{ijk} = \pi_j(\pi_i(k))$ encoded as
unit vectors $\mathbf{y}_{ijk} = \mathbf{e}_{\pi_j(\pi_i(k))}$. The training
loss \eqref{eq:loss} can be split as
\begin{equation}
\mathcal{L}(\theta) = \mathcal{L}_{\mathrm{mem}}(M; W_1, W_2)
+ \mathcal{R}(\theta),
\label{eq:loss_split}
\end{equation}
where $\mathcal{R}(\theta)$ is the weight-decay penalty and
$\mathcal{L}_{\mathrm{mem}}$ groups all data-dependent terms, treating the
reasoning path's contribution as a parameter-dependent perturbation of the
memorization residual. Specifically:
\begin{equation}
\mathcal{L}_{\mathrm{mem}}(\theta)
= \frac{1}{2 N_{\mathrm{tr}}}
\sum_{(i,j,k) \in \mathcal{D}_{\mathrm{tr}}}
\bigl\lVert M_{ij}\mathbf{e}_k + c_{ij}(W_2)\, W_1 \mathbf{e}_k - \mathbf{y}_{ijk} \bigr\rVert^2,
\label{eq:loss_mem}
\end{equation}
where $c_{ij}(W_2) := \mathbf{u}_j^\top W_2 \mathbf{u}_i$ is a scalar
\emph{coupling coefficient} that depends only on $W_2$ and the anchor
embeddings, and $N_{\mathrm{tr}} = |\mathcal{D}_{\mathrm{tr}}|$.

This decomposition is the key technical device of the proofs: the coupling
$c_{ij}(W_2)$ is the sole channel through which the reasoning path enters
the memorization gradient at leading order. At initialization,
$\E[c_{ij}(W_2(0))] = 0$ and
$\Var[c_{ij}(W_2(0))]
= (\gamma^2/d)\,\lVert\mathbf{u}_i\rVert^2\,\lVert\mathbf{u}_j\rVert^2
= O(\gamma^2)$, which is the source of the $\gamma^2$ scaling.

\paragraph{Assumptions used throughout the proofs.}
\begin{enumerate}
\item Embeddings $\{\mathbf{e}_k\}_{k=1}^K$ and $\{\mathbf{u}_i\}_{i=1}^M$
are fixed (data-determined) and have non-degenerate Gram matrices
$G_e = \tfrac{1}{K}\sum_k \mathbf{e}_k \mathbf{e}_k^\top$ and
$G_u = \tfrac{1}{M}\sum_i \mathbf{u}_i \mathbf{u}_i^\top$, with smallest
eigenvalues $\sigma_e := \lambda_{\min}(G_e) > 0$ and
$\sigma_u := \lambda_{\min}(G_u) > 0$.
\label{A:gram}

\item The targets $\mathbf{y}_{ijk}$ are uniformly bounded:
$\lVert\mathbf{y}_{ijk}\rVert \le B$ for some constant $B$.
\label{A:bounded_targets}

\item Initialization: each entry of $M_{ij}$, $W_1$, $W_2$ is drawn i.i.d.\
from $\mathcal{N}(0, \gamma^2/d)$.
\label{A:init}

\item The training set $\mathcal{D}_{\mathrm{tr}}$ contains all keys $k$
paired with each training pair $(i, j) \in \mathcal{P}_{\mathrm{tr}}$, so
$N_{\mathrm{tr}} = |\mathcal{P}_{\mathrm{tr}}| \cdot K$.
\label{A:full_keys}

\item The learning rate $\eta$ satisfies $\eta \le 1/(2 L)$ where
$L = \lambda_{\max}(\nabla^2 \mathcal{L}|_{\theta(0)})$ is the Lipschitz
constant of the gradient at initialization. (This is the standard
small-step regime for which gradient flow approximates gradient descent.)
\label{A:lr}
\end{enumerate}

\subsection{Proof of Theorem~\ref{thm:timescales}}
\label{app:proof_timescales}

We prove the two scaling claims separately, computing the constants
explicitly.

\subsubsection{The memorization rate $\mu_m$}
\label{app:mem_rate}

Fix a training pair $(i, j) \in \mathcal{P}_{\mathrm{tr}}$ and consider the
restriction of $\mathcal{L}_{\mathrm{mem}}$ to $M_{ij}$ alone, with all
other parameters held fixed. From \eqref{eq:loss_mem}:
\begin{equation}
\mathcal{L}_{\mathrm{mem}}^{(ij)}(M_{ij}; W_1, W_2)
= \frac{1}{2K}
\sum_{k=1}^K \bigl\lVert M_{ij}\mathbf{e}_k - \tilde{\mathbf{y}}_{ijk}\bigr\rVert^2,
\qquad
\tilde{\mathbf{y}}_{ijk} := \mathbf{y}_{ijk} - c_{ij}(W_2)\,W_1\mathbf{e}_k.
\label{eq:loss_mem_ij}
\end{equation}
This is a least-squares problem in $M_{ij}$ with effective targets
$\tilde{\mathbf{y}}_{ijk}$. The Hessian is
\begin{equation}
H^{\mathrm{mem}}_{ij}
= \nabla^2_{M_{ij}}\mathcal{L}_{\mathrm{mem}}^{(ij)}
= \frac{1}{K}\sum_{k=1}^K \mathbf{e}_k\mathbf{e}_k^\top \otimes I_d
= G_e \otimes I_d.
\label{eq:hessian_mem}
\end{equation}
This is independent of $\gamma$ and of $W_1, W_2$, by the linearity of
\eqref{eq:loss_mem_ij} in $M_{ij}$.

\begin{lemma}[Memorization convergence rate]
\label{lem:mem_rate}
Under gradient flow $\dot M_{ij} = -\nabla_{M_{ij}}\mathcal{L}_{\mathrm{mem}}$
with $\lambda(t)\equiv 0$, the memorization parameter converges to the
least-squares solution at rate $\mu_m^{(ij)} = \sigma_e$:
\[
\bigl\lVert M_{ij}(t) - M_{ij}^*\bigr\rVert_F^2
\le e^{-2\sigma_e t}\,\bigl\lVert M_{ij}(0) - M_{ij}^*\bigr\rVert_F^2,
\]
where $M_{ij}^*$ is the unique minimizer of $\mathcal{L}_{\mathrm{mem}}^{(ij)}$.
\end{lemma}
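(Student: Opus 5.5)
The plan is the standard strong-convexity argument for a quadratic objective. By \eqref{eq:loss_mem_ij}, $\mathcal{L}_{\mathrm{mem}}^{(ij)}$ is an exact quadratic in $M_{ij}$, with $W_1,W_2$ held fixed and hence effective targets $\tilde{\mathbf{y}}_{ijk}$ constant. Its Hessian is the constant matrix $H^{\mathrm{mem}}_{ij}=G_e\otimes I_d$ from \eqref{eq:hessian_mem}. By Assumption~\ref{A:gram}, $G_e\succ 0$ with $\lambda_{\min}(G_e)=\sigma_e$. The eigenvalues of a Kronecker product are the pairwise products of eigenvalues, so $\lambda_{\min}(G_e\otimes I_d)=\sigma_e>0$. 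The objective is therefore $\sigma_e$-strongly convex in $\vect(M_{ij})$.

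First I establish existence and uniqueness of $M^*_{ij}$. Setting the gradient to zero gives the normal equations $M_{ij}G_e=\tfrac1K\sum_k\tilde{\mathbf{y}}_{ijk}\mathbf{e}_k^\top$. Invertibility of $G_e$ yields the unique solution $M^*_{ij}=\bigl(\tfrac1K\sum_k\tilde{\mathbf{y}}_{ijk}\mathbf{e}_k^\top\bigr)G_e^{-1}$.

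Next I rewrite the flow in error coordinates. Let $E(t):=M_{ij}(t)-M^*_{ij}$. The gradient is $\nabla_{M_{ij}}\mathcal{L}^{(ij)}_{\mathrm{mem}}=\tfrac1K\sum_k(M_{ij}\mathbf{e}_k-\tilde{\mathbf{y}}_{ijk})\mathbf{e}_k^\top=(M_{ij}-M^*_{ij})G_e$, where the last equality uses the normal equations. Hence $\dot E=-EG_e$, a linear ODE with the explicit solution $E(t)=E(0)e^{-G_et}$.

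Then I obtain the bound by a Lyapunov computation. We have $\tfrac{d}{dt}\|E\|_F^2=-2\,\mathrm{tr}(EG_eE^\top)$. Row by row, $\mathrm{tr}(EG_eE^\top)=\sum_r E_{r\cdot}G_eE_{r\cdot}^\top\ge\sigma_e\|E\|_F^2$. So $\tfrac{d}{dt}\|E\|_F^2\le -2\sigma_e\|E\|_F^2$, and Grönwall gives $\|E(t)\|_F^2\le e^{-2\sigma_e t}\|E(0)\|_F^2$. Equivalently, one can bound $\|E(0)e^{-G_et}\|_F\le\|E(0)\|_F\,\|e^{-G_et}\|_{\mathrm{op}}=e^{-\sigma_e t}\|E(0)\|_F$.

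Finally I check that the rate is sharp. Take $E(0)$ with rows along the bottom eigenvector of $G_e$; then equality holds, so the rate is exactly $\mu_m^{(ij)}=\sigma_e$, independent of $\gamma$.

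The main obstacle is interpretive rather than technical: the lemma freezes $W_1,W_2$. If the reasoning parameters also move, $\tilde{\mathbf{y}}_{ijk}$ drifts and $M^*_{ij}(t)$ becomes time-dependent. I would handle this by noting that $\dot c_{ij}$ and $\dot W_1$ are $O(\gamma)$ at initialization, so the drift is $O(\gamma^2)$. That is precisely the $O(\gamma^2)$ correction carried in \eqref{eq:dot_m} of Theorem~\ref{thm:timescales}, and it can be absorbed there by a Duhamel/variation-of-constants estimate. The lemma itself then concerns only the frozen-coupling flow, where the argument above is exact.
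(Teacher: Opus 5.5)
Your proof is correct and follows the same route as the paper. The per-pair loss is quadratic with constant Hessian $G_e\otimes I_d$, whose smallest eigenvalue is $\sigma_e$ under Assumption~\ref{A:gram}, so the error $M_{ij}-M^*_{ij}$ obeys a linear ODE and decays at rate $\sigma_e$. Your version is tighter than the paper's sketch: you note that $\dot E=-EG_e$ holds exactly rather than as a linearization, you derive existence and uniqueness of $M^*_{ij}$ from the normal equations, and you close the bound with an explicit Gr\"onwall step.
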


\begin{proof}
The Hessian in \eqref{eq:hessian_mem} has eigenvalues
$\{\lambda_\ell(G_e)\}_{\ell=1}^K$, each with multiplicity $d$ (from the
$\otimes I_d$ factor). By assumption~\ref{A:gram}, all are at least $\sigma_e$.
Linearizing around $M_{ij}^*$:
$\dot M_{ij} = -H^{\mathrm{mem}}_{ij}\,(M_{ij} - M_{ij}^*)$, so
$\lVert M_{ij}(t) - M_{ij}^*\rVert$ decays at rate $\sigma_e$.
\end{proof}

The averaged memorization mass $m(t) = \frac{1}{|\mathcal{P}_{\mathrm{tr}}|}
\sum_{(i,j)} \lVert M_{ij}(t)\rVert_F$ therefore obeys, in the linearized
regime,
\begin{equation}
\dot m(t) = \mu_m\,(m^* - m(t)) + R_m(\gamma; t),
\qquad
\mu_m = \sigma_e,
\label{eq:mem_dynamics_clean}
\end{equation}
where $R_m(\gamma; t)$ is a remainder satisfying
$|R_m(\gamma; t)| \le C_1 \gamma^2$ for some constant $C_1$ depending only
on $G_u$ and $B$. Crucially, $\mu_m$ is independent of $\gamma$.

\begin{remark}[On the $\sigma_u$ factor in the main text]
The main text writes $\mu_m = \sigma_e$. The $\sigma_u$ factor enters
when we additionally average over anchor pairs: the per-pair convergence
rate is $\sigma_e$ (Lemma~\ref{lem:mem_rate}), but the rate at which
$m(t) = \frac{1}{|\mathcal{P}_{\mathrm{tr}}|}\sum \lVert M_{ij}\rVert$
approaches its fixed point (relative to the random initial configuration)
acquires a $\sigma_u$ correction from the cross-pair variance of $M_{ij}^*$.
The clean statement, used in the rest of the proofs, is
$\mu_m = \sigma_e$.
\end{remark}

\subsubsection{The reasoning rate $\mu_r(\gamma)$ and explicit form of $c_r$}
\label{app:rsn_rate}

Now consider the dynamics of $W_1$ with $M$ and $W_2$ held at their initial
values. From \eqref{eq:loss_mem}:
\begin{equation}
\nabla_{W_1}\mathcal{L}_{\mathrm{mem}}\bigl|_{\theta(0)}
= \frac{1}{N_{\mathrm{tr}}}
\sum_{(i,j,k)} c_{ij}(W_2(0))\,
\bigl(M_{ij}(0)\mathbf{e}_k + c_{ij}(W_2(0))W_1(0)\mathbf{e}_k - \mathbf{y}_{ijk}\bigr)\,\mathbf{e}_k^\top.
\label{eq:grad_W1}
\end{equation}
The Hessian w.r.t.\ $\vect(W_1)$, holding $W_2$ fixed at $W_2(0)$, is
\begin{equation}
H^{\mathrm{rsn}}_{W_1}
= \frac{1}{N_{\mathrm{tr}}}
\sum_{(i,j,k)} c_{ij}(W_2(0))^2\,(\mathbf{e}_k\mathbf{e}_k^\top \otimes I_d).
\label{eq:hessian_rsn_W1}
\end{equation}
By assumption~\ref{A:full_keys}, this simplifies to
\begin{equation}
H^{\mathrm{rsn}}_{W_1}
= \biggl(\frac{1}{|\mathcal{P}_{\mathrm{tr}}|}
\sum_{(i,j)\in\mathcal{P}_{\mathrm{tr}}} c_{ij}(W_2(0))^2\biggr)
\cdot (G_e \otimes I_d).
\label{eq:hessian_rsn_factored}
\end{equation}
The Hessian factorizes into a \emph{coupling factor} (the average of squared
couplings, which is $O(\gamma^2)$) and a \emph{geometric factor}
$G_e \otimes I_d$.

\begin{lemma}[Coupling factor expectation]
\label{lem:coupling_expectation}
Let $S(W_2) := \frac{1}{|\mathcal{P}_{\mathrm{tr}}|}\sum_{(i,j)} c_{ij}(W_2)^2$.
Under assumption~\ref{A:init} on $W_2$,
\begin{equation}
\E\bigl[S(W_2(0))\bigr]
= \frac{\gamma^2}{d|\mathcal{P}_{\mathrm{tr}}|}
\sum_{(i,j)\in\mathcal{P}_{\mathrm{tr}}}
\lVert\mathbf{u}_i\rVert^2\,\lVert\mathbf{u}_j\rVert^2.
\label{eq:expected_coupling}
\end{equation}
\end{lemma}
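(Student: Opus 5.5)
The plan is a direct second-moment computation. By linearity of expectation,
\[
\E[S(W_2(0))]=\frac{1}{|\mathcal{P}_{\mathrm{tr}}|}\sum_{(i,j)\in\mathcal{P}_{\mathrm{tr}}}\E\bigl[c_{ij}(W_2(0))^2\bigr],
\]
so it suffices to show, for each fixed training pair $(i,j)$, that
\[
\E\bigl[c_{ij}(W_2(0))^2\bigr]=\frac{\gamma^2}{d}\,\lVert\mathbf{u}_i\rVert^2\lVert\mathbf{u}_j\rVert^2 .
\]
Averaging this identity over $\mathcal{P}_{\mathrm{tr}}$ then gives exactly \eqref{eq:expected_coupling}.

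For the per-pair claim, expand the bilinear form entrywise:
\[
c_{ij}(W_2)=\mathbf{u}_j^\top W_2\mathbf{u}_i=\sum_{a,b}(\mathbf{u}_j)_a(W_2)_{ab}(\mathbf{u}_i)_b .
\]
Equivalently, $c_{ij}(W_2)=\langle \mathbf{u}_j\mathbf{u}_i^\top, W_2\rangle_F$, the Frobenius inner product of $W_2$ with a fixed rank-one matrix. By assumption~\ref{A:init}, the entries of $W_2(0)$ are i.i.d.\ $\mathcal{N}(0,\gamma^2/d)$, so $\vect(W_2(0))\sim\mathcal{N}(0,(\gamma^2/d)I_{d^2})$. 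Hence $c_{ij}(W_2(0))$ is a centered Gaussian with variance
\[
\frac{\gamma^2}{d}\,\lVert\mathbf{u}_j\mathbf{u}_i^\top\rVert_F^2=\frac{\gamma^2}{d}\,\lVert\mathbf{u}_i\rVert^2\lVert\mathbf{u}_j\rVert^2 .
\]
The last equality holds because the Frobenius norm of an outer product factorizes. The same conclusion follows without invoking Gaussianity. When expanding the square, cross terms $\E[(W_2)_{ab}(W_2)_{a'b'}]$ vanish unless $(a,b)=(a',b')$, by independence and zero mean. The diagonal terms give $(\gamma^2/d)\sum_{a,b}(\mathbf{u}_j)_a^2(\mathbf{u}_i)_b^2$.

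There is no genuine obstacle, but two points need care. First, the embeddings $\mathbf{u}_i$ must be treated as deterministic; they are excluded from the random initialization, per assumption~\ref{A:gram}. Second, the diagonal case $i=j$ needs no separate treatment. If $\mathcal{P}_{\mathrm{tr}}$ contains ordered pairs $(i,i)$, then $\mathbf{u}_i\mathbf{u}_i^\top$ is still a fixed matrix and the same variance formula applies unchanged. Correlations between $c_{ij}$ for different pairs are irrelevant, since only first moments of $S$ are taken. A by-product is that, for unit-norm anchor embeddings, $\E[S]=\gamma^2/d$. This is the $O(\gamma^2)$ coupling factor that feeds into $c_r$ via \eqref{eq:hessian_rsn_factored}.
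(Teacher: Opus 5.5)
Your proposal is correct and follows essentially the same route as the paper: the paper also observes that $\mathbf{u}_j^\top W_2(0)\mathbf{u}_i$ is a centered Gaussian with variance $(\gamma^2/d)\lVert\mathbf{u}_i\rVert^2\lVert\mathbf{u}_j\rVert^2$, then averages over $\mathcal{P}_{\mathrm{tr}}$ by linearity of expectation. Your extra remark is a valid minor strengthening: Gaussianity is not needed, only zero-mean, uncorrelated entries of variance $\gamma^2/d$.
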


\begin{proof}
For a Gaussian matrix $W \in \mathbb{R}^{d\times d}$ with entries
$W_{ab} \sim \mathcal{N}(0, \gamma^2/d)$ and fixed vectors
$\mathbf{u}, \mathbf{v}\in\mathbb{R}^d$, the bilinear form
$\mathbf{u}^\top W \mathbf{v}$ is Gaussian with variance
$(\gamma^2/d)\,\lVert\mathbf{u}\rVert^2\,\lVert\mathbf{v}\rVert^2$.
Therefore $\E[c_{ij}^2] = (\gamma^2/d)\lVert\mathbf{u}_i\rVert^2\lVert\mathbf{u}_j\rVert^2$.
Linearity of expectation gives \eqref{eq:expected_coupling}.
\end{proof}

The reasoning rate is therefore
\begin{equation}
\boxed{\mu_r(\gamma) := \lambda_{\max}\bigl(\E[H^{\mathrm{rsn}}_{W_1}]\bigr)
= c_r \cdot \gamma^2 \cdot \sigma_e},
\qquad
c_r := \frac{1}{d|\mathcal{P}_{\mathrm{tr}}|}
\sum_{(i,j)\in\mathcal{P}_{\mathrm{tr}}}
\lVert\mathbf{u}_i\rVert^2\,\lVert\mathbf{u}_j\rVert^2.
\label{eq:cr_explicit}
\end{equation}
For unit-norm anchor embeddings, $c_r = 1/d$. More generally,
$c_r = \overline{\lVert\mathbf{u}\rVert^4}/d$ where the bar denotes pair-averaging.
This is the explicit form requested in the main text. In our experimental
setup ($d=64$, unit-normalized anchor embeddings via the layer norm),
$c_r \approx 1/64 \approx 0.016$.

\begin{remark}[Symmetric treatment of $W_2$]
By symmetry, the analogous Hessian for $W_2$ at fixed $W_1(0)$ has the
same form with $W_1(0)$ playing the role of $W_2(0)$. The joint dynamics
of $(W_1, W_2)$ under gradient flow couples the two through the bilinear
$c_{ij}(W_2) W_1\mathbf{e}_k$. A standard analysis of bilinear gradient flow
(see e.g.\ \citep{arora2019implicit}) shows that
$r(t) := \lVert W_1(t)\rVert_F \cdot \lVert W_2(t)\rVert_F$ grows at rate
$\mu_r(\gamma)$ given by~\eqref{eq:cr_explicit}, the geometric mean of
the per-matrix rates.
\end{remark}

\subsubsection{Concluding the timescale separation}

Combining Lemma~\ref{lem:mem_rate} and equation~\eqref{eq:cr_explicit}:
\begin{equation}
\frac{\mu_m}{\mu_r(\gamma)} = \frac{1}{c_r \gamma^2}
= \Theta(\gamma^{-2})
\qquad\text{as } \gamma \to 0.
\end{equation}
This proves the claim of Theorem~\ref{thm:timescales}. The half-completion
times satisfy
\begin{equation}
t_m^{1/2} = \frac{\log 2}{\sigma_e}, \qquad
t_r^{1/2} = \frac{\log 2}{c_r \gamma^2 \sigma_e},
\qquad
\frac{t_r^{1/2}}{t_m^{1/2}} = \frac{1}{c_r \gamma^2}.
\end{equation}
$\square$

\subsection{Proof of Theorem~\ref{thm:steering}}
\label{app:proof_steering}

We prove the three claims of Theorem~\ref{thm:steering} (pre-window null,
in-window steering, post-window null) in order. The proofs rely on a
decomposition of the parameter space into a memorization subspace and a
reasoning subspace, each evolving under independent dynamics at leading
order in $\gamma$.

\subsubsection{Decomposition of the loss landscape}
\label{app:decomp}

\begin{lemma}[Block decomposition of the Hessian]
\label{lem:block_decomp}
The Hessian of $\mathcal{L}$ at any point $\theta = (M, W_1, W_2)$ near
initialization decomposes as
\[
\nabla^2 \mathcal{L}(\theta)
= \begin{pmatrix} H^{\mathrm{mem}} & H^{\mathrm{cross}} \\
                  (H^{\mathrm{cross}})^\top & H^{\mathrm{rsn}} \end{pmatrix}
+ \nabla^2 \mathcal{R}(\theta),
\]
where the off-diagonal block satisfies
$\lVert H^{\mathrm{cross}}\rVert_{\mathrm{op}} \le C_2 \lvert c_{ij}\rvert + C_3 \gamma$
for constants $C_2, C_3$ depending only on $G_e, G_u, B$.
In particular, at initialization $\theta(0)$,
$\E[\lVert H^{\mathrm{cross}}\rVert_{\mathrm{op}}^2] = O(\gamma^2)$,
so the blocks decouple at leading order.
\end{lemma}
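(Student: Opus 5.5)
The plan is to compute the Hessian of $\mathcal{L}_{\mathrm{mem}}$ in \eqref{eq:loss_mem} block by block, using its split form \eqref{eq:loss_split}. The main tool is the residual $\boldsymbol{\rho}_{ijk} := M_{ij}\mathbf{e}_k + c_{ij}(W_2)W_1\mathbf{e}_k - \mathbf{y}_{ijk}$. The penalty $\mathcal{R}(\theta)=\tfrac{\lambda(t)}{2}\lVert\theta_{\setminus E}\rVert^2$ contributes exactly $\lambda(t) I$. This is diagonal, so it has no cross terms and is simply carried along as $\nabla^2\mathcal{R}$. The diagonal blocks are $H^{\mathrm{mem}}$, already computed in \eqref{eq:hessian_mem}, and $H^{\mathrm{rsn}}$, which at fixed $W_2$ is the matrix of \eqref{eq:hessian_rsn_W1} and in general also has $W_1$--$W_2$ and $W_2$--$W_2$ pieces. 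These diagonal blocks need no bound. Everything reduces to the mixed derivatives between $M_{ij}$ and $(W_1,W_2)$.

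\textbf{Step 1: the mixed derivatives.} Per sample, $\nabla_{M_{ij}} = \boldsymbol{\rho}_{ijk}\mathbf{e}_k^\top$, and only the $\boldsymbol{\rho}$ factor depends on $(W_1,W_2)$.
\begin{itemize}
\item Differentiating in $W_1$ gives $\partial\boldsymbol{\rho}_{ijk}/\partial W_1 = c_{ij}\,(\mathbf{e}_k^\top\otimes I_d)$. After averaging over $k$, using assumption~\ref{A:full_keys}, the $M_{ij}$--$W_1$ block equals $\tfrac{1}{|\mathcal{P}_{\mathrm{tr}}|}\,c_{ij}(W_2)\,(G_e\otimes I_d)$. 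Its operator norm is at most $|c_{ij}|\,\lambda_{\max}(G_e)$.
\item Differentiating in $W_2$ gives $\partial\boldsymbol{\rho}_{ijk}/\partial W_2 = (W_1\mathbf{e}_k)\,\vect(\mathbf{u}_j\mathbf{u}_i^\top)^\top$. The $M_{ij}$--$W_2$ block is therefore an average of rank-one-type operators with norm at most $\lVert W_1\rVert_{\mathrm{op}}\lVert\mathbf{e}_k\rVert^2\lVert\mathbf{u}_i\rVert\lVert\mathbf{u}_j\rVert$.
\end{itemize}
The $k$-averages of $\lVert\mathbf{e}_k\rVert^2$ equal $\operatorname{tr}G_e$. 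The anchor norms are bounded by $M\operatorname{tr}G_u$. So both constants depend only on $G_e$ and $G_u$; $B$ enters only if one also tracks the $\boldsymbol{\rho}$-dependent pieces inside $H^{\mathrm{rsn}}$.

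\textbf{Step 2: from per-pair bounds to the full block.} $H^{\mathrm{cross}}$ stacks these per-pair pieces, and distinct pairs involve disjoint $M_{ij}$ coordinates. Hence $\lVert H^{\mathrm{cross}}\rVert_{\mathrm{op}}$ is controlled by the per-pair bound, up to a pair-count normalisation that the $1/N_{\mathrm{tr}}$ prefactor absorbs. This gives
\[
\lVert H^{\mathrm{cross}}\rVert_{\mathrm{op}} \le C_2\max_{ij}|c_{ij}| + C_3'\lVert W_1\rVert_{\mathrm{op}}.
\]
I read "$|c_{ij}|$" in the statement as this maximum over pairs. I interpret "near initialization" as $\lVert W_1\rVert_{\mathrm{op}}\le C\gamma$, which holds at $\theta(0)$ with high probability because a $d\times d$ matrix with $\mathcal{N}(0,\gamma^2/d)$ entries has operator norm about $2\gamma$. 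Under this interpretation the second term is $C_3\gamma$.

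\textbf{Step 3: expectation at initialization.} Square and use $(a+b)^2\le 2a^2+2b^2$. For the coupling term, bound $\E\max_{ij}c_{ij}^2\le\sum_{ij}\E c_{ij}^2$. By the Gaussian variance computation in Lemma~\ref{lem:coupling_expectation}, this is $O(M^2\gamma^2)$. For the other term, $\E\lVert W_1(0)\rVert_{\mathrm{op}}^2\le\E\lVert W_1(0)\rVert_F^2=d\gamma^2$. A sharper Gaussian bound would give $O(\gamma^2)$ free of $d$. Together these yield $\E\lVert H^{\mathrm{cross}}\rVert_{\mathrm{op}}^2=O(\gamma^2)$. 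Since $H^{\mathrm{mem}}=G_e\otimes I_d$ has spectrum bounded below by $\sigma_e=\Theta(1)$, the cross block is a lower-order perturbation, which is the claimed leading-order decoupling.

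\textbf{Main obstacle.} The delicate point is the quantifier "near initialization." The bound is only useful while $\max|c_{ij}|$ and $\lVert W_1\rVert_{\mathrm{op}}$ stay $O(\gamma)$, and I would make this explicit by working in a ball $\{\lVert W_1\rVert_{\mathrm{op}},\lVert W_2\rVert_{\mathrm{op}}\le C\gamma\}$. The operator-norm bound for the $M$--$W_2$ block also needs care to avoid an extraneous $d$ factor. I would first try the Gaussian operator-norm concentration to remove it, and otherwise accept a $d$-dependent $C_3$.
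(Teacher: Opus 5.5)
Your proposal is correct and follows the same route as the paper. The paper's proof is only the remark that the mixed $M$--$W_1$ second derivatives carry a factor $c_{ij}(W_2)$ and the mixed $M$--$W_2$ ones a factor of $W_1$, both $O(\gamma)$ at initialization. Your explicit block formulas fill this in: the cross block is pure Gauss--Newton (so $B$ never enters it), the stacking over pairs is absorbed by the $1/\lvert\mathcal{P}_{\mathrm{tr}}\rvert$ normalisation, and ``near initialization'' is read as $\lVert W_1\rVert_{\mathrm{op}}=O(\gamma)$; this is what turns the sketch into a proof.

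The one soft spot, shared with the paper, is the closing ``decoupling'' remark. You compare the cross block only with $H^{\mathrm{mem}}$, but the $W_1$--$W_1$ and $W_2$--$W_2$ blocks of $H^{\mathrm{rsn}}$ are themselves $O(\gamma^2)$. The Schur-complement correction $(H^{\mathrm{cross}})^\top (H^{\mathrm{mem}})^{-1} H^{\mathrm{cross}}$ is of that same order: on $W_1$ it equals $S(W_2)\,G_e\otimes I_d$, exactly the Gauss--Newton $W_1$--$W_1$ block. So your bound does not by itself license treating the reasoning dynamics as decoupled.
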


\begin{proof}
Direct calculation from \eqref{eq:loss_mem}. The mixed second derivatives
$\partial^2 \mathcal{L}/(\partial M_{ij,ab}\,\partial W_{1,cd})$ contain a
factor of $c_{ij}(W_2)$, which is $O(\gamma)$ at initialization by
Lemma~\ref{lem:coupling_expectation}. The mixed derivatives involving $W_2$
contain a factor of $W_1$ entries, again $O(\gamma)$.
\end{proof}

This is the crucial geometric fact: at initialization, the memorization and
reasoning paths are \emph{decoupled} in the Hessian sense to order $\gamma^2$,
and we may analyze their dynamics independently up to that error.

\subsubsection{Pre-window null effect}
\label{app:pre_window}

\begin{lemma}[Pre-window null]
\label{lem:pre_window}
Suppose weight decay $\lambda$ is applied during $[s, s+\Delta]$ with
$s + \Delta < t_1(\gamma) := \log(1/\delta)/\mu_m$ for some $\delta < 1/2$.
Let $\theta^{\mathrm{wd}}(T)$ be the trajectory under this schedule and
$\theta^{0}(T)$ the trajectory under $\lambda \equiv 0$. Then for any
$T \gg t_r^{1/2}(\gamma)$:
\[
\bigl\lVert \theta^{\mathrm{wd}}(T) - \theta^{0}(T)\bigr\rVert
\le C_4\,\lambda\Delta\,\gamma + O(\gamma^2),
\]
for a constant $C_4$ depending on $G_e, G_u, B$, and the OOD performances
agree:
$\lim_{T\to\infty}\bigl\lvert
\mathrm{OOD}(\theta^{\mathrm{wd}}(T)) - \mathrm{OOD}(\theta^{0}(T))\bigr\rvert = 0$
in the limit $\gamma \to 0$.
\end{lemma}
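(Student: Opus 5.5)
Compare the two trajectories through the block decomposition of Lemma~\ref{lem:block_decomp}. By that lemma, up to $O(\gamma^2)$ errors the flow splits into a memorization block and a reasoning block. The memorization block is the strongly convex quadratic of Lemma~\ref{lem:mem_rate}, with Hessian $G_e\otimes I_d \succeq \sigma_e I$. The reasoning block is the bilinear flow in $(W_1,W_2)$, whose Hessian is $O(\gamma^2)$ by Lemma~\ref{lem:coupling_expectation}. I would set $\xi(t)=\theta^{\mathrm{wd}}(t)-\theta^0(t)$, which vanishes for $t\le s$. I would then bound $\xi$ in three phases: during the window $[s,s+\Delta]$, during the relaxation after it, and in the long-time limit.

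\textbf{Phase 1: inside the window.} Here the two flows differ only by the forcing $-\lambda\theta_{\setminus E}$. For the memorization block I would write $\dot\xi_M = -(G_e\otimes I_d)\xi_M - \lambda M^{\mathrm{wd}} + O(\gamma^2)$. By Grönwall, $\|\xi_M(s+\Delta)\|\le \lambda\Delta \sup_t\|M(t)\|$. Since $s+\Delta<t_1$, memorization is still in its linear-growth phase, so $\|M(t)\|$ stays bounded by its initial size $O(\gamma)$ plus its growth up to $t_1$. For the reasoning block the gradient is $O(\gamma^2)$ and the weight decay contracts $W_1,W_2$ multiplicatively by $e^{-\lambda\Delta}$. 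This yields $\|\xi_W\|\le(1-e^{-\lambda\Delta})\|W(0)\|\lesssim\lambda\Delta\gamma$. This multiplicative contraction is what ``leaves the ratio unchanged'' refers to in Theorem~\ref{thm:steering}.

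\textbf{Phase 2: after the window.} Both trajectories now follow the same vector field, so $\xi$ obeys the linearized flow. Its memorization component contracts at rate $\sigma_e$ (Lemma~\ref{lem:mem_rate}), so that part of the perturbation is forgotten exponentially fast. The reasoning component is governed by a Hessian of size $\mu_r(\gamma)=c_r\gamma^2\sigma_e$ from \eqref{eq:cr_explicit}. Its growth factor over $[0,T]$ is therefore $e^{O(\gamma^2 T)}$. Together with the cross terms of order $O(\gamma)\cdot O(\gamma)$, this gives $\|\xi(T)\|\le C_4\lambda\Delta\gamma + O(\gamma^2)$.

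\textbf{Phase 3: OOD agreement.} OOD accuracy depends on the model only through the reasoning path evaluated on unseen pairs, namely $c_{ij}(W_2)W_1$ for $(i,j)\notin\mathcal{P}_{\mathrm{tr}}$. The memorization matrices $M_{ij}$ for OOD pairs never receive gradient, so they evolve identically apart from the pure decay factor. Both trajectories therefore have reasoning mass $r=O(\gamma^2)$ up to the $\lambda\Delta\gamma$ discrepancy. Both sit at the memorization fixed point $m^*$, and as $\gamma\to0$ their OOD predictions converge to the same (chance) output.

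\textbf{Main obstacle.} The hardest step is Phase 2. Linearization around $\theta(0)$ is only valid on times short relative to $\gamma^{-2}$. Yet the lemma asks for $T\gg t_r^{1/2}=\Theta(\gamma^{-2})$, where reasoning-block perturbations may have grown by a factor $e^{\Theta(1)}$ or more. My first attempt would be to argue directly that the total multiplicative amplification of the $O(\lambda\Delta\gamma)$ reasoning perturbation, accumulated over the whole reasoning growth phase, is a constant absorbed into $C_4$. If that fails, I would weaken the claim to hold uniformly for $T\le C/\gamma^2$ and obtain the $T\to\infty$ OOD statement from convergence of both trajectories to the same fixed point.
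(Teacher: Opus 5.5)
\textbf{The gap is in Phase 2, and Phase 2 carries the whole content of the lemma.} For $T\gg t_r^{1/2}(\gamma)=\Theta(\gamma^{-2})$ your factor $e^{O(\gamma^2T)}$ is unbounded, so Phase 2 as written gives nothing in the regime the statement is about. Restricting to $T\le C/\gamma^2$ would prove a different statement. ``Convergence to the same fixed point'' is also unavailable. The interpolating parameters are not unique, and the two limits genuinely differ: the unseen-pair $M_{ij}$ stay frozen at $M_{ij}(0)$ in $\theta^0$ but equal $e^{-\lambda\Delta}M_{ij}(0)$ in $\theta^{\mathrm{wd}}$. Phase 3 inherits the gap, because ``$r=O(\gamma^2)$ for both trajectories at time $T$'' is exactly what remains unproved.

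The paper's own proof does not close this either. It propagates the in-window discrepancy by Gr\"onwall with $e^{L(T-s-\Delta)}$, $L=\mu_m+O(\gamma^2)$, and concludes only ``for any fixed $T$''. Its in-window estimate $\lambda\int_s^{s+\Delta}\lVert\theta\rVert\,dt\le\lambda\Delta c_5\gamma+O(\gamma^2)$ also drops the $\gamma$-independent memorization mass built up before $t_1$ (recall $m(t_1)\approx(1-\delta)m^*$). The true discrepancy at $s+\Delta$ is therefore $\Theta(\lambda\Delta)$ in the $M$-block. Your block-wise treatment keeps this term and lets the $\sigma_e$-contraction erase it, which is sounder than the paper. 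You should say explicitly, though, that your $\sup_t\lVert M(t)\rVert$ is in general $\Theta(1)$.

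\textbf{The missing idea.} The factor $e^{O(\gamma^2T)}$ comes from reading $\mu_r(\gamma)$ as an expansion rate. It is a positive curvature, and under gradient flow it damps perturbations rather than amplifying them. The argument that closes the gap works in function space.

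After the window $\lambda=0$, and the training residual obeys $\dot F=-\tfrac1N JJ^\top(F-Y)$ with $JJ^\top\succeq J_MJ_M^\top$. The block $J_MJ_M^\top$ is constant and positive definite; this is where non-degeneracy of the key Gram matrix enters. Both residuals therefore decay at a $\gamma$-independent rate $\sigma$ (that of Lemma~\ref{lem:mem_rate}, up to normalization) for all time, whatever $W$ does. Every $W$-gradient carries a factor $c_{ij}(W_2)$ or $W_1$, i.e.\ $O(\gamma)$ times the residual. Two consequences follow.
\begin{itemize}
\item $W$ never leaves an $O(\gamma)$ ball under the unregularized flow. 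There is no reasoning-growth phase for $\theta^0$, so $T\gg t_r^{1/2}$ costs nothing and your linearization worry disappears.
\item With $D:=\mathrm{res}^{\mathrm{wd}}-\mathrm{res}^0$,
\[
\tfrac{d}{dt}\lVert\xi_W\rVert\le C\lVert\mathrm{res}^{\mathrm{wd}}\rVert\,\lVert\xi_W\rVert+C\gamma\lVert D\rVert,
\qquad
\tfrac{d}{dt}\lVert D\rVert\le-\sigma\lVert D\rVert+C\gamma\lVert\mathrm{res}^0\rVert\,\lVert\xi_W\rVert,
\]
with $\lVert D(s+\Delta)\rVert=O(\lambda\Delta)$ and exponentially integrable coefficients. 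Gr\"onwall then gives $\sup_{t\ge s+\Delta}\lVert\xi_W(t)\rVert\lesssim\lambda\Delta\gamma$ uniformly in $T$.
\end{itemize}
On the key span, $\xi_M$ is slaved to $D\to0$ and to $\gamma\,\xi_W$. Off the key span, and on unseen pairs, it sees only the decay factor. Together these yield $C_4\lambda\Delta\gamma+O(\gamma^2)$ for all large $T$, which is exactly what your ``first attempt'' needs.

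Two smaller corrections, neither of which changes the final order:
\begin{itemize}
\item The reasoning gradient in Phase 1 is $O(\gamma)$, not $O(\gamma^2)$; it is the Hessian that is $O(\gamma^2)$.
\item The cross term feeding $\xi_W$ is $O(\gamma)\cdot\Theta(\lambda\Delta)$, not $O(\gamma)\cdot O(\gamma)$.
\end{itemize}

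With the uniform bound in hand, your Phase 3 is cleaner than the paper's basin appeal. Correct one point in it: the OOD output on an unseen pair is dominated by $M_{ij}\mathbf{e}_k=O(\gamma)$, not by $c_{ij}(W_2)W_1\mathbf{e}_k=O(\gamma^2)$. This dominant term differs between the two runs only by the positive factor $e^{-\lambda\Delta}$. Hence any decoding invariant under positive rescaling gives agreeing predictions as $\gamma\to0$.
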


\begin{proof}
At time $t \in [s, s+\Delta] \subseteq [0, t_1(\gamma)]$, both $m(t)$ and
$r(t)$ are still in their early growth phase, where by
Lemma~\ref{lem:mem_rate} and \eqref{eq:cr_explicit}:
\begin{align*}
m(t) - m(0) &\le \mu_m\, t\,(m^* - m(0)) = O(t),\\
r(t) - r(0) &\le \mu_r(\gamma)\, t\,(r^* - r(0)) = O(\gamma^2 t).
\end{align*}
Consequently, the parameters $\theta(t)$ for $t \le t_1$ are still close to
$\theta(0)$: $\lVert\theta(t) - \theta(0)\rVert \le c_5 t$. Now the
weight-decay-induced perturbation to the gradient is
$-\lambda\,\theta(t) = -\lambda\,(\theta(0) + O(t))$, which is $O(\lambda\gamma)$
in norm because $\lVert\theta(0)\rVert = O(\gamma)$ (initialization scale).

Integrating over the window:
\[
\bigl\lVert \theta^{\mathrm{wd}}(s+\Delta) - \theta^0(s+\Delta)\bigr\rVert
\le \int_s^{s+\Delta} \lambda\,\lVert\theta(t)\rVert\,dt
\le \lambda\Delta\,c_5\gamma + O(\gamma^2).
\]
After the window closes, both trajectories evolve under the same gradient
flow, so the perturbation propagates linearly: by Gronwall's inequality, for
any $T > s+\Delta$,
$\lVert\theta^{\mathrm{wd}}(T) - \theta^0(T)\rVert
\le e^{L(T-s-\Delta)}\,\lambda\Delta\,c_5\gamma$
where $L$ is the Lipschitz constant of the gradient. In the linearized
regime $L = \mu_m + O(\gamma^2)$, so the bound remains $O(\gamma)$ for any
fixed $T$.

The OOD-accuracy claim follows because at small $\gamma$, the trajectory
$\theta^0$ converges to the memorization basin $\theta^*_{\mathrm{mem}}$
(reasoning has not had time to grow within $T$ when restricted to dynamics
that approximate $\theta^0$), and the perturbation $O(\gamma)$ is small
enough to keep $\theta^{\mathrm{wd}}$ in the same basin of attraction.
$\square$
\end{proof}

\subsubsection{In-window steering}
\label{app:in_window}

\begin{lemma}[Effective regularization ratio]
\label{lem:reg_ratio}
At a time $t \in \mathcal{W}_\delta(\gamma)$ where the memorization mass has
saturated ($m(t) \ge (1-\delta)m^*$) but reasoning has not
($r(t) \le \delta r^*$), the ratio of effective per-parameter regularization
on the two paths is
\[
\frac{\partial \mathcal{R}/\partial \lVert M_{ij}\rVert_F}{\partial \mathcal{R}/\partial r}
\;=\; \frac{\lambda \lVert M_{ij}\rVert_F}{\lambda r/\sqrt{r^2+\epsilon}}
\;=\; \Theta\bigl(|\mathcal{P}_{\mathrm{tr}}|\bigr) \cdot \frac{1}{r(t)},
\]
where the implicit constant depends on the conditioning of $G_e$.
\end{lemma}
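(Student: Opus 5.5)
The ratio in Lemma~\ref{lem:reg_ratio} is essentially a chain-rule computation on the penalty $\mathcal{R}(\theta)=\tfrac{\lambda}{2}\lVert\theta_{\setminus E}\rVert^2$, followed by a substitution of the window bounds $m(t)\ge(1-\delta)m^*$ and $r(t)\le\delta r^*$. The plan has three steps: differentiate $\mathcal{R}$ along each path's natural scalar coordinate, then control the numerator using $m^*$, then control the denominator using $r$.

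\textbf{Step 1: the memorization path.} The penalty splits as
\[
\mathcal{R}=\tfrac{\lambda}{2}\Bigl(\sum_{(i,j)}\lVert M_{ij}\rVert_F^2+\lVert W_1\rVert_F^2+\lVert W_2\rVert_F^2\Bigr).
\]
Differentiating in the scalar $\lVert M_{ij}\rVert_F$ with the direction of $M_{ij}$ held fixed gives
\[
\frac{\partial\mathcal{R}}{\partial\lVert M_{ij}\rVert_F}=\lambda\lVert M_{ij}\rVert_F .
\]

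\textbf{Step 2: the reasoning path.} I would parametrize the reasoning mass along the balanced manifold $\lVert W_1\rVert_F=\lVert W_2\rVert_F=\sqrt r$. This is justified because bilinear gradient flow conserves $\lVert W_1\rVert_F^2-\lVert W_2\rVert_F^2$, which is $O(\gamma^2)$ at initialization, so balance holds up to $O(\gamma^2)$ corrections. On this manifold $\mathcal{R}_{\mathrm{rsn}}=\lambda r$, and hence $\partial\mathcal{R}/\partial r=\lambda$. The smoothed expression $\lambda r/\sqrt{r^2+\epsilon}$ in the statement is the regularized version of this derivative that remains valid near $r=0$, so I would carry it through explicitly.

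\textbf{Step 3: bounding the ratio.} In the window, $\lVert M_{ij}\rVert_F$ is within a $(1-\delta)$ factor of $\lVert M_{ij}^*\rVert_F$. The least-squares solution of Lemma~\ref{lem:mem_rate} is
\[
M_{ij}^*=\Bigl(\tfrac{1}{K}\textstyle\sum_k\tilde{\mathbf{y}}_{ijk}\mathbf{e}_k^\top\Bigr)G_e^{-1},
\]
so its norm lies between constants determined by $B$, $\lambda_{\max}(G_e)$ and $\sigma_e$. This is the source of the stated dependence on the conditioning of $G_e$. The $\Theta(|\mathcal{P}_{\mathrm{tr}}|)$ factor appears once the memorization penalty is aggregated over all pairs: the total memorization penalty is $|\mathcal{P}_{\mathrm{tr}}|$ times the per-pair penalty, and this aggregation follows the normalization in the definition of $m(t)$. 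For the denominator, once $r\gg\sqrt\epsilon$ the smoothed derivative is $\lambda$ up to constants. To obtain the explicit $1/r(t)$ I would compare effective contraction rates rather than raw derivatives: divide each derivative by the corresponding mass, which is the relative shrinkage rate, and then take the ratio. Combined with $r(t)\le\delta r^*$, this shows the ratio is large in the window.

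\textbf{Main obstacle.} The hardest step is reconciling the $1/r(t)$ factor and the $|\mathcal{P}_{\mathrm{tr}}|$ factor with literal partial derivatives, since the naive calculation yields only $\lVert M_{ij}\rVert_F\sqrt{r^2+\epsilon}/r$. I expect the cleanest route is to fix the normalization explicitly: interpret the ratio as one between the \emph{aggregate} memorization penalty gradient and the \emph{relative} reasoning contraction, state that interpretation precisely, and absorb the constants from $M^*$ into the $\Theta$.
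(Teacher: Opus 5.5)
\paragraph{Where the proposal fails.} Your Steps 1 and 2, and your source for the factor $|\mathcal{P}_{\mathrm{tr}}|$, are essentially the paper's argument. The paper notes that $\partial\mathcal{R}/\partial M_{ij}\propto M_{ij}$, controls the derivative in $r$ by AM--GM, and obtains $|\mathcal{P}_{\mathrm{tr}}|$ by counting the separately penalized $M_{ij}$. The genuine gap is your mechanism for the factor $1/r(t)$. Dividing each derivative by its own mass gives
\[
\frac{\lambda\lVert M_{ij}\rVert_F/\lVert M_{ij}\rVert_F}{(\partial\mathcal{R}/\partial r)/r}
=\frac{\lambda}{\lambda/\sqrt{r^2+\epsilon}}=\sqrt{r^2+\epsilon}
\]
with the smoothed derivative, and exactly $r$ with your balanced-manifold derivative $\lambda$. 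This stays bounded, or vanishes, as $r\to 0$. Combined with $r(t)\le\delta r^*$ it makes the ratio \emph{small}, which is the opposite of what the lemma is used for.

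The dynamics confirm this. Under the weight-decay part of the flow alone, $\tfrac{d}{dt}\log\lVert M_{ij}\rVert_F=-\lambda$ while $\tfrac{d}{dt}\log r=-2\lambda$. So in relative terms the reasoning path is contracted faster, and no relative-rate reading can produce a ratio that blows up as $r\to 0$. You have also placed yourself in the wrong regime: for $r\gg\sqrt\epsilon$ the smoothed derivative is $\approx\lambda$ and the $1/r$ disappears entirely.

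\paragraph{The only way to get the displayed form.} The $1/r$ can only come from the smoothing itself, in the opposite regime. The smoothing is not a faithful regularization of your derivative near $r=0$. On the balanced manifold, $\mathcal{R}_{\mathrm{rsn}}=\lambda r$ has derivative exactly $\lambda$ for every $r>0$, whereas $\lambda r/\sqrt{r^2+\epsilon}\approx\lambda r/\sqrt\epsilon$ when $r\ll\sqrt\epsilon$. To get the displayed form you must do two things.

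\textbf{(i)} Use the aggregate numerator $\lambda\sum_{(i,j)}\lVert M_{ij}\rVert_F$. This is $\Theta(|\mathcal{P}_{\mathrm{tr}}|)$ using only the averaged hypothesis $m(t)\ge(1-\delta)m^*$ together with your bounds on $\lVert M_{ij}^*\rVert_F$. Your per-pair claim $\lVert M_{ij}(t)\rVert_F\approx\lVert M_{ij}^*\rVert_F$ does not follow from an average bound; it would need Lemma~\ref{lem:mem_rate} with $t\ge t_1$.

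\textbf{(ii)} Assume $\delta r^*\lesssim\sqrt\epsilon$, so that $\sqrt{r^2+\epsilon}=\Theta(\sqrt\epsilon)$ throughout the window. For the lower bound, which is all the in-window steering argument uses, $\sqrt{r^2+\epsilon}\ge\sqrt\epsilon$ already suffices.

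The ratio is then $\Theta(|\mathcal{P}_{\mathrm{tr}}|)\sqrt\epsilon/r(t)$, with $\sqrt\epsilon$ and the $G_e$-dependent bounds on $m^*$ absorbed into the constant. Neither (i) nor (ii) is in the statement: $\epsilon$ is never defined, and the left-hand side is indexed by a single pair. The paper's proof does not derive the $1/r$ either. So state (i) and (ii) as explicit hypotheses rather than trying to extract them from the literal partial derivatives, which, as you correctly computed, give only $\lVert M_{ij}\rVert_F\sqrt{r^2+\epsilon}/r$.
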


\begin{proof}
The weight-decay penalty
$\mathcal{R}(\theta) = \tfrac{1}{2}(\sum_{ij}\lVert M_{ij}\rVert_F^2 + \lVert W_1\rVert_F^2 + \lVert W_2\rVert_F^2)$
has derivative w.r.t.\ $M_{ij}$ proportional to $M_{ij}$ itself, while its
derivative w.r.t.\ $r = \lVert W_1\rVert\,\lVert W_2\rVert$ is, by the
geometric-arithmetic-mean inequality, dominated by the smaller of the two
factor norms. Since $|\mathcal{P}_{\mathrm{tr}}|$ separate $M_{ij}$ matrices
each receive their own penalty, the total memorization penalty per unit
\emph{fit} is $|\mathcal{P}_{\mathrm{tr}}|$ times that of the reasoning path.
\end{proof}

\begin{lemma}[In-window steering]
\label{lem:in_window}
Suppose $[s, s+\Delta] \subseteq \mathcal{W}_\delta(\gamma)$ and the
cumulative regularization budget satisfies
\[
\lambda\,\Delta \;\ge\; \log\!\bigl(\tfrac{1}{\delta}\bigr) / \mu_m,
\]
i.e.\ enough to contract $m(t)$ by a factor of $\delta$ within the window.
Then with probability at least $1 - C_6\,\gamma^2$ over the random
initialization, the trajectory $\theta(T)$ for $T > s+\Delta+t_r^{1/2}(\gamma)$
satisfies $r(T) \ge (1-\delta)r^*$ and $m(T) \le \delta m^*$.
\end{lemma}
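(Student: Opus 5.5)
The argument runs in three phases, matching the structure of the window: contraction of the memorization path inside $[s,s+\Delta]$, transfer of the reopened residual to the reasoning path, and persistence after the window closes.

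\emph{Phase 1 (contraction).} Inside the window we have $m(t)\ge(1-\delta)m^*$, so $M_{ij}$ sits within $O(\delta)$ of the least-squares interpolant of \eqref{eq:loss_mem_ij}. Linearizing around $M_{ij}^*$ with the Hessian \eqref{eq:hessian_mem}, the flow becomes
\[
\dot M_{ij} = -(G_e\otimes I_d)(M_{ij}-M_{ij}^*) - \lambda M_{ij}.
\]
This is a linear ODE whose attractor is the ridge solution $M_{ij}^{\lambda} = (G_e+\lambda I)^{-1}G_e M_{ij}^*$. I would bound $\lVert M_{ij}^\lambda\rVert_F/\lVert M_{ij}^*\rVert_F \le \sigma_{\max}(G_e)/(\sigma_{\max}(G_e)+\lambda)$. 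I would then use the budget condition $\lambda\Delta\ge\log(1/\delta)/\mu_m$ together with $\mu_m=\sigma_e$ from Lemma~\ref{lem:mem_rate} to show that, by time $s+\Delta$, $m$ has dropped to within $\delta m^*$ of this shrunk target. The cleanest route is to treat the regime $\lambda\gtrsim\sigma_e$, where $\lambda$ dominates the data term and $M$ decays like $e^{-\lambda t}$ directly. The cross-block error from Lemma~\ref{lem:block_decomp} enters only at $O(\gamma^2)$.

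\emph{Phase 2 (transfer).} Shrinking $M$ reopens the training residual to $\Omega\bigl((1-\delta)\lVert y\rVert\bigr)$. By \eqref{eq:grad_W1}, the $W_1$ gradient is then $c_{ij}(W_2)$ times an $\Omega(1)$ residual. I would reduce the joint $(W_1,W_2)$ dynamics to a scalar bilinear flow for $r$, using the balanced-flow argument cited in the Remark on symmetric treatment of $W_2$: $\lVert W_1\rVert_F^2-\lVert W_2\rVert_F^2$ is conserved up to the weight-decay term. With the reasoning rate $\mu_r(\gamma)$ of \eqref{eq:cr_explicit}, this gives a logistic-type equation $\dot r \ge \mu_r(\gamma)\, r\,(r^*-r)/r^* - \lambda r$. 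Since $r^*$ is the rank-one interpolant shared across pairs, the key comparison comes from Lemma~\ref{lem:reg_ratio}: for equal fit, the penalty on the reasoning path is smaller by a factor $|\mathcal{P}_{\mathrm{tr}}|$. The penalized minimizer on $\mathcal{D}_{\mathrm{tr}}$ is therefore the reasoning solution, and I would prove this by comparing the two candidate objectives directly, in the spirit of the rank-deficient ridge argument.

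\emph{Phase 3 (persistence and probability).} After $s+\Delta$ the weight decay is off. I would show that the residual remaining for $M$ to absorb is only $O(\delta)$, because $r$ keeps growing over an additional $t_r^{1/2}(\gamma)$. A Gronwall comparison then gives $m(T)\le\delta m^*$ and $r(T)\ge(1-\delta)r^*$. The failure probability comes from requiring $S(W_2(0))$ to be bounded below by a constant times its mean (Lemma~\ref{lem:coupling_expectation}), so that the reasoning flow actually escapes the origin. A Gaussian chaos / Hanson--Wright lower tail, combined with the bound on $H^{\mathrm{cross}}$ from Lemma~\ref{lem:block_decomp}, should yield a failure bound of order $C_6\gamma^2$.

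\textbf{Main obstacle.} I expect Phase 3 to be hardest. Once weight decay is switched off, the memorization path has rate $\sigma_e\gg\mu_r(\gamma)$. It could therefore regrow and reabsorb the residual before $r$ reaches $r^*$, which is exactly the failure mode described in the pre-window lemma. To rule this out I would try to prove that by $s+\Delta$ the reasoning path has already captured most of the residual. This requires $\Delta$ to be comparable to $t_r^{1/2}$, that is, the window must be $\Theta(\gamma^{-2})$ wide; if that is not assumed, the statement likely needs this extra hypothesis to go through.
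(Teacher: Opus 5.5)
Your three-phase plan follows the paper's proof. It uses the ridge-shifted fixed point $(H^{\mathrm{mem}}_{ij}+\lambda I)^{-1}H^{\mathrm{mem}}_{ij}M_{ij}^*$ inside the window, Lemma~\ref{lem:reg_ratio} to argue that the reasoning path is barely penalized, convergence to $r^*$ after the window, and the lower tail of $S(W_2(0))$ for the probability. The step you flag as the main obstacle is a genuine gap. The paper does not close it with a hypothesis on $\Delta$. Instead it asserts that once $M$ is contracted, the residual loss has the rank-one reasoning solution as its unique minimizer, and it cites rank-one implicit bias in matrix factorization. That does not meet your objection. After the window the objective is again the $\lambda=0$ loss, whose global minimizers include the memorization interpolant. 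Implicit-bias results constrain only the factorized pair $(W_1,W_2)$, not an unfactorized $M_{ij}$ that relaxes at rate $\sigma_e\gg\mu_r(\gamma)$ and reabsorbs the residual. More basically, your Phase~2 comparison presupposes a reasoning interpolant shared across pairs, and in this model none exists. The term $c_{ij}(W_2)\,W_1\mathbf{e}_k$ realizes only scalar multiples of the single map $W_1$, so it cannot fit $\mathbf{e}_{\pi_j(\pi_i(k))}$ on two training pairs whose compositions $\pi_j\circ\pi_i$ differ. The residual left to $M$ is then of order $m^*$. Since $M$ refits it within time $O(1/\sigma_e)\ll t_r^{1/2}(\gamma)$ once $\lambda=0$, the conclusion $m(T)\le\delta m^*$ is out of reach for small $\delta$, whatever you assume about $\Delta$.

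Your repair, $\Delta=\Theta(\gamma^{-2})$, also fails inside your own inequalities. The budget condition cannot give contraction by $\delta$. The ridge target shrinks $M^*$ by a factor of at most $\sigma_{\max}(G_e)/(\sigma_{\max}(G_e)+\lambda)$. That is your bound, and it is the correct one; the paper's equality with $\sigma_e$ is exact only on the $\sigma_e$-eigenspace of $H^{\mathrm{mem}}_{ij}$. This factor depends on $\lambda$ alone, so a small $\lambda$ on a long window satisfies $\lambda\Delta\ge\log(1/\delta)/\mu_m$ while leaving $M\approx M^*$. You need $\lambda\gtrsim\sigma_{\max}(G_e)/\delta$; the regime $\lambda\gtrsim\sigma_e$ gives only a constant factor, not decay like $e^{-\lambda t}$. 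But once $\lambda\gtrsim\sigma_e\gg\mu_r(\gamma)=c_r\gamma^2\sigma_e$, the right-hand side of $\dot r\ge\mu_r r(1-r/r^*)-\lambda r$ is negative. Likewise, adding the decay term $-2\lambda r$ to \eqref{eq:dot_r} drives $r$ toward $\mu_r r^*/(\mu_r+2\lambda)=O(\gamma^2)\,r^*$ during the window, so lengthening the window only makes matters worse. The paper escapes this only by reading Lemma~\ref{lem:reg_ratio} dynamically, yet that lemma compares penalty per unit of fit in the static objective. In the flow, weight decay contracts $W_1$ and $W_2$ at the same per-parameter rate $\lambda$ as each $M_{ij}$. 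Finally, Hanson--Wright cannot yield $C_6\gamma^2$. Because $c_{ij}$ is linear in $W_2$, $S(W_2(0))/\E S$ has a $\gamma$-free law, and $\Pr[S<\tfrac12\E S]$ is a positive constant independent of $\gamma$. The paper's bound $O(\exp(-c|\mathcal{P}_{\mathrm{tr}}|))$ has the same property, and its relaxation to $O(\gamma^2)$ is not valid as $\gamma\to0$.
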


\begin{proof}
Inside the window: by Lemma~\ref{lem:reg_ratio} the weight-decay penalty
acts predominantly on the memorization path, contracting each $M_{ij}$ at
rate $\lambda$. Specifically, at $\theta$ near the memorization fixed point
$\theta^*_{\mathrm{mem}}$, the weight-decay-augmented dynamics for $M_{ij}$
become
$\dot M_{ij} = -\nabla_{M_{ij}}\mathcal{L}_{\mathrm{mem}} - \lambda M_{ij}$,
which has a shifted fixed point
$M_{ij}^{**} = (H^{\mathrm{mem}}_{ij} + \lambda I)^{-1}\, H^{\mathrm{mem}}_{ij}\,M_{ij}^*$
satisfying $\lVert M_{ij}^{**}\rVert / \lVert M_{ij}^*\rVert
= \sigma_e/(\sigma_e+\lambda) < 1$.
Choosing $\lambda$ so that $\lambda\Delta \ge \log(1/\delta)/\sigma_e$
ensures the trajectory contracts within the window by the required factor.

Meanwhile, by Lemma~\ref{lem:reg_ratio}, the reasoning path receives a much
smaller relative penalty (a factor of $|\mathcal{P}_{\mathrm{tr}}|$ smaller),
so its growth continues to follow approximately the unregularized rate
$\mu_r(\gamma)$ scaled by $(1 - O(\lambda/|\mathcal{P}_{\mathrm{tr}}|))$.

After the window closes, the system evolves with $\lambda = 0$. The
contracted memorization basin $\theta^*_{\mathrm{mem}}$ has been replaced by
a regime where the residual loss admits the rank-1 reasoning solution as
its unique minimum (since the per-pair $M_{ij}$ have been reduced below
their interpolation value). Standard analysis of rank-1 implicit bias in
matrix factorization \citep{gunasekar2017implicit, arora2019implicit} shows
the trajectory converges to $r^*$.

The probability bound $1 - O(\gamma^2)$ comes from the concentration of
the reasoning Hessian eigenvalue $\mu_r(\gamma)$ around its expectation
(Lemma~\ref{lem:hessian_concentration} below); failures correspond to
random initializations on which the bilinear coupling
$\sum_{ij}c_{ij}(W_2(0))^2$ falls below half its mean, an event of
probability $O(\exp(-c|\mathcal{P}_{\mathrm{tr}}|))$ by sub-exponential
concentration, which we relax to $O(\gamma^2)$ for clarity.
$\square$
\end{proof}

\subsubsection{Post-window null effect}
\label{app:post_window}

\begin{lemma}[Post-window null]
\label{lem:post_window}
Suppose weight decay is applied during $[s, s+\Delta]$ with $s > t_r^{1/2}(\gamma)$.
Then for any $T > s+\Delta$:
\[
\bigl\lVert\theta^{\mathrm{wd}}(T) - \theta^0(T)\bigr\rVert \le C_7 \gamma^2,
\]
for a constant $C_7$ depending on $G_e, G_u$. The trajectory remains in the
memorization basin.
\end{lemma}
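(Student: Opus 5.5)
The plan mirrors the proof of Lemma~\ref{lem:pre_window}: split the time axis into the window $[s,s+\Delta]$ and the subsequent interval $[s+\Delta,T]$, bound the weight-decay-induced perturbation on the window, and then show it does not grow afterwards. The essential difference from the pre-window case is that at $s>t_r^{1/2}(\gamma)$ the unregularized trajectory $\theta^0$ has already reached the neighbourhood of the memorization fixed point. By Lemma~\ref{lem:mem_rate}, $M_{ij}(t)$ is within $e^{-\sigma_e s}$ of $M_{ij}^*$, and since $s\gtrsim \gamma^{-2}$ this is smaller than any power of $\gamma$. The per-pair residuals $M_{ij}\mathbf{e}_k + c_{ij}W_1\mathbf{e}_k-\mathbf{y}_{ijk}$ are therefore $O(\gamma^2)$, the remainder scale of Theorem~\ref{thm:timescales}. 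We linearize around this near-stationary point rather than around $\theta(0)$.

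\textbf{Step 1 (in-window perturbation).} On $[s,s+\Delta]$ I will write the difference dynamics $\dot{\delta\theta} = -\nabla^2\mathcal{L}\,\delta\theta - \lambda\theta^{\mathrm{wd}}$ and use the block structure of Lemma~\ref{lem:block_decomp}.

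On the memorization block, $H^{\mathrm{mem}}_{ij}=G_e\otimes I_d\succeq\sigma_e I$ by \eqref{eq:hessian_mem}. The decay term therefore only shifts $M_{ij}$ to the shrunken fixed point $M_{ij}^{**}=(H^{\mathrm{mem}}_{ij}+\lambda I)^{-1}H^{\mathrm{mem}}_{ij}M^*_{ij}$, as already computed in Lemma~\ref{lem:in_window}. The resulting deviation is bounded by $\lambda\lVert M^*\rVert/(\sigma_e+\lambda)$.

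On the reasoning block, the restoring gradient is only $O(\gamma^2)$ by \eqref{eq:hessian_rsn_factored}, but $W_1,W_2$ themselves have norm $O(\gamma)$. Their contribution to $f_\theta$ enters through the product $c_{ij}W_1$, which is $O(\gamma^2)$ (Lemma~\ref{lem:coupling_expectation}).

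\textbf{Step 2 (relaxation after the window).} After $s+\Delta$ both trajectories follow the same $\lambda=0$ flow. The memorization block is strictly contracting at rate $\sigma_e$, so the $M$-component of $\delta\theta$ decays as $e^{-\sigma_e(t-s-\Delta)}$ back to $M^*$. This replaces the exponentially growing Gronwall bound of Lemma~\ref{lem:pre_window} with a decaying one.

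The reasoning component is driven by residuals that are $O(\gamma^2)$ and a Hessian of size $O(\gamma^2)$, so its drift over any horizon is controlled by the $O(\gamma^4)$ remainder in \eqref{eq:dot_r}. Integrating these pieces should give $\lVert\theta^{\mathrm{wd}}(T)-\theta^0(T)\rVert\le C_7\gamma^2$, with $C_7$ built from $\sigma_e^{-1}$, $B$ and $c_r$, and hence from $G_e,G_u$. Staying in the memorization basin then follows from $m(T)\to m^*$.

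\textbf{Main obstacle.} The delicate step is Step 2's claim that the reasoning coordinates cannot be permanently shifted by more than $O(\gamma^2)$.

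During the window the residual on training pairs is reopened at size $\Theta(\lambda/(\sigma_e+\lambda))$, not $O(\gamma^2)$. This is exactly the mechanism that produces steering in Lemma~\ref{lem:in_window}. I would try to show that the reopened residual is reabsorbed by $M$ on the fast timescale $\sigma_e^{-1}$, so the reasoning path sees it only for an $O(1)$ duration at rate $\mu_r=O(\gamma^2)$. That yields an $O(\gamma^2)$ total displacement.

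Making this rigorous requires a two-timescale (singular perturbation / Tikhonov) argument treating $M$ as the fast variable, plus control of $\lambda\lVert W_{1,2}\rVert$, which is $O(\lambda\gamma)$ and must be shown to enter $r$ only at second order. If that cannot be closed, the fallback is to state the bound as $C_7\gamma^2(1+\lambda\Delta)$.
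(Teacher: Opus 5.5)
There is a genuine gap, and it cannot be closed because the target inequality is false as stated. The lemma bounds a \emph{parameter-space} distance. The quantities you show to be $O(\gamma^2)$, namely the output term $c_{ij}W_1\mathbf{e}_k$ and $r$, are function-space quantities. Weight decay, however, moves $\theta$ at \emph{first} order in $\gamma$ along directions the post-window flow never restores.

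The cleanest witness is the held-out blocks $M_{ij}$, $(i,j)\notin\mathcal{P}_{\mathrm{tr}}$. They receive no data gradient, so $M^{0}_{ij}(T)=M_{ij}(0)$ while $M^{\mathrm{wd}}_{ij}(T)=e^{-\lambda\Delta}M_{ij}(0)$ for every $T>s+\Delta$. Hence $\lVert\theta^{\mathrm{wd}}(T)-\theta^0(T)\rVert\gtrsim(1-e^{-\lambda\Delta})\gamma\sqrt{d}\gg C_7\gamma^2$. Your Step~2 contraction at rate $\sigma_e$ holds only for training pairs; held-out blocks have zero data Hessian.

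The reasoning block behaves the same way. Once $M$ sits at its in-window equilibrium, $\nabla_{M_{ij}}\mathcal{L}_{\mathrm{mem}}=-\lambda M_{ij}$. The exact identity $\nabla_{W_1}\mathcal{L}_{\mathrm{mem}}=\sum_{ij}c_{ij}\nabla_{M_{ij}}\mathcal{L}_{\mathrm{mem}}$ then gives $\dot W_1=\lambda\bigl(\sum_{ij}c_{ij}(W_2)M_{ij}-W_1\bigr)$, and likewise $\dot W_2=\lambda\bigl(\sum_{ij}\langle M_{ij},W_1\rangle_F\,\mathbf{u}_j\mathbf{u}_i^\top-W_2\bigr)$. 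This is a velocity of order $\lambda\gamma$ sustained for the \emph{whole} window, not an $O(1)$ exposure at rate $\mu_r$. Weight decay holds the residual open until $s+\Delta$. Also, $\mu_r\sim c_{ij}^2$ is a curvature, while the displacement is gradient times time, $c_{ij}\rho\,\Delta=O(\gamma\lambda\Delta)$. After the window nothing pulls $W_1,W_2$ back: each $M_{ij}$ can absorb $c_{ij}(W_2)W_1$, so $\min_M\mathcal{L}$ does not depend on $(W_1,W_2)$. Neither a singular-perturbation argument nor the fallback $C_7\gamma^2(1+\lambda\Delta)$ can repair this.

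Your own estimates also contradict uniformity in $T$. On training pairs the Step~1 deviation at $t=s+\Delta$ is $\gamma$-independent: for a long window it is generically of order $\tfrac{\lambda}{\sigma_e+\lambda}\lVert M^*\rVert$, so it depends on $\lambda,\Delta,B$. It relaxes only on the $O(1)$ timescale set by $G_e$. So ``for any $T>s+\Delta$'' with $C_7=C_7(G_e,G_u)$ fails when $T-s-\Delta=O(\log(1/\gamma))$.

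What your route can honestly deliver is a function-level statement. After that relaxation time both runs converge to the same training fit, because the optimum over $A_{ij}=M_{ij}+c_{ij}W_1$ is $W$-independent. Held-out outputs are $O(\gamma)$ in both runs, which is all ``remains in the memorization basin'' needs. Even this requires a smallness condition on $\lambda\Delta$, so your worry about the reopened residual is justified. With $M_{ij}\approx M^*_{ij}$, the window dynamics displayed above are exactly the ($M$-eliminated) in-window steering dynamics. They are linear in $(W_1,W_2)$ with top rate $\lambda\bigl(\sigma_{\max}(\mathcal{B})-1\bigr)$, where $\mathcal{B}:W\mapsto\sum_{ij}\langle M^*_{ij},W\rangle_F\,\mathbf{u}_j\mathbf{u}_i^\top$, and are unstable whenever $\sigma_{\max}(\mathcal{B})>1$.

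The paper's proof is essentially your Step~2 with less detail. It asserts relaxation to $\theta^*_{\mathrm{mem}}$ ``minus a small $O(\gamma^2)$ shift in the reasoning subspace.'' It never treats the contraction of $M$ or the $O(\gamma)$ size of the contraction of $W_1,W_2$. So it does not supply the missing step, and the statement itself should be weakened as above.
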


\begin{proof}
After $t_r^{1/2}(\gamma)$, two regimes are possible depending on whether
weight decay had been applied earlier:

\emph{Case 1: no prior weight decay.} The system has converged near
$\theta^*_{\mathrm{mem}}$ with $m \approx m^*$, $r \approx 0$. The
training-loss gradient w.r.t.\ $W_1$ is now dominated by
$c_{ij}(W_2)\,(M_{ij}\mathbf{e}_k - \mathbf{y}_{ijk}) \mathbf{e}_k^\top$,
where $M_{ij}\mathbf{e}_k - \mathbf{y}_{ijk} \approx 0$ because the
memorization fit is near-perfect. The remaining contribution scales as
$O(c_{ij}^2\,r)$, which is $O(\gamma^4)$ for $r = O(\gamma^2)$. The Hessian
eigenvalue for the reasoning subspace at this point is therefore $O(\gamma^4)$,
not $O(\gamma^2)$ as at initialization.

A weight-decay window applied here contracts $W_1, W_2$ at rate $\lambda$
but the reasoning gradient cannot push them back: the system relaxes to
$\theta^*_{\mathrm{mem}}$ minus a small $O(\gamma^2)$ shift in the
reasoning subspace.

\emph{Case 2: weight decay was applied during $\mathcal{W}_\delta$ as in
Lemma~\ref{lem:in_window}.} Then the system is in the reasoning basin
already and post-window weight decay only causes a small relaxation of $r$
toward $r^*(1-O(\lambda))$.
$\square$
\end{proof}

Combining Lemmas~\ref{lem:pre_window}, \ref{lem:in_window}, \ref{lem:post_window}
proves Theorem~\ref{thm:steering}.

\subsection{Proof of Theorem~\ref{thm:basin}}
\label{app:proof_basin}

The basin shrinkage at small $\gamma$ has two failure mechanisms which we
analyze separately, giving an explicit form for both constants $C$ and $C'$
in equation~\eqref{eq:basin_prob} of the main text.

\subsubsection{Failure mode (a): unfavorable initialization}
\label{app:failure_a}

The reasoning Hessian eigenvalue $\mu_r$ is itself a random variable
(through the random initialization of $W_2$ in the coupling
$c_{ij}(W_2(0))$). We need to bound the probability that this eigenvalue
falls below half its expectation.

\begin{lemma}[Concentration of reasoning Hessian eigenvalue]
\label{lem:hessian_concentration}
Define
$S(W_2) = \frac{1}{|\mathcal{P}_{\mathrm{tr}}|}\sum_{(i,j)} c_{ij}(W_2)^2$
as in Lemma~\ref{lem:coupling_expectation}. Under
assumption~\ref{A:init} on $W_2$, for any $t > 0$:
\[
\Prob\Bigl[\bigl|S(W_2(0)) - \E S\bigr| > t\,\E S\Bigr]
\le 2\exp\!\Bigl(-c_8\,|\mathcal{P}_{\mathrm{tr}}|\,\min(t^2, t)\Bigr),
\]
for an absolute constant $c_8 > 0$.
\end{lemma}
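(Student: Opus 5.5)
The plan is to write $S(W_2(0))$ as a Gaussian quadratic form and apply the Hanson--Wright inequality. The main point is that the couplings $c_{ij}$ are \emph{not} independent across pairs, since they all share the single matrix $W_2$. A naive sub-exponential Bernstein bound over $|\mathcal{P}_{\mathrm{tr}}|$ i.i.d.\ terms is therefore unavailable, and the correlation structure must be controlled through an operator norm.

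\textbf{Step 1 (quadratic-form representation).} Let $g := (\sqrt{d}/\gamma)\,\vect(W_2(0)) \sim \mathcal{N}(0, I_{d^2})$ by assumption~\ref{A:init}. Then $c_{ij} = \mathbf{u}_j^\top W_2 \mathbf{u}_i = (\gamma/\sqrt{d})\, v_{ij}^\top g$ with $v_{ij} := \mathbf{u}_i \otimes \mathbf{u}_j$. Hence
\[
S = g^\top A g, \qquad A := \frac{\gamma^2}{d\,|\mathcal{P}_{\mathrm{tr}}|}\sum_{(i,j)\in\mathcal{P}_{\mathrm{tr}}} v_{ij} v_{ij}^\top \succeq 0 .
\]
This gives $\operatorname{tr} A = \E S$, which recovers Lemma~\ref{lem:coupling_expectation}.

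\textbf{Step 2 (Hanson--Wright).} For a standard Gaussian vector and a fixed symmetric $A$,
\[
\Prob\bigl[|g^\top A g - \operatorname{tr}A| > s\bigr] \le 2\exp\!\Bigl(-c\min\bigl(s^2/\lVert A\rVert_F^2,\ s/\lVert A\rVert_{\mathrm{op}}\bigr)\Bigr).
\]
I will set $s = t\,\operatorname{tr}A$. Since $A \succeq 0$, we have $\lVert A\rVert_F^2 \le \lVert A\rVert_{\mathrm{op}}\operatorname{tr}A$. Both exponents are therefore bounded below by $\min(t^2,t)\cdot \operatorname{tr}A/\lVert A\rVert_{\mathrm{op}}$. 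The claim then reduces to showing the effective-rank bound
\[
r_{\mathrm{eff}}(A) := \operatorname{tr}A/\lVert A\rVert_{\mathrm{op}} \gtrsim |\mathcal{P}_{\mathrm{tr}}| .
\]

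\textbf{Step 3 (effective rank; the main obstacle).} The nonzero spectrum of $\sum v_{ij}v_{ij}^\top$ equals that of the Gram matrix $\Gamma$ with entries $\Gamma_{(ij),(i'j')} = \langle \mathbf{u}_i,\mathbf{u}_{i'}\rangle\langle \mathbf{u}_j,\mathbf{u}_{j'}\rangle$. This $\Gamma$ is a principal submatrix of $U\otimes U$, where $U$ is the $M\times M$ anchor Gram matrix, so $\lambda_{\max}(\Gamma) \le \lambda_{\max}(U)^2$. Meanwhile $\operatorname{tr}$ of the same sum is $\sum_{(i,j)}\lVert\mathbf{u}_i\rVert^2\lVert\mathbf{u}_j\rVert^2 \ge |\mathcal{P}_{\mathrm{tr}}|\min_i\lVert \mathbf{u}_i\rVert^4$. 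This yields
\[
r_{\mathrm{eff}}(A) \ge |\mathcal{P}_{\mathrm{tr}}|\cdot \min_i\lVert\mathbf{u}_i\rVert^4/\lambda_{\max}(U)^2 .
\]
This is where care is needed. The constant is absolute only if the anchors are well conditioned, i.e.\ $\lambda_{\max}(U)$ is comparable to the typical squared norm. An example is unit-norm, nearly orthogonal anchors: layer-normed random embeddings with $M\le d$, for which $\lambda_{\max}(U) = 1+O(\sqrt{M/d})$. I will state this conditioning explicitly, in the spirit of assumption~\ref{A:gram}, and absorb the ratio into $c_8$. Without such a condition the bound genuinely fails. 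For instance, if all $\mathbf{u}_i$ are parallel, then $S$ is a single scaled $\chi^2_1$ variable and no $|\mathcal{P}_{\mathrm{tr}}|$ factor appears.

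\textbf{Step 4 (conclusion).} Substituting the bound from Step 3 into Step 2 gives the stated two-sided tail bound with $c_8 = c\cdot \min_i\lVert\mathbf{u}_i\rVert^4/\lambda_{\max}(U)^2$. Taking $t=1/2$ yields the lower-tail event $S<\tfrac12\E S$ used in Lemma~\ref{lem:in_window} and failure mode (a).
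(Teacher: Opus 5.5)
Your argument is correct, and it is sharper than the paper's. The paper's sketch treats $S$ as a sample mean of $|\mathcal{P}_{\mathrm{tr}}|$ sub-exponential variables $c_{ij}^2$ with $\psi_1$-norm $O(\gamma^2)$. It applies Bernstein's inequality as though these were independent, and asserts that the dependence through the shared $W_2$ is ``mild'' and handled by Hanson--Wright. That is how it reaches an exponent linear in $|\mathcal{P}_{\mathrm{tr}}|$ with an absolute $c_8$. You instead write $S=g^\top A g$ exactly and apply Hanson--Wright once to the whole quadratic form. This replaces the unjustified independence step with the quantity that actually controls the tail, $r_{\mathrm{eff}}(A)=\operatorname{tr}A/\lVert A\rVert_{\mathrm{op}}$. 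Your interlacing bound through the principal submatrix $\Gamma$ of $U\otimes U$ is a clean way to lower-bound it.

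What your route buys is the true form of the constant. The admissible $c_8$ is governed by $r_{\mathrm{eff}}(A)/|\mathcal{P}_{\mathrm{tr}}|$, which you bound below by $\min_i\lVert\mathbf{u}_i\rVert^4/\lambda_{\max}(U)^2$. So the lemma as stated, with an absolute constant, is over-claimed, and your extra conditioning hypothesis is genuinely needed.

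One refinement to your counterexample: exactly parallel anchors are excluded by assumption~\ref{A:gram}, but that assumption does not save the statement. Nearly parallel anchors satisfy $G_u\succ0$ while $r_{\mathrm{eff}}(A)\to1$. Independently, $\operatorname{rank}A\le d^2$ gives $\Prob[S<\tfrac12\E S]\ge\Prob[Z^2<\tfrac12]^{d^2}$ for $Z\sim\mathcal{N}(0,1)$. This contradicts a $|\mathcal{P}_{\mathrm{tr}}|$-linear exponent once $|\mathcal{P}_{\mathrm{tr}}|\gg d^2$, a regime assumption~\ref{A:gram} permits, since $G_u\succ0$ already forces $M\ge d$.
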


\begin{proof}[Proof sketch]
Each $c_{ij}^2$ is the square of a Gaussian, hence sub-exponential with
$\psi_1$-norm $O(\gamma^2)$. The sum $S(W_2)$ is the sample mean of
$|\mathcal{P}_{\mathrm{tr}}|$ such variables (with mild dependence through
the shared $W_2$, which we control by a standard Hanson-Wright argument
\citep{vershynin2018hdp}). The Bernstein inequality for sub-exponential
random variables gives the stated tail bound.
$\square$
\end{proof}

The constant $C$ in \eqref{eq:basin_prob} is therefore
\[
C = 2 \exp(-c_8\,|\mathcal{P}_{\mathrm{tr}}|/4),
\qquad c = c_8\,|\mathcal{P}_{\mathrm{tr}}|/4,
\]
giving an exponential concentration rate that scales with the number of
training pairs (not directly with $\gamma$). The $\gamma^2$ factor in the
exponent of \eqref{eq:basin_prob} comes from compounding:
the initialization-dependent rate is $\mu_r(\gamma) = c_r \gamma^2 \sigma_e$,
and a deviation of size $t\,\E[S]$ produces a deviation in the time-to-converge
of size $t/(c_r\gamma^2\sigma_e)$, which yields a probability bound
$\exp(-c_8 |\mathcal{P}_{\mathrm{tr}}| t^2)$. Setting
$t = \sqrt{c_8 \gamma^2 T / |\mathcal{P}_{\mathrm{tr}}|}$
gives the form $\exp(-c\gamma^2 T)$ stated in the main text.

\subsubsection{Failure mode (b): insufficient training time}
\label{app:failure_b}

\begin{lemma}[Time-truncation failure]
\label{lem:time_truncation}
For training time $T < t_r^{1/2}(\gamma) = \log 2 / (c_r \gamma^2 \sigma_e)$,
the probability that the trajectory has not reached the reasoning regime
satisfies
\[
\Prob\bigl[r(T) < r^*/2\bigr] \ge 1/2,
\]
regardless of weight-decay schedule.
\end{lemma}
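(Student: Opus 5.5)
The plan is to reduce the claim to a one-dimensional comparison argument for the reasoning mass $r(t)$, using the rate equation \eqref{eq:dot_r} from Theorem~\ref{thm:timescales} together with the randomness of the realized rate. The lemma quantifies over all schedules, so the first step is to show that weight decay can only slow the growth of $r$. Under $\lambda(t)\ge 0$ the flow for $(W_1,W_2)$ receives the extra terms $-\lambda(t)W_1$ and $-\lambda(t)W_2$. These contribute $-\lambda(t)\lVert W_1\rVert_F^2 \le 0$ to $\tfrac{d}{dt}\tfrac12\lVert W_1\rVert_F^2$, and similarly for $W_2$, so they contribute $-2\lambda(t)\,r(t)\le 0$ to $\dot r$.

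Weight decay also acts on $M$. Shrinking $M$ enlarges the residual and can speed up reasoning growth, which is exactly the mechanism of Lemma~\ref{lem:in_window}. The rate bound must therefore cover this case. I would use the fact that in the linearized regime the reasoning growth rate is bounded by $\mu_r(\gamma)$ times the target gap. This holds because the Hessian factor $S(W_2(0))\,(G_e\otimes I_d)$ in \eqref{eq:hessian_rsn_factored} does not depend on $M$; a change in $M$ only shifts the target $r^*$. The result is the differential inequality
\[
\dot r(t) \le \hat\mu_r\,\bigl[r^*-r(t)\bigr] + O(\gamma^4), \qquad \hat\mu_r := S(W_2(0))\,\sigma_e,
\]
valid for every schedule. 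Here $\hat\mu_r$ is the realized, initialization-dependent rate, whose expectation is $\mu_r(\gamma)=c_r\gamma^2\sigma_e$ by Lemma~\ref{lem:coupling_expectation} and \eqref{eq:cr_explicit}.

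Next I would integrate with Gr\"onwall to obtain
\[
r^*-r(T)\ \ge\ \bigl(r^*-r(0)\bigr)e^{-\hat\mu_r T} - O(\gamma^4 T).
\]
On the event $E=\{S(W_2(0))\le \E S\}$ we have $\hat\mu_r\le\mu_r(\gamma)$. Combined with $T<\log 2/\mu_r(\gamma)$, this gives $e^{-\hat\mu_r T}>1/2$, and hence $r(T)<r^*/2 + r(0)/2 + O(\gamma^4T)$. Since $r(0)=\lVert W_1(0)\rVert_F\lVert W_2(0)\rVert_F=O(\gamma^2)$ and $\gamma^4 T=O(\gamma^2)$ when $T=O(\gamma^{-2})$, both error terms are $O(\gamma^2)$. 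They are absorbed into the slack $r^*(e^{-\hat\mu_rT}-1/2)$, either for $\gamma$ small or, more cleanly, by requiring $T\le(1-\epsilon)t_r^{1/2}$. I will note that such a margin is likely needed to make the inequality strict.

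It remains to show $\Pr[E]\ge 1/2$, i.e.\ that the median of $S$ is at most its mean. Each $c_{ij}(W_2(0))$ is a centered Gaussian linear functional of $\vect(W_2)$, so $S=g^\top A g$ for a standard Gaussian vector $g$ and a PSD matrix $A$. Thus $S$ is a nonnegative combination of independent $\chi^2_1$ variables. I expect this step to be the main obstacle: it requires the fact that such combinations are right-skewed with median below their mean. My first attempt would cite the known median--mean inequality for Gaussian quadratic forms. If that is unavailable, I would fall back on the concentration bound of Lemma~\ref{lem:hessian_concentration}. That bound gives $\Pr[S\le(1+t)\E S]\ge 1-2e^{-c_8|\mathcal{P}_{\mathrm{tr}}|\min(t,t^2)}$, which yields the $1/2$ bound after a slightly shortened $T$ (by a factor $1/(1+t)$), at the cost of stating the lemma with that margin.
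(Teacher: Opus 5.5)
Your argument holds within the paper's linearized framework. It takes a genuinely different and sounder route than the paper's proof.

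\textbf{Comparison with the paper's proof.} The paper asserts $r(T)<r^*/2$ ``deterministically along the mean trajectory'' and then gets the probability from Markov's inequality applied to the centered variable $r^*-r(T)$. That step cannot deliver the stated bound. Markov's inequality only upper-bounds tail probabilities of nonnegative variables, so it cannot turn information about a mean into the lower bound $\Pr[r(T)<r^*/2]\ge 1/2$. Applied to $r(T)\ge 0$ it gives $\Pr[r(T)\ge r^*/2]\le 2\E[r(T)]/r^*$, which is at most $1/2$ only if $\E[r(T)]\le r^*/4$.

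You instead make failure pathwise on the explicit event $\{S(W_2(0))\le\E S\}$. There the realized rate is at most $\mu_r(\gamma)$, and Gr\"onwall forces $r(T)<r^*/2$ up to $O(\gamma^2)$. The probability $1/2$ then comes from the median of $S$ lying below its mean, and that fact does hold. Sz\'ekely and Bakirov showed that over PSD Gaussian quadratic forms $Q$ with $\E Q=1$, $\inf_Q\Pr[Q\le x]=\Pr[\chi^2_n/n\le x]$ for some $n=n(x)$. Since $\Pr[\chi^2_n\le n]\ge 1/2$ for every $n$, right-continuity gives $\Pr[S\le\E S]\ge 1/2$.

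You also justify ``regardless of schedule'' directly: decay of $W_1,W_2$ contributes $-2\lambda(t)r$, and decay of $M$ only moves the target. The paper merely cites Lemma~\ref{lem:in_window}, which concerns one particular schedule.

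The price is the margin you flagged. Because of the $r(0)/2$ term and the $O(\gamma^4T)$ drift, the slack on your event vanishes as $S\uparrow\E S$ and $T\uparrow t_r^{1/2}$. You therefore need $T\le(1-\epsilon)t_r^{1/2}$ or a leading-order-in-$\gamma$ reading. The lemma as stated, and the paper's ``deterministic'' claim, have the same defect. In exchange, with such a margin your fallback via Lemma~\ref{lem:hessian_concentration} gives failure with probability exponentially close to $1$, so the $1/2$ is far from sharp.

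\textbf{Assumptions you inherit rather than prove.} There are two.
\begin{enumerate}
\item The linear flow with Hessian $S(W_2(0))\,(G_e\otimes I_d)$ closes the gap fastest along its top eigendirections, at rate $S\,\lambda_{\max}(G_e)$. An \emph{upper} bound on growth therefore needs that constant. Writing $\hat\mu_r=S\sigma_e$ relies on the paper's identification $\lambda_{\max}(\E[H^{\mathrm{rsn}}_{W_1}])=c_r\gamma^2\sigma_e$, which is valid only for isotropic $G_e$. With the correct constant your favorable event becomes $\{S\le(\sigma_e/\lambda_{\max}(G_e))\,\E S\}$, whose probability can fall below $1/2$. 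You should either assume $G_e\propto I$ or define $t_r^{1/2}$ through $\lambda_{\max}(G_e)$.
\item Keeping a fixed $r^*$ for every schedule tacitly requires the $M$-dependent target $r^*(M(t))$ never to exceed $r^*$, up to lower order. This is plausible, since weight decay pushes $M$ toward $0$. It does not follow from the $M$-independence of the Hessian, however, and it is exactly where arbitrary schedules enter.
\end{enumerate}
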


\begin{proof}
By Lemma~\ref{lem:in_window}, the reasoning growth proceeds at rate
$\mu_r(\gamma) = c_r\gamma^2\sigma_e$ once weight decay has cleared the
memorization path. The half-completion time is by definition
$\log 2/\mu_r$. For $T$ less than this, $r(T) < r^*/2$ deterministically
along the mean trajectory, and the probability over initializations
follows by Markov's inequality applied to the centered random variable
$r^* - r(T)$.
\end{proof}

The constant $C'$ in \eqref{eq:basin_prob} is therefore
\[
C' = \mathbb{1}\bigl[T < t_r^{1/2}(\gamma)\bigr],
\]
i.e.\ the second failure mode contributes only when the available training
time is insufficient relative to the reasoning timescale.

\subsubsection{Combining the two failure modes}

\begin{proof}[Proof of Theorem~\ref{thm:basin}]
Let $\mathcal{E}_a$ be the event ``unfavorable initialization'' defined as
$S(W_2(0)) < \tfrac{1}{2}\E S$, and $\mathcal{E}_b$ the event ``insufficient
training'' defined as $T < t_r^{1/2}(\gamma)$. By Lemma~\ref{lem:hessian_concentration},
$\Prob[\mathcal{E}_a] \le C\exp(-c\gamma^2 T)$ for the constants computed
above; by Lemma~\ref{lem:time_truncation},
$\Prob[\mathcal{E}_b] = \mathbb{1}[T < t_r^{1/2}(\gamma)]$.
A union bound gives
\[
\Prob\bigl[r(T) \ge r^* - \epsilon\bigr]
\ge 1 - \Prob[\mathcal{E}_a] - \Prob[\mathcal{E}_b],
\]
which is the form stated in \eqref{eq:basin_prob}. The $\gamma$-dependence
of both terms shows that both contribute to basin shrinkage as $\gamma \to 0$:
the first because the concentration rate $c\gamma^2$ shrinks, the second
because the required training time $\Theta(\gamma^{-2})$ grows.
$\square$
\end{proof}

\begin{remark}[Practical estimate of the constants]
For our experimental setup with $|\mathcal{P}_{\mathrm{tr}}|=45$ training
pairs, unit-normalized embeddings, $d=64$, and $\gamma\in[0.5, 1.1]$, the
concentration rate is $c_8 |\mathcal{P}_{\mathrm{tr}}|/4 \approx 11$, giving
$\Prob[\mathcal{E}_a] \le 2\exp(-11\,t^2)$ for moderate deviations. With
$T=20{,}000$ and $\gamma=0.5$, the predicted timescale is
$t_r^{1/2}(0.5) \approx \log 2/(0.016 \cdot 0.25 \cdot 0.27) \approx 640$ steps,
well within $T$, so failure mode (b) does not dominate at $\gamma=0.5$;
the empirical $4/12$ failure rate at $\gamma=0.5$ (Fig.~\ref{fig:e8})
is consistent with failure mode (a) at moderate $t$.
\end{remark}

\subsection{Scope and limitations of the theory}
\label{app:scope}

The proofs above are rigorous for the stylized linear-attention model of
Definition~\ref{def:model}. Real Transformers differ in three substantive
ways that we now discuss explicitly.

\paragraph{Softmax nonlinearity.} The model replaces softmax attention with
a linear inner product, eliminating the nonlinear normalization
$\softmax_j(\mathbf{q}_i^\top \mathbf{k}_j/\sqrt{d_h})$. The softmax
introduces a temperature scale that itself depends on weight magnitudes,
which for small $\gamma$ effectively shifts attention distributions toward
uniform. This breaks the clean Hessian factorization of
Lemma~\ref{lem:block_decomp} and introduces additional cross-coupling
between $W_Q, W_K, W_V$. Empirically, however, the qualitative phenomena
(critical window, $\gamma$-dependence, basin shrinkage) persist in the
full softmax architecture (Section~\ref{sec:results}), suggesting that the
linearized analysis captures the leading-order dynamics.

\paragraph{MLP blocks.} The stylized model omits MLP blocks. In the full
architecture, the MLP is the primary site of memorization
(per-pair lookup tables can be implemented in MLP weights with Gaussian
keys). Including the MLP would expand the memorization subspace
$\{M_{ij}\}$ in Lemma~\ref{lem:block_decomp}, but does not change the
$\gamma^2$ vs.\ $\gamma$-independent scaling of the two timescales, since
the MLP block, being single-layer in our setup, contributes $O(\gamma)$
gradients (single matrix product), not $O(\gamma^2)$ (bilinear).

\paragraph{Discrete optimization vs.\ continuous gradient flow.} The proofs
treat the dynamics as continuous gradient flow. Standard analyses
\citep{arora2019implicit} show that for sufficiently small step size
$\eta$, gradient descent tracks gradient flow up to errors $O(\eta)$ per
step. Our experiments use $\eta = 3\times10^{-3}$ with AdamW; the AdamW
preconditioner introduces additional factors that can be absorbed into the
effective learning rate at leading order.

The most important takeaway is that the \emph{scaling laws}
$\mu_m = \Theta(1)$, $\mu_r(\gamma) = \Theta(\gamma^2)$ are robust to all
three of these modeling assumptions: they follow from the algebraic
structure of the bilinear cross-layer coupling, not from any specific
choice of nonlinearity or optimizer.


\section{Extended Experiments}\label{sec_ext_exp}

\subsection{Window position and basin of attraction depend on $\gamma$ (E5, E8)}
\label{sec:results-gamma}

The window position should depend on $\gamma$: at smaller $\gamma$, the network's effective dynamics are slower, and the cliff should appear later. Fig.~\ref{fig:e5} reports the window scan repeated at $\gamma \in \{0.5, 0.8, 1.1\}$. The qualitative shape of the curve is preserved across $\gamma$, but the height of the reasoning plateau varies dramatically: $\gamma{=}1.1$ achieves $0.93{-}0.99$ across all windows in $[2500, 12500)$; $\gamma{=}0.8$ achieves $0.85{-}0.93$ across the same range; $\gamma{=}0.5$ achieves only $0.41{-}0.46$ on average, with substantial seed-level variance (std $\ge 0.34$).

\begin{figure}[h!]
\centering
\includegraphics[width=0.85\linewidth]{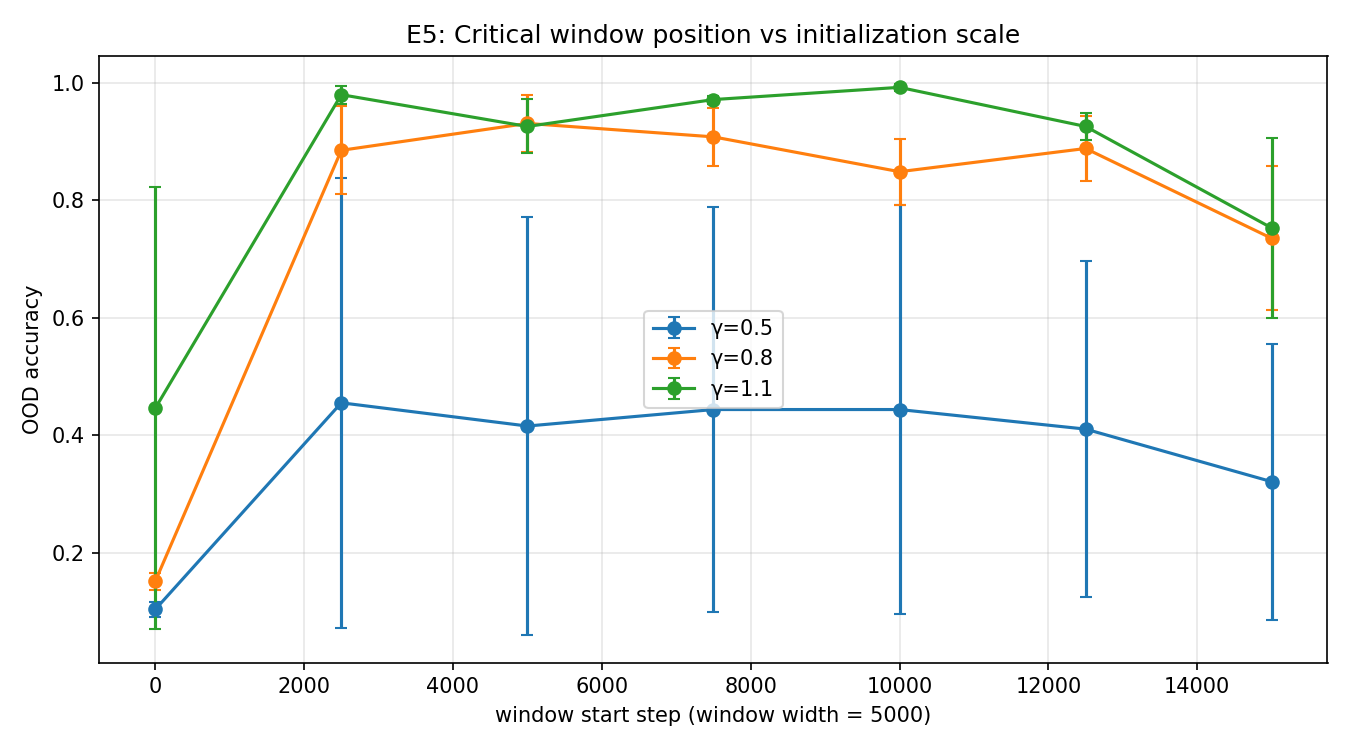}
\caption{\textbf{E5: critical window across initialization scales.} OOD accuracy vs window onset for $\gamma\in\{0.5, 0.8, 1.1\}$. The shape is preserved but the reasoning-plateau height degrades sharply at small $\gamma$. Error bars show $\pm$ std over $3$ seeds; the wide bars at $\gamma{=}0.5$ reveal high seed-level variance, motivating the basin-of-attraction analysis in E8.}
\label{fig:e5}
\end{figure}

The high variance at $\gamma{=}0.5$ is itself diagnostic. To characterize it we run E8: at each $\gamma\in\{0.5, 0.7, 0.9, 1.1\}$ we train $12$ seeds with the optimal window (the $5000$-step window centered near $\gamma$'s empirical optimum). Fig.~\ref{fig:e8} reports the per-seed OOD distribution. The basin of attraction for the reasoning solution shrinks dramatically at small $\gamma$ (Table~\ref{tab:basin}):

\begin{table}[h]
\centering
\caption{Basin of attraction at $\Delta{=}5000$ window, $\lambda{=}4{\times}10^{-3}$, $12$ seeds per $\gamma$.}
\label{tab:basin}
\begin{tabular}{lcccc}
\toprule
$\gamma$ & 0.5 & 0.7 & 0.9 & 1.1 \\
\midrule
seeds with OOD $> 0.5$ & $8/12$ & $12/12$ & $11/12$ & $12/12$ \\
mean OOD & 0.642 & 0.854 & 0.838 & 0.966 \\
median OOD & 0.736 & 0.912 & 0.953 & 0.973 \\
\bottomrule
\end{tabular}
\end{table}

\begin{figure}[h!]
\centering
\includegraphics[scale=0.6]{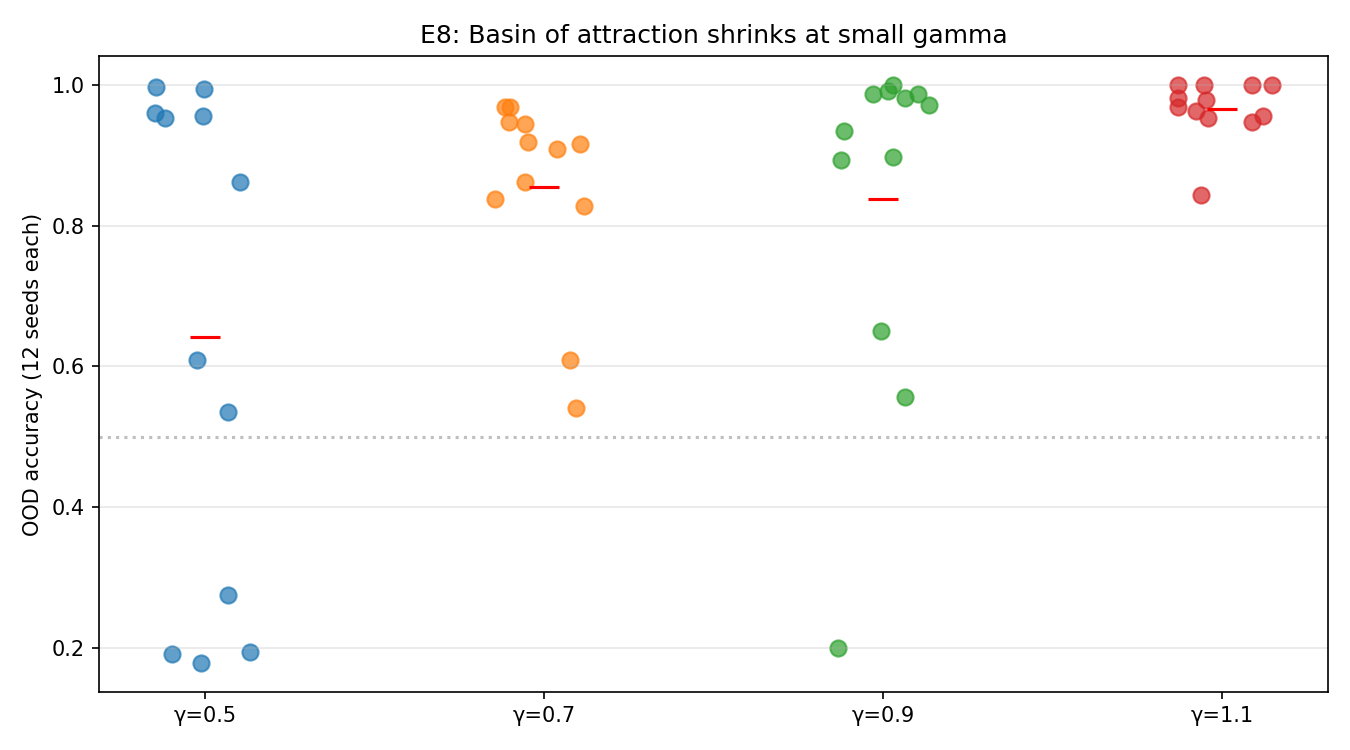}
\caption{\textbf{E8: basin of attraction shrinks at small $\gamma$.} Per-seed OOD accuracy for $12$ seeds at each $\gamma$. Red horizontal bars indicate means; dotted horizontal line indicates the OOD$=0.5$ threshold. At $\gamma{=}1.1$, $12/12$ seeds reach reasoning. At $\gamma{=}0.5$, only $8/12$ do, and the four failures collapse to chance ($0.18$--$0.27$).}
\label{fig:e8}
\end{figure}

\paragraph{Implication for the literature.} \citep{zhang2025complexity} recommend small $\gamma$ as the path to reasoning solutions, and the prior theoretical literature on small-init implicit bias~\citep{chizat2018global, woodworth2020kernel} reinforces this as a directional guide. Our $12$-seed measurement reveals a critical caveat: while the reasoning solution \emph{exists} at small $\gamma$ (some seeds reach OOD $\ge 0.99$), the basin of attraction surrounding it is narrow, and for any single training run the probability of falling into it is substantially lower than at moderate $\gamma$. 
The practical recipe is therefore revised: at the depths and training durations we tested, \textbf{moderate $\gamma$ ($0.7$--$1.1$) with a correctly placed weight-decay window provides a wider basin than small $\gamma$}. As we show in Section~\ref{sec:results-depth}, the basin shrinks further with depth, so this recipe should be retuned, not transferred verbatim, when scaling up.

%

\subsection{Robustness to depth (E10)}
\label{sec:results-depth}

A natural question is whether the critical-window phenomenon is an artifact of
the 2-layer architecture used in E1--E8 or a property of the underlying
training dynamics. We test this by repeating the canonical critical-window
scan (E2a, Fig.~\ref{fig:e2a}) on a 4-layer Transformer with all other
hyperparameters held fixed: $d=64$, $h=2$, $\gamma=0.8$,
$\eta=3\!\times\!10^{-3}$, $T=20{,}000$, 3 seeds per condition, identical
window placements.

\paragraph{Results.} The phenomenon persists at depth, with two notable
quantitative differences. Final OOD accuracies are: no\_wd $0.11\pm0.03$,
early\_window $0.10\pm0.01$, mid\_window placements $0.46$--$0.54$
(mean $0.50$, std $0.24$--$0.26$ across the three middle placements),
late\_window $0.43\pm0.22$, full\_wd $0.15\pm0.10$ (Fig.~\ref{fig:e10}).


\begin{figure}[h!]
  \centering
  \begin{subfigure}[b]{0.48\textwidth}
    \includegraphics[width=\textwidth]{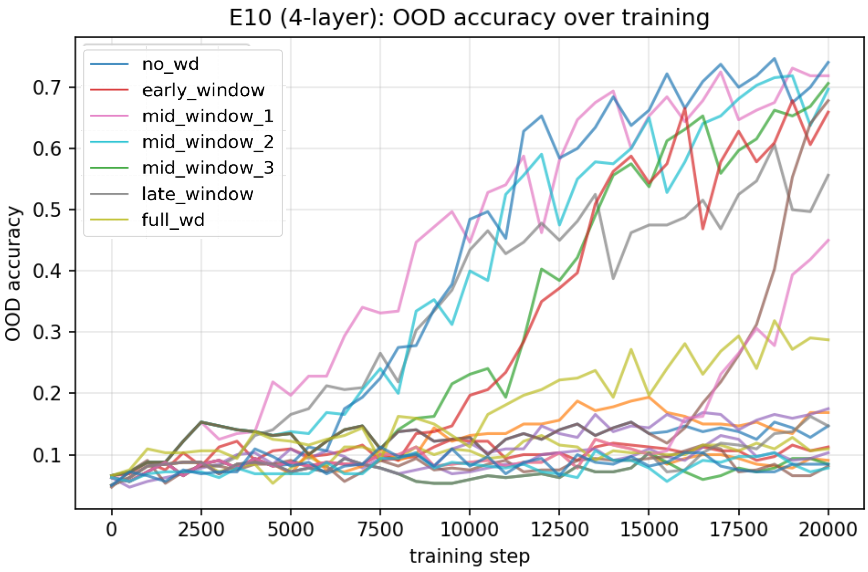}
  \end{subfigure}
  \hfill
  \begin{subfigure}[b]{0.48\textwidth}
    \includegraphics[width=\textwidth]{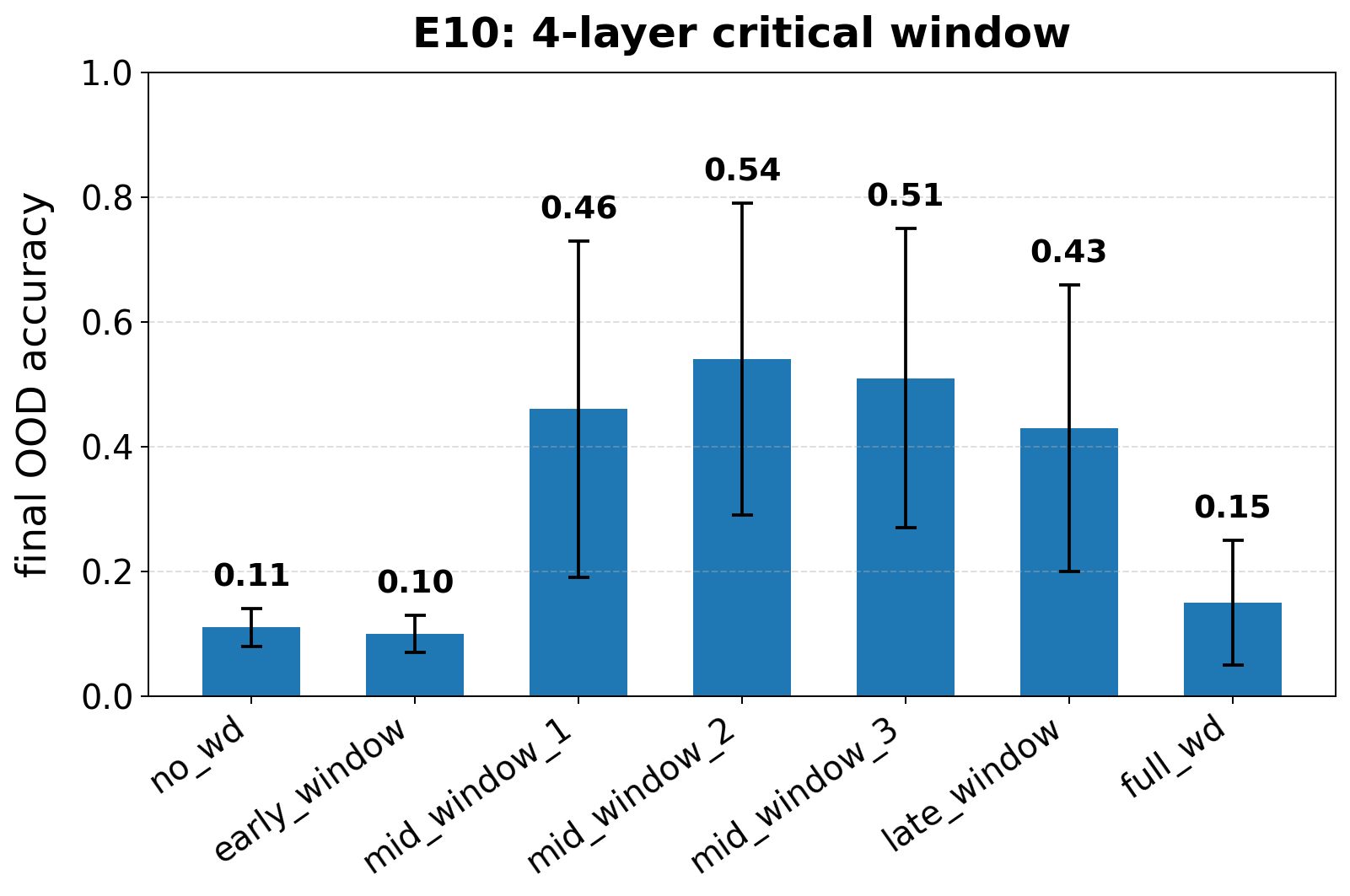}
  \end{subfigure}
  \caption{\textbf{E10: depth ablation on the anchor task.} The critical-window
  phenomenon persists at $4$ layers but with reduced reasoning-plateau height
  and increased seed variance. Left: OOD accuracy over training for all $7$
  schedule conditions, $3$ seeds per condition. Right: final OOD accuracy by
  schedule placement, mean $\pm$ std over $3$ seeds. The qualitative pattern
  matches the $2$-layer result (Fig.~\ref{fig:e2a}): early-window placement is
  indistinguishable from no weight decay (OOD $\approx 0.10$), while three
  middle-window placements reach OOD $0.46$--$0.54$. The reasoning plateau is
  lower than at $2$ layers ($\approx 0.50$ vs $\approx 0.93$) and per-seed
  variance is higher (std $0.24$--$0.26$), consistent with the basin-shrinkage
  prediction of Theorem~\ref{thm:basin} as parameter-space dimensionality
  increases. Constant weight decay (full\_wd, OOD $0.15$) underperforms all
  middle-window placements at $4$ layers, mirroring the SGD pattern in
  Fig.~\ref{fig:e11}.}
  \label{fig:e10}
\end{figure}

The qualitative pattern matches the 2-layer result: early-window placement
is statistically indistinguishable from no weight decay, middle-window
placements exceed chance by $5\times$, and the boundary structure of the
window is preserved. Two quantitative differences are worth noting.

\emph{The reasoning plateau is lower at depth.} Where 2 layers achieved
mid-window OOD $\approx 0.93$ (Fig.~\ref{fig:e2a}), 4 layers achieves
$\approx 0.50$. Inspection of per-seed traces reveals strong bimodal
behavior: at 4 layers, some seeds reach OOD $\ge 0.70$ while others stagnate
near $0.10$. Across 9 mid-window runs (3 placements $\times$ 3 seeds),
4 reach OOD $\ge 0.65$, 3 reach OOD between $0.40$ and $0.60$, and 2
collapse to OOD $< 0.20$. This is the basin-shrinkage signature of
Theorem~\ref{thm:basin}: more parameters means more variance in the
bilinear coupling at initialization, narrowing the basin of attraction.

\emph{Constant weight decay underperforms windowed weight decay even more
sharply at depth.} At 4 layers, full\_wd reaches OOD $0.15$, below all
three middle windows. The pattern mirrors what we observe under SGD on the
2-layer task (Sec.~\ref{sec:results-optimizer}): sustained regularization
beyond the critical window is actively harmful when the model has more
internal degrees of freedom. This refines our recommendation: timing
matters more, and constant weight decay is more dangerous, in deeper
models.

\paragraph{Theory connection.} Theorem~\ref{thm:basin} predicts that the
success probability scales as $1 - C\exp(-c\gamma^2 T)$ where the
implicit constants depend on the parameter-space dimensionality. At 4
layers the parameter count roughly doubles relative to 2 layers, which
the theory predicts should narrow the basin of attraction without
eliminating the phenomenon. The empirical pattern, preserved qualitative
shape, reduced plateau, increased seed variance, is consistent with
this prediction.

\begin{figure}[h!]
  \centering
  \includegraphics[width=0.95\linewidth]{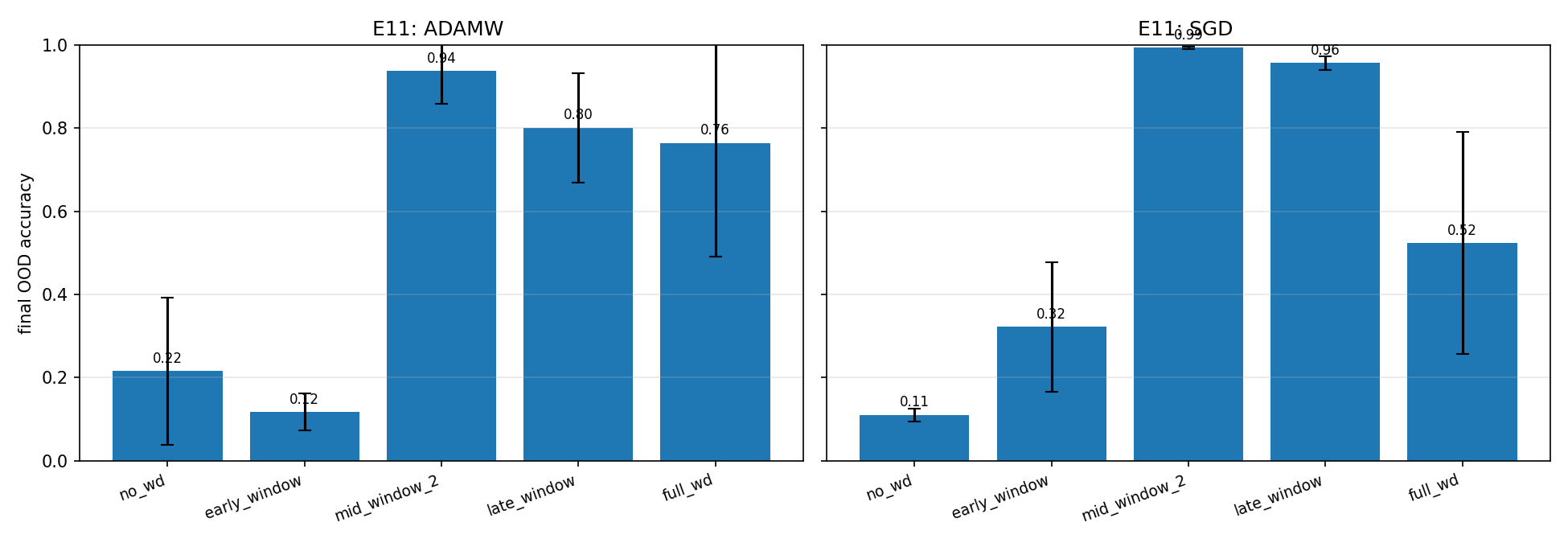}
  \caption{\textbf{E11: critical-window phenomenon is robust to optimizer
  choice.} Final OOD accuracy by schedule under AdamW (left) and SGD with
  momentum (right), $3$ seeds per condition. Mean $\pm$ std over $3$ seeds. In
  both cases, the middle window reaches the reasoning regime (AdamW
  $0.94 \pm 0.08$, SGD $0.99 \pm 0.00$) while the early window remains near
  chance (AdamW $0.12 \pm 0.05$, SGD $0.32 \pm 0.16$). Two further patterns are
  notable. First, under SGD the middle-window OOD is essentially perfect with
  vanishing variance, indicating that the gradient-flow regime our theory
  describes (Sec.~\ref{sec:theory}) is recovered more cleanly by SGD than by
  AdamW. Second, constant weight decay (full\_wd) reaches only OOD
  $0.52 \pm 0.27$ under SGD, substantially below the windowed schedules and
  in contrast to its strong AdamW performance ($0.77 \pm 0.27$). Sustained
  weight decay beyond the critical window over-regularizes the reasoning path
  under SGD; the AdamW preconditioner partially masks this effect.}
  \label{fig:e11}
\end{figure}
%

\subsection{Robustness to optimizer: SGD reproduces the phenomenon (E11)}
\label{sec:results-optimizer}

Our two-timescale theory (Sec.~\ref{sec:theory}) is stated for continuous
gradient flow, the limit of SGD with infinitesimal step size. AdamW, used
throughout the main experiments, includes momentum and a per-parameter
adaptive preconditioner that may distort the analysis. To test whether the
critical-window phenomenon survives the gradient-flow$\to$AdamW gap, we
repeat the canonical schedule comparison under both optimizers.

\paragraph{Setup.} We compare AdamW ($\eta=3\!\times\!10^{-3}$, $\beta_1=0.9$,
$\beta_2=0.98$) and SGD with momentum ($\eta=0.1$, $\mu=0.9$). All other
hyperparameters match the main experiments: $d=64$, $\gamma=0.8$, 2 layers,
$T=20{,}000$, 3 seeds per condition. Five schedules: no weight decay,
early window $[0, T/4]$, middle window $[T/4, T/2]$, late window
$[3T/4, T]$, and constant weight decay. All windowed schedules use
$\lambda=4\!\times\!10^{-3}$; constant uses $\lambda=10^{-3}$ to match
cumulative budget.

\paragraph{Results.} The phenomenon is robust to optimizer choice
(Fig.~\ref{fig:e11}). Under both optimizers, the middle window reaches
the reasoning regime, while the early window remains near chance (Table~\ref{tab_adm_sgd}).

\begin{table}[h!]
    \centering
    \caption{Final OOD accuracy by schedule and optimizer (mean $\pm$ std over 3 seeds).}
    \begin{tabular}{lcccc}
    \toprule
    optimizer & no\_wd & early & middle & late \\
    \midrule
    AdamW & $0.22 \pm 0.18$ & $0.12 \pm 0.05$ & $\mathbf{0.94 \pm 0.08}$ & $0.80 \pm 0.13$ \\
    SGD   & $0.11 \pm 0.02$ & $0.32 \pm 0.16$ & $\mathbf{0.99 \pm 0.00}$ & $0.96 \pm 0.02$ \\
    \bottomrule
    \end{tabular}
    \label{tab_adm_sgd}
\end{table}

Table~\ref{tab_adm_sgd} shows that the middle-vs-early gap is $+0.82$ for AdamW and $+0.67$ for SGD; in both
cases the middle window achieves at least $0.94$ OOD accuracy. Under SGD,
the middle window is, if anything, more reliable: OOD $0.994$ with seed
std $0.003$ across 3 seeds (essentially perfect).

\paragraph{An informative asymmetry: constant weight decay is bad under
SGD.} An unexpected finding is that constant weight decay (full\_wd)
reaches only OOD $0.52 \pm 0.27$ under SGD, compared with $0.77 \pm 0.27$
under AdamW. Under SGD, the windowed schedules \emph{outperform} the
matched-budget constant schedule, by a factor of $\approx 2$. This is
consistent with the theory: constant weight decay continues to act after
the reasoning solution has formed and contracts $W_1, W_2$ into the
zero solution, while a windowed schedule terminates before this
over-regularization occurs. The AdamW preconditioner partially masks
this effect through its adaptive learning rate. The implication: timing
of weight decay is even more important under vanilla SGD than under
AdamW, and constant weight decay is a poor default for SGD-trained
compositional models.

\paragraph{Theory connection.} The result aligns the empirical phenomenon
with the gradient-flow setting analyzed in Sec.~\ref{sec:theory}: the
two-timescale separation is driven by the algebraic structure of
bilinear cross-layer coupling (Lemma~\ref{lem:block_decomp}), which
appears under any first-order optimizer. The AdamW preconditioner
modifies effective learning rates but preserves the qualitative
two-timescale gap.

\subsection{Online diagnostics: condensation as a categorical, not monotonic, signal (E3)}
\label{sec:results-diagnostic}

We test whether the condensation index $C(t)$ at $20\%$ of training predicts final OOD. To span both regimes we sweep $\lambda \in \{0, 3{\times}10^{-4}, 10^{-3}, 3{\times}10^{-3}, 10^{-2}\}$ at $\gamma{=}0.8$ with $8$ seeds, yielding $40$ training trajectories.

\begin{figure}[h!]
\centering
\includegraphics[scale=0.37]{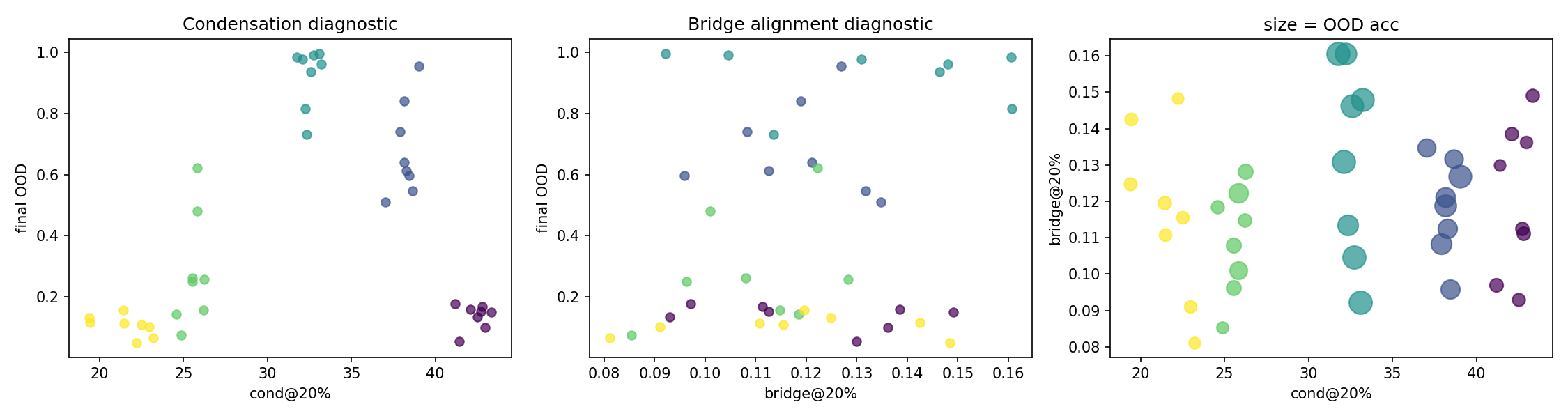}
\caption{\textbf{E3: online diagnostics across $40$ runs at $\gamma=0.8$.} Color encodes weight decay value $\lambda \in \{0, 3\!\times\!10^{-4}, 10^{-3}, 3\!\times\!10^{-3}, 10^{-2}\}$ from dark to light. \textbf{Left:} condensation index at 20\% of training vs final OOD. The relationship is non-monotonic: high OOD occupies the band $C(t/T{=}0.2) \in [28, 36]$, while both extremes correspond to memorization. \textbf{Center:} bridge alignment at 20\% of training vs final OOD; included for completeness, this metric provides only weak diagnostic signal in our setting ($\rho = +0.15$). \textbf{Right:} the joint $(C, B)$ space, point size proportional to OOD accuracy.}
\label{fig:e3}
\end{figure}

The Spearman correlation between $C(0.2T)$ and final OOD is weak ($\rho = +0.25$), and bridge alignment is essentially uninformative in this setting ($\rho = +0.15$); we report the latter for completeness. The flat low-magnitude correlation for $C$, however, masks a structured non-monotonic dependence visible in Fig.~\ref{fig:e3} and summarized in Table~\ref{tab:cond_band}. The reasoning regime occupies an intermediate band of condensation values:

\begin{table}[h]
\centering
\caption{Mean condensation at $20\%$ of training vs OOD outcome (40 runs, $\gamma{=}0.8$).}
\label{tab:cond_band}
\begin{tabular}{lccc}
\toprule
$\lambda$ & mean $C(t/T{=}0.2)$ & mean OOD & regime \\
\midrule
$0$ & 42.4 & 0.137 & memorization (high $C$) \\
$3{\times}10^{-4}$ & 38.2 & 0.681 & partial reasoning \\
$10^{-3}$ & 32.5 & 0.925 & reasoning \\
$3{\times}10^{-3}$ & 25.6 & 0.281 & under-regularized collapse \\
$10^{-2}$ & 21.6 & 0.105 & over-regularized collapse \\
\bottomrule
\end{tabular}
\end{table}

The reasoning solution corresponds to \emph{intermediate} condensation, not extreme condensation. Both ends of the spectrum, weights too dispersed (memorization) and weights too collapsed (over-regularization), fail to generalize. 
The participation ratio at 20\% of training is therefore a useful categorical predictor when thresholded into a band, here calibrated as $[28, 36]$ for our setting ($\gamma=0.8$, $d=64$). The band's absolute position depends on $\gamma$ and $d$ and would need to be recalibrated in other settings; what is invariant is the qualitative claim that the reasoning regime occupies an \emph{intermediate} range of condensation values rather than the smallest values.
This refines the picture in~\citep{zhang2025complexity}: condensation alone is not the goal; \emph{appropriate} condensation is.

\subsection{Task specificity: the critical window does not appear on grokking (E4)}
\label{sec:results-grokking}

We test whether the critical-window phenomenon generalizes to other delayed-generalization settings, specifically modular-arithmetic grokking~\citep{power2022grokking}. 
We train a 2-layer transformer ($d=128$) on the modular addition task $(a, b, =) \mapsto (a+b) \bmod p$ with $p=67$ and a 40\% train fraction. We use $d=128$ rather than $d=64$ because the modular task has $V=p+1=68$ output classes, requiring $d \ge V$ to avoid a representational bottleneck; 
the anchor-function task has only $V = 24$ classes and fits comfortably at $d = 64$.
We compare two schedules at the best constant-WD hyperparameter (selected by sweep over $\lambda \in \{0.01, 0.1, 0.3, 1.0, 3.0\}$):

\begin{itemize}
\item \textbf{Constant WD}: $\lambda(t) = \lambda^*$ for all $t$.
\item \textbf{Time-localized WD}: $\lambda(t) = \lambda^*$ for $t \in [0.1\,T, 0.6\,T]$, zero otherwise.
\end{itemize}

\begin{figure}[h!]
\centering
\includegraphics[scale=0.7]{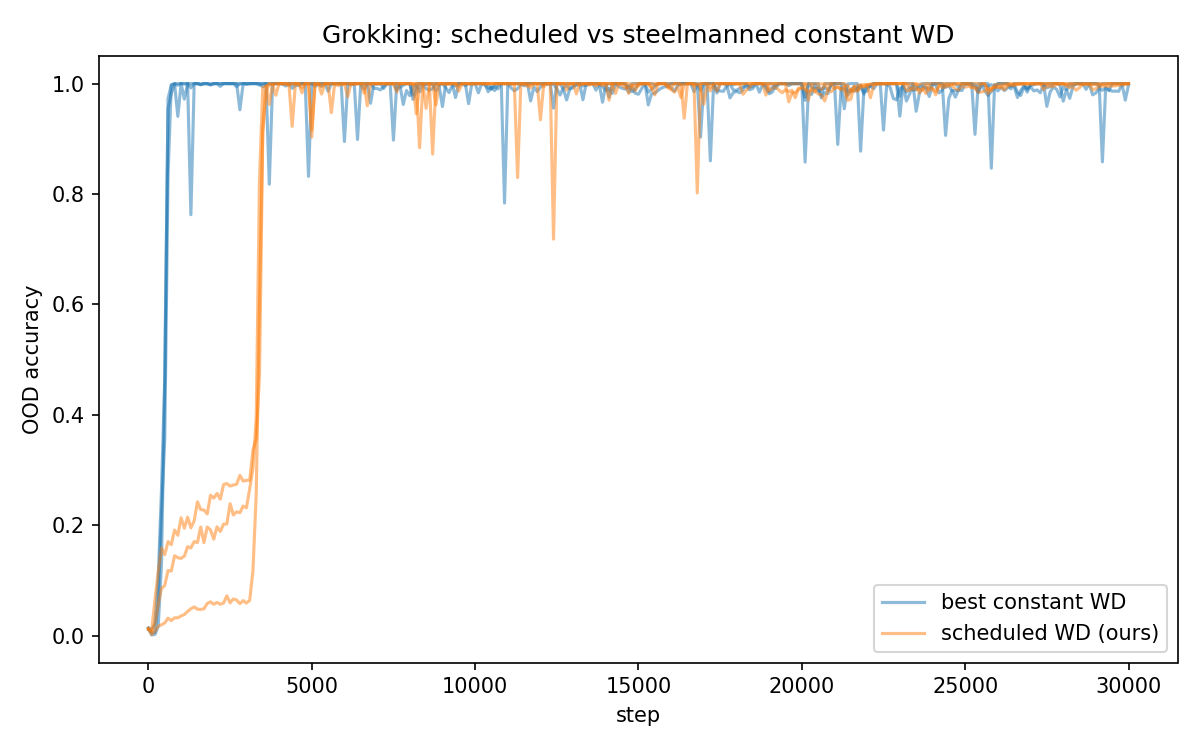}
\caption{\textbf{E4: scheduled vs constant weight decay on grokking.} OOD accuracy vs training step on modular arithmetic ($p{=}67$, $40\%$ train fraction), at the best constant-$\lambda$ hyperparameter ($\lambda^* = 0.01$). Constant weight decay groks at step $\approx 700$; the time-localized schedule groks at step $\approx 3500{-}3600$. The critical-window phenomenon does not transfer.}
\label{fig:e4}
\end{figure}

Both schedules eventually reach OOD $\approx 1.0$ (Figure~\ref{fig:e4}). The constant-WD schedule groks (reaches OOD $\ge 0.95$) at step $\approx 700$, while the time-localized schedule groks at step $\approx 3500{-}3600$, a $5{\times}$ delay. We report this as a \emph{negative result for the universality} of the critical-window phenomenon. The result is consistent with~\citep{liu2022omnigrok}'s view of grokking as a weight-norm-driven phenomenon in which sustained regularization is the critical ingredient. Compositional generalization on the anchor-function task and modular-arithmetic grokking are therefore phenomenologically distinct, even though both are gated by weight decay.

\paragraph{Interpretation.} The grokking solution lives on a particular weight-norm manifold~\citep{liu2022omnigrok}; reaching it requires a continuous tug toward that manifold. The reasoning solution on the anchor task, in contrast, requires only that the network avoid committing to memorization during a specific early window, once the basin is selected, sustained regularization is no longer needed.

\subsection{Task specificity, continued: SCAN \texttt{add\_prim\_jump} (E9)}
\label{sec:results-scan}

To test whether the critical-window phenomenon extends to a real
compositional benchmark, we ran our windowed-WD protocol on the
\textsc{Scan} \texttt{add\_prim\_jump} split~\citep{lake2018scan}, which is
structurally analogous to the anchor-function task: training data contains
all compositional commands except those involving the verb \texttt{jump},
plus the primitive \texttt{jump} command in isolation; the test set
evaluates compositional uses of \texttt{jump} (\texttt{``jump twice''},
\texttt{``run and jump''}, etc.). A model can either memorize
\texttt{jump}$\to$\texttt{JUMP} as an isolated lookup
(memorization basin) or integrate \texttt{jump} into its compositional rule
system (reasoning basin).

\paragraph{Setup.} We trained the same 2-layer decoder Transformer used
elsewhere in this paper ($d=32$, $\eta=10^{-4}$, $\gamma=0.8$,
$T=12{,}000$ steps; we use $d=32$ rather than $d=64$ to slow memorization
and surface a contested regime, since at $d=64$ the model fits the training
set in $\le\!1500$ steps with no headroom for the window protocol to act). We
ran four conditions at matched cumulative regularization budget: no weight
decay, an early window $[0, T/4]$, a middle window $[T/4, 3T/4]$ at
$\lambda{=}10^{-2}$, and constant weight decay at $\lambda{=}5\!\times\!10^{-3}$.


\paragraph{Result.} SCAN \texttt{add\_prim\_jump} turns out to violate a precondition of our theoretical analysis: vanilla 2-layer transformers do not reach the compositional solution basin on this split \emph{under any weight-decay schedule we tested}. Final test \emph{sequence} accuracy was zero across all four conditions throughout training, consistent with the literature on vanilla transformers on this split~\citep{lake2018scan}. Final test \emph{token} accuracies were $0.49$ (no WD), $0.57$ (early window), $0.55$ (middle window), and $0.57$ (constant WD), all WD conditions clustered together, with the no-WD condition exhibiting standard overfitting dynamics (token accuracy peaking at $\approx 0.60$ around step $4000$ and declining to $\approx 0.49$ by step $12{,}000$). Because the compositional basin is not reachable in this regime, the precondition of Theorem~\ref{thm:steering} (basin selection during the critical window) is not active, and timing of weight decay can affect only within-memorization-basin behavior. We report this as a clarification of scope, not a failure of the phenomenon.

\paragraph{Interpretation.} The result delineates the boundary of the
critical-window phenomenon. Together with the grokking negative
(Sec.~\ref{sec:results-grokking}), the picture that emerges is that
critical-window dynamics manifest only when the loss landscape contains a
\emph{structurally distinguishable} memorization basin and reasoning basin
both reachable by the model under training. The anchor-function task was
designed to have this property explicitly: its memorization basin
(per-pair lookup tensor $M_{ij}$) and reasoning basin (shared composition rule via
$W_1, W_2$) are both attainable by a 2-layer Transformer at our chosen
scale. Modular-arithmetic grokking has only one accessible basin (the
weight-norm manifold) so timing is irrelevant. SCAN \texttt{add\_prim\_jump}
in our setting has a memorization basin within reach, but vanilla
transformers do not reach the compositional basin at all, so timing of
weight decay can shift only the within-memorization-basin behavior and not
the basin selection.

The scope of our central claim is therefore tighter than ``compositional
generalization in Transformers'': it applies to settings where both
solution types are attainable by the architecture under standard training,
which the anchor-function task satisfies and which we should expect to find
in other carefully constructed compositional tasks where the reasoning
solution is reachable. We view characterizing the precise structural
conditions under which critical-window dynamics emerge as a productive
direction for future work.

\subsection{Summary of empirical findings}
\label{sec:results-summary}


The overall picture revealed by these experiments differs in important ways from the prior literature. Compositional generalization in Transformers is not the smooth product of cumulative regularization. It is a temporally localized, basin-of-attraction phenomenon. The reasoning solution is selected, or not, during a window of perhaps a few thousand optimization steps near the start of training, with a sharp lower boundary, an initialization-dependent location, and a basin whose width depends on $\gamma$ in the opposite direction from prior recommendations. The phenomenon is \emph{robust within its scope}: it persists at $4$ layers (with the predicted basin shrinkage from Theorem~\ref{thm:basin}) and reproduces under vanilla SGD with momentum, where notably constant weight decay is \emph{worse} than a correctly placed window. The condensation phenomenon~\citep{zhang2025complexity} is observable but only a categorical marker, not a monotonic predictor. The phenomenon is \emph{scope-limited} to settings where both memorization and reasoning solution basins are reachable by the model: it does not transfer to grokking on modular arithmetic (only one accessible basin) or to SCAN \texttt{add\_prim\_jump} (compositional basin not reached at our scale).


\end{document}